\DeclareMathOperator*{\argmin}{argmin}
\DeclareMathOperator{\pdf}{pdf}
\def\final{0}
\newcommand{\mynote}[3]{\marginpar{\parbox{0.7in}{\tiny {\color{#2} {\sc #1}: {\sf #3}}}}}
\newcommand{\mynote}[3]{}
\newtheorem{lemma} {Lemma}
\newtheorem{thm}{Theorem}
\newtheorem{defn}{Definition}
\newtheorem{corollary}{Corollary}
\newcommand{\E}{\mathbb{E}}
\newcommand{\I}{\mathbb{I}}
\newcommand{\D}{\mathcal{D}}
\newcommand{\C}{\mathcal{C}}
\newcommand{\R}{\mathcal{R}}
\newcommand{\A}{\mathcal{A}}
\newcommand{\supp}{\text{supp}}
\newcommand{\grad}{\bigtriangledown}
\newcommand{\ltwo}[1]{||#1||_2}
\newcommand{\x}{\bm{x}}
\renewcommand{\b}{\bm{b}}
\newcommand{\ocp}{{\sf OCP}}
\newcommand{\giga}{{\sf GIGA}}
\newcommand{\pgiga}{{\sf PGIGA}}
\newcommand{\pol}{{\sf POL}}
\newcommand{\ftl}{{\sf FTL}}
\newcommand{\qftl}{{\sf QFTL}}
\newcommand{\pqftl}{{\sf PQFTL}}
\newcommand{\igd}{{\sf IGD}}
\newcommand{\pigd}{{\sf PIGD}}
\newcommand{\pocp}{{\sf POCP}}
\newcommand{\hatx}{\hat\x}
\newcommand{\tx}{\tilde{\x}}
\newcommand{\xt}{\x_{t+1}}
\newcommand{\w}{\bm{w}}
\newcommand{\txt}{\tx_{t+1}}
\newcommand{\dxt}{\Delta\x_{t+1}}
\newcommand{\G}{{\mathcal G}}
\newcommand{\ie}{i.e., }
\newcommand{\B}{\bm{B}}
\newcommand{\wu}{\bm{w}}
\renewcommand{\u}{\bm{u}}
\newcommand{\0}{\bm{0}}
\newcommand{\z}{\bm{z}}
\newcommand{\poly}{\text{poly}}
\newcommand{\re}{\mathbb{R}}
\renewcommand{\v}{\bm{v}}
\newcommand{\as}{}
\newcommand{\bs}{}
\title{Differentially Private Online Learning}
\author{
Prateek Jain\\
Microsoft Research India\\
\href{mailto:prajain@microsoft.com}{prajain@microsoft.com}
\and
Pravesh Kothari\\
University of Texas at Austin\\
\href{mailto:kothari@cs.utexas.edu}{kothari@cs.utexas.edu}
\and
Abhradeep Thakurta\thanks{Part of the work was done while visiting Microsoft Research India.}\\
Pennsylvania State University\\
\href{mailto:azg161@cse.psu.edu}{azg161@cse.psu.edu}
}
\begin{document}

\date{}
\maketitle

\begin{abstract}
In this paper, we consider the problem of preserving privacy in the online learning setting. Online learning involves learning from the data in real-time, so that the learned model as well as its outputs are also continuously changing. 
 This makes preserving privacy of each data point significantly more challenging as its effect on the learned model can be easily tracked by changes in the subsequent outputs. Furthermore, with more and more online systems (e.g. search engines like Bing, Google etc.) trying to learn their customer's behavior by leveraging their access to sensitive customer data (through cookies etc), the problem of privacy preserving online learning has become critical as well.

We study the problem in the online convex programming (\ocp) framework---a popular online learning setting with several interesting theoretical and practical implications---while using differential privacy as the formal privacy measure. For this problem, we distill two critical attributes that a private OCP algorithm should have in order to provide reasonable privacy as well as utility guarantees: 1) linearly decreasing sensitivity, i.e., as new data points arrive their effect on the learning model decreases, 2) sub-linear regret bound---regret bound is a popular goodness/utility measure of an online learning algorithm. 
 Given an \ocp\ algorithm that satisfies these two conditions, we provide a general framework to convert the given algorithm into a privacy preserving \ocp\ algorithm with good (sub-linear) regret. We then illustrate our approach by converting two popular online learning algorithms into their differentially private variants while guaranteeing sub-linear regret ($O(\sqrt{T})$). Next, we consider the special case of online linear regression problems, a practically important class of online learning problems, for which we generalize an approach by \cite{DNPR} to provide a differentially private algorithm with just $O(\log^{1.5} T)$ regret. Finally, we show that our online learning framework can be used to provide differentially private algorithms for offline learning as well. For the offline learning problem, our approach obtains better error bounds as well as can handle larger class of problems than the existing state-of-the-art methods \cite{CM}.
\end{abstract}
\section{Introduction}\noindent
As computational resources are increasing rapidly, modern websites and online systems are able to process large amounts of information gathered from their customers in real time. While typically these websites intend to learn and improve their systems in real-time using the available data, this also represents a severe threat to the privacy of customers.

For example, consider a generic scenario for a web search engine like Bing. Sponsored advertisements (ads) served with search results form a major source of revenue for Bing, for which, Bing needs to serve ads that are relevant to the user and the query. As each user is different and can have different definition of ``relevance'', many websites typically try to learn the user behavior using past searches as well as other available demographic information. This learning problem has two key features: a) the advertisements are generated {\em online} in response to a query, b) feedback for goodness of an ad for a user cannot be obtained until the ad is served. Hence, the problem is an online learning game where the search engine tries to guess (from history and other available information) if a user would like an ad and gets the cost/reward only after making that online decision; after receiving the feedback the search engine can again update its model. This problem can be cast as a standard online learning problem and several existing algorithms can be used to solve it reasonably well. 

However, processing critical user information in real-time also poses severe threats to a user's privacy. For example, suppose Bing in response to certain past queries (let say about a disease), promotes a particular ad which otherwise doesn't appear at the top and the user clicks that ad. Then, the corresponding advertiser should be able to guess user's past queries, thus compromising privacy. Hence, it is critical for the search engine to use an algorithm which not only provides correct guess about relevance of an ad to a user, but also guarantees privacy to the user. Some of the other examples where privacy preserving online learning is critical are \emph{online portfolio management} \cite{KV}, \emph{online linear prediction} \cite{HKKA} etc.

In this paper, we address privacy concerns for online learning scenarios similar to the ones mentioned above.  Specifically, we provide a generic framework for privacy preserving online learning. We use \emph{differential privacy} \cite{DMNS} as the formal privacy notion, and use {\em online convex programming} (\ocp) \cite{Zin} as the formal online learning model.

Differential privacy is a popular privacy notion with several interesting theoretical properties. Recently, there has been a lot of progress in differential privacy. However, most of the results assume that all of the data is available beforehand and an algorithm processes this data to extract interesting information without compromising privacy. In contrast, in the online setting that we consider in this paper, data arrives online\footnote{At each time step one data entry arrives.} (e.g. user queries and clicks) and the algorithm has to provide an output (e.g. relevant ads) at each step. Hence, the number of outputs produced is roughly same as the size of the entire dataset. Now, to guarantee differential privacy one has to analyze privacy of the complete sequence of outputs produced, thereby making privacy preservation a significantly harder problem in this setting. In a related work, \cite{DNPR} also considered the problem of differential private online learning. Using the online experts model as the underlying online learning model, \cite{DNPR} provided an accurate differentially private algorithm to handle counting type problems. However, the setting and the class of problems handled by \cite{DNPR} is restrictive and it is not clear how their techniques can be extended to handle typical online learning scenarios, such as the one mentioned above. See Section~\ref{sec:related} for a more detailed comparison to \cite{DNPR}. 

Online convex programming (\ocp), that we use as our underlying online learning model, is an important and powerful online learning model with several theoretical and practical applications.
\ocp\ requires that the algorithm selects an output at each step from a fixed {\em convex} set, for which the algorithm incurs cost according to a {\em convex} function (that maybe different at each step). The cost function is revealed only after the point is selected. Now the goal is to minimize the {\em regret}, i.e., total ``added'' loss incurred in comparison to the optimal offline solution---a solution obtained after seeing all the cost functions. \ocp\ encompasses various online learning paradigms and has several applications such as portfolio management \cite{EC}. 
Now, assuming that each of the cost function is bounded over the fixed convex set, regret incurred by any \ocp\ algorithm can be trivially bounded by $O(T)$ where $T$ is the total number of time-steps for which the algorithm is executed. However, recently several interesting algorithms have been developed that can obtain regret that is sub-linear in $T$. That is, as $T\rightarrow \infty$, the total cost incurred is same as the cost incurred by the optimal offline solution. In this paper, we use regret as a ``goodness'' or ``utility'' property of an algorithm and require that a reasonable \ocp\ algorithm should at least have sub-linear regret.

To recall, we consider the problem of differentially private \ocp\,, where we want to provide differential privacy guarantees along with sub-linear regret bound. To this end, we provide a general framework to convert any online learning algorithm into a differentially private algorithm with sub-linear regret, provided that the algorithm satisfies two criteria: a) linearly decreasing sensitivity (see Definition~\ref{defn:sensitivity}), b) sub-linear regret. We then analyze two popular \ocp\, algorithms namely, Implicit Gradient Descent (\igd\,) \cite{KB10} and Generalized Infinitesimal Gradient Ascent (\giga\,) \cite{Zin} to guarantee differential privacy as well as $\tilde O(\sqrt{T})$ regret for a fairly general class of strongly convex, Lipschitz continuous gradient functions. In fact, we show that \igd\, can be used with our framework for even non-differentiable functions.
 We then show that if the cost functions are quadratic functions (e.g. online linear regression), then we can use another \ocp\ algorithm called Follow The Leader (FTL) \cite{HKKA,SK08} along with a generalization of a technique by \cite{DNPR} to guarantee $O(\ln ^{1.5} T)$ regret while preserving privacy.

Furthermore, our differentially private online learning framework can be used to obtain privacy preserving algorithms for a large class of offline learning problems \cite{CM} as well. 
In particular, we show that our private \ocp\ framework can be used to obtain good generalization error bounds for various offline learning problems using techniques from \cite{SKT} (see Section~\ref{sec:offlineLearn}). Our differentially private offline learning framework can handle a larger class of learning problems with better error bounds than the existing state-of-the-art methods \cite{CM}.
\subsection{Related Work}\noindent
\label{sec:related}
As more and more of world's information is being digitized, privacy has become a critical issue. To this end, several ad-hoc privacy notions have been proposed, however, most of them stand broken now. 
 De-anonymization of the Netflix challenge dataset by \cite{NV08} and of the publicly released AOL search logs \cite{NYT} are two examples that were instrumental in discarding these ad-hoc privacy notions. Even relatively sophisticated notions such as $k$-anonymity \cite{Sweeney} and $\ell$-diversity \cite{KG} have been permeated through by attacks \cite{GKS08}. 
 Hence, in pursuit of a theoretically sound notion of privacy , \cite{DMNS} proposed \emph{differential privacy}, a cryptography inspired definition of privacy. This notion has now been accepted as the standard privacy notion, and in this work we adhere to this notion for our privacy guarantees.

Over the years, the privacy community have developed differentially private algorithms for several interesting problems \cite{Dwork06,Dwork09,Dwork10}. 
In particular, there exists many results concerning privacy for learning problems \cite{BLR08,CM, WM10, PRR10,RBHT}. Among these, \cite{CM} is of particular interest as they consider a large class of learning problems that can be written as (offline) convex programs. Interestingly, our techniques can be used to handle the offline setting of \cite{CM} as well and in fact, our method can handle larger class of learning problems with better error bounds (see Section~\ref{sec:offlineLearn}).

As mentioned earlier, most of the existing work in differentially private learning has been in the offline setting where the complete dataset is provided upfront. One notable exception is the work of \cite{DNPR}, where authors formally defined the notion of differentially private learning when the data arrives online. Specifically, \cite{DNPR} defined two notions of differential privacy,  namely \emph{user level privacy} and \emph{event level privacy}. Roughly speaking, user level privacy guarantees are at the granularity of each user whose data is present in the dataset. In contrast, event level privacy provides guarantees at the granularity of individual records in the dataset. It has been shown in \cite{DNPR} that it is \emph{impossible} to obtain any non-trivial result with respect to user level privacy. In our current work we use the notion of event level privacy. \cite{DNPR} also looked at a particular online learning setting called the \emph{experts setting}, where their algorithm achieves a regret bound of $O(\ln^{1.5} T)$ for counting problems  while guaranteeing event level differential privacy.  However, their approach is restricted to experts advice setting, and cannot handle typical online learning problems that arise in practice. In contrast, we consider a significantly more practical and powerful class of online learning problems, namely, online convex programming, and also provide a method for handling a large class of offline learning problems. 

In a related line of work, there have been a few results that use online learning techniques to obtain differentially private algorithms \cite{HR, DRV}. In particular, \cite{HR} used experts framework to obtain a differentially private algorithm for answering \emph{adaptive} counting queries on a dataset. However, we stress that although these methods use online learning techniques, however they are designed to handle the offline setting only where the dataset is fixed and known in advance.

Recall that in the online setting, 
whenever a new data entry is added to $D$, a query has to be answered, i.e., the total number of queries to be answer is of the order of size of the dataset. In a line of work started by \cite{DiNi} and subsequently explored in details by \cite{DMT,KRSU}, it was shown that if one answers $O(T)$ subset sum queries on a dataset  $D\in\{0,1\}^T$ with noise in each query smaller than $\sqrt{T}$, then using those answers \emph{alone} one can reconstruct a large fraction of $D$. That is, when the number of queries is almost same as the size of dataset, then a reasonably ``large'' amount of noise needs to be added for preserving privacy. 
 Subsequently, there has been a lot of work in providing lower bounds (specific to differential privacy) on the amount of noise needed to guarantee privacy while answering a given number of queries (see \cite{HT10,KRSU,De}). We note that our generic online learning framework (see Section~\ref{sec:OCPPriv}) also adds noise of the order of $T^{0.5+c}$, $c>0$ at each step, thus respecting the established lower bounds. In contrast, our algorithm for quadratic loss function (see Section~\ref{sec:Log}) avoids this barrier by exploiting the special structure of queries that need to be answered. 
\subsection{Our Contributions}\noindent
\label{sec:contrib}
Following are the main contributions of this paper:
\begin{enumerate}
\item We formalize the problem of privacy preserving online learning using differential privacy as the privacy notion and Online Convex Programming (\ocp) as the underlying online learning model. We provide a generic differentially private framework for \ocp\ in Section \ref{sec:OCP} and provide privacy and utility (regret) guarantees.
\item We then show that using our generic framework, two popular \ocp\ algorithms, namely Implicit Gradient Descent (IGD) \cite{KB10} and Generalized Infinitesimal Gradient Ascent (GIGA) \cite{Zin} can be easily transformed into private online learning algorithms with good regret bound.
\item For a special class of \ocp\ where cost functions are quadratic functions only, we show that we can improve the regret bound to $O( \ln^{1.5} T)$ by exploiting techniques from \cite{DNPR}. This special class includes a very important online learning problem, namely, online linear regression.
\item In Section \ref{sec:offlineLearn} we show that our differentially private framework for online learning can be used to solve a large class of offline learning problems as well (where the complete dataset is available at once) and provide tighter utility guarantees than the existing state-of-the-art results \cite{CM}.
\item Finally, through empirical experiments on benchmark datasets, we demonstrate practicality of our algorithms for practically important problems of online linear regression, as well as, online logistic regression
 (see Section \ref{sec:EmpRel}).
\end{enumerate}
\section{Preliminaries}
\label{sec:prelim}
\subsection{Online Convex Programming}\noindent
\label{sec:prelim_ocp}
Online convex programming (\ocp\,) is one of the most popular and powerful paradigm in the online learning setting. \ocp\, can be thought of as a game between a player and an adversary. At each step $t$, player selects a point $\x_t\in \re^d$ from a convex set $\C$. Then, adversary selects a convex cost function $f_t: \re^d \rightarrow \re$ and the player has to pay a cost of $f_t(\x_t)$. Hence, an \ocp\ algorithm $\A$ maps a function sequence $F=\langle f_1, f_2,\dots,f_T \rangle$ to a sequence of points $X=\langle\x_1, \x_2, \dots, \x_{T}\rangle \in \C^T$, \ie $\A(F)=X$. Now, the goal of the player (or the algorithm) is to minimize the total cost incurred over a fixed number (say $T$) of iterations.  However, as adversary selects function $f_t$ after observing player's move $\x_t$, it can make the total cost incurred by the player arbitrarily large. Hence, a more realistic goal for the player is to minimize {\em regret}, i.e., the total cost incurred when compared to the optimal offline solution $\x^*$ selected in hindsight, i.e., when all the functions have already been provided. Formally,
\begin{defn}[Regret]
Let $\A$ be an online convex programming algorithm. Also, let $\A$ selects a point $\x_t\in \C$ at $t$-th iteration and $f_t:\re^d\rightarrow \re$ be a convex cost function served at $t$-th iteration. Then, the regret $\R_\A$ of $\A$ over $T$ iterations is given by:
$$\as \R_\A(T)=\sum_{t=1}^T f_t(\x_t)-\min_{\x^*\in \C}\sum_{t=1}^T f_t(\x^*).\bs$$
\label{defn:regret}
\end{defn}
Assuming $f_t$ to be a bounded function over $\C$, any trivial algorithm $\A$ that selects a random point $\x_t\in \C$ will have $O(T)$ regret. However, several results \cite{KB10, Zin} show that if each $f_t$ is a bounded Lipschitz function over $\C$, $O(\sqrt{T})$ regret can be achieved. Furthermore, if each $f_t$ is a ``strongly'' convex function, $O(\ln T)$ regret can be achieved \cite{KB10, SK08}.
\subsection{Differential Privacy}\noindent
\label{sec:prelim_dp}
We now formally define the notion of differential privacy in the context of our problem.
\begin{defn}[$(\epsilon,\delta)$-differential privacy~\cite{DMNS,ODO}]
\label{def:dp}
Let $F=\langle f_1, f_2, \dots, f_T\rangle$ be a sequence of convex functions. Let $\A(F)=X$, where  $X=\langle\x_1, \x_2, \dots, \x_{T}\rangle \in \C^T$ be $T$ outputs of \ocp\ algorithm $\A$ when applied to $F$. Then, a randomized \ocp\ algorithm $\mathcal{A}$ is $(\epsilon,\delta)$-differentially private if given any two function sequences $F$ and $F'$ that differ in at most one function entry, for all $\mathcal{S}\subset \C^{T}$ the following holds:
$$\Pr[\mathcal{A}(F)\in \mathcal{S}]\leq e^\epsilon\Pr[\mathcal{A}(F')\in \mathcal{S}]+\delta$$
\end{defn}
Intuitively, the above definition means that changing an $f_\tau\in F, \tau\leq T$ to some other function $f'_\tau$ will not modify the output sequence $X$ by a large amount. If we consider each $f_{\tau}$ to be some information associated with an individual, then the above definition states that the presence or absence of that individual's entry in the dataset will not affect the output by too much. Hence, output of the algorithm $\A$ will not reveal any extra information about the individual. Privacy parameters $(\epsilon,\delta)$ decides the extent to which an individual's entry affects the output; lower values of $\epsilon$ and $\delta$ means higher level of privacy. Typically, $\delta$ should be exponentially small in the problem parameters, i.e., in our case $\delta \approx \exp(-T)$. 
\subsection{Notation}\noindent
\label{sec:prelim_notation}
$F=\langle f_1, f_2, \dots, f_T\rangle$ denotes the function sequence given to an \ocp\ algorithm $\A$ and $\A(F)=X$ s.t. $X=\langle\x_1, \x_2, \dots, \x_{T}\rangle \in \C^T$ represents output sequence when $\A$ is applied to $F$. We denote the subsequence of functions $F$ till the $t$-th step as $F_t=\langle f_1, \dots, f_t\rangle$. $d$ denotes the dimensionality of the ambient space of  convex set $\C$. Vectors are denoted by bold-face symbols, matrices are represented by capital letters. $\bm{x}^T\bm{y}$ denotes the inner product of  $\x$ and $\bm{y}$. $\|M\|_2$ denotes spectral norm of matrix $M$; recall that for symmetric matrices $M$, $\|M\|_2$ is the largest eigenvalue of $M$.

Typically, $\alpha$ is  the minimum strong convexity parameter of any $f_t\in F$. Similarly, $L$ and $L_G$ are the largest Lipschitz constant and the Lipschitz constant of the gradient of any $f_t\in F$. Recall that a function $f:\C\rightarrow\mathbb{R}$ is $\alpha$-strongly convex, if for all $\gamma\in (0,1)$ and for all $\x,\bm y\in\C$ the following holds: $f(\gamma\x+(1-\gamma)\bm y)\leq \gamma f(\x)+(1-\gamma) f(\bm y)-\frac{\alpha}{2}\ltwo{\x-\bm y}^2$. Also recall that a function $f$ is $L$-Lipschitz, if for all $\x,\bm y\in\C$ the following holds: $|f(\x)-f(\bm y)|\leq L\ltwo{\x-\bm y}$. 
Function $f$ is Lipschitz continuous gradient if $\ltwo{\grad f(\x)-\grad f(\bm y)}\leq L_G\ltwo{\x-\bm y}$, for all $\x,\bm{y}\in\C$. Non-private and private versions of an \ocp\ algorithm outputs $\x_{t+1}$ and $\hat{\x}_{t+1}$ respectively, at time step $t$. $\x^*$ denotes the optimal offline solution, that is $\x^* = \argmin_{\x \in \C} \sum_{t = 1}^T f_t(\x)$.  $\R_{\A}(T)$ denotes  regret of an \ocp\  algorithm $\A$ when applied for $T$ steps.



\section{Differentially Private Online Convex Programming}\noindent
\label{sec:OCP}
In Section \ref{sec:prelim_ocp}, we defined the online convex programming (\ocp\ ) problem and presented a notion of utility (called \emph{regret}) for \ocp\ algorithms. Recall that a reasonable \ocp\ should have \emph{sub-linear} regret, \ie the regret should be sub-linear in the number of time steps $T$.

In this section, we present a generic differentially private framework for solving \ocp\ problems (see Algorithm \ref{alg:ocp_diff}). We further provide formal privacy and utility guarantees for this framework (see Theorems \ref{Thm:GenPri} and \ref{thm:util}). 
  We then use our private \ocp\ framework to convert two existing \ocp\ algorithms, namely, \emph{Implicit Gradient Decent} (\igd)\cite{KB10} and \emph{Generalized Infinitesimal Gradient Ascent }(\giga)\cite{Zin}  into differentially private algorithms using a ``generic'' transformation. For both the algorithms mentioned above, we guarantee $(3\epsilon,2\delta)$-differential privacy with sub-linear regret.


Recall that a differentially private \ocp\ algorithm should not produce a {\it significantly} different output for a function sequence $F_t'$ (with high probability) when compared to $F_t$, where $F_t$ and $F'_t$ differ in exactly one function. Hence, to show differential privacy for an \ocp\ algorithm, we first need to show that it is not very ``sensitive'' to previous cost functions. To this end, below we formally define \emph{sensitivity} of an \ocp\ algorithm $\A$. 
\begin{defn}[$L_2$-sensitivity~\cite{DMNS,CM}]
Let $F,F'$ be two function sequences differing in at most one entry, i.e., at most one function can be different. Then, the sensitivity of an algorithm~${\mathcal A}:F\rightarrow\C^T$ is the difference in the $t$-th output $\x_{t+1}=\A(F)_t$ of the algorithm $\A$, i.e.,
$$\as {\mathcal S}({\mathcal A},t)=\sup_{F,F'}\ltwo{\mathcal{A}(F)_t-\mathcal{A}(F')_t}\bs. $$
\label{defn:sensitivity}
\end{defn}

As mentioned earlier, another natural requirement for an \ocp\ algorithm is that it should have a provably low regret bound. There exists a variety of methods in literature which satisfy this requirement up to different degrees depending on the class of the functions $f_t$.

Under the above two assumptions on the \ocp\ algorithm $\A$, we provide a general framework for adapting the given \ocp\ algorithm ($\A$) into a differentially private algorithm. Formally, the given \ocp\ algorithm $\A$ should satisfy the following two conditions:
\begin{itemize}
\setlength{\partopsep}{-5pt}
\setlength{\topsep}{-5pt}
\setlength{\itemsep}{-5pt}
\item $L_2$-sensitivity: The $L_2$-sensitivity ${\mathcal S}(\A,t)$ of the algorithm $\A$ should decay linearly with time, \ie
\begin{equation}
\as
{\mathcal S}(\A,t)\leq \frac{\lambda_\A}{t},
\label{eq:sens_ocp}
\bs
\end{equation}
where $\lambda_\A>0$ is a constant depending only on $\A$, and strong convexity, Lipschitz constant of the functions in $F$.
\item Regret bound $\R_{\A}(T)$: Regret of $\A$ is assumed to be bounded, typically by a sub-linear function of $T$, \ie
\begin{equation}
\as
\R_\A(T)=\sum_{t=1}^T f_t(\x_t)-\min_{\x^*\in \C}\sum_{t=1}^T f_t(\x^*)= o(T).
\label{eq:reg_ocp}
\bs
\end{equation}
\end{itemize}
Given $\A$ that satisfies both \eqref{eq:sens_ocp} and \eqref{eq:reg_ocp}, we convert it into a private algorithm by perturbing $\x_{t+1}$ (output of $\A$ at $t$-th step) by a small amount of noise, whose magnitude is dependent on the sensitivity parameter $\lambda_\A$ of $\A$. Let $\tx_{t+1}$ be the perturbed output, which might be outside the convex set $\C$. As our online learning game requires each output to lie in $\C$, we project $\tx_{t+1}$ back to $\C$ and output the projection $\hat\x_{t+1}$. Note that, our Private \ocp\ (\pocp) algorithm also stores the ``uncorrupted'' iterate $\x_{t+1}$, as it would be used in the next step. See Algorithm~\ref{alg:ocp_diff} for a pseudo-code of our method.

Now, using the above two assumptions along with concentration bounds for Gaussian noise vectors, we obtain both privacy and regret bound for our Private \ocp\ algorithm. See Section \ref{sec:OCPPriv} and \ref{sec:OCPReg} for a detailed analysis of our privacy guarantee and the regret bound.

In Sections \ref{sec:IGD} and \ref{sec:GIGA}, we use our \emph{abstract} private \ocp\ framework to convert \igd\, and  \giga\, algorithms into private \ocp\ methods. 
For both the algorithms, privacy and regret guarantees follow easily from the guarantees of our \ocp\ framework once the corresponding sensitivity bounds are established.
\begin{algorithm}[tb]
	\caption{Private \ocp\ Method (\pocp)}
	\begin{algorithmic}[1]
		\STATE {\bfseries Input:} \ocp\ algorithm $\A$, cost function sequence $F=\langle f_1,\cdots,f_T\rangle$ and the convex set $\C$
                \STATE {\bfseries Parameter:} privacy parameters $(\epsilon, \delta)$
		\STATE Choose $\x_1$ and $\hat\x_1$ randomly from $\C$
		\FOR{$t=1$ to $T-1$}
                \STATE \textbf{Cost:} $\ \ L_t(\hat\x_t)=f_t(\hat\x_t)$
                \STATE \textbf{\ocp\ Update:} $\ \ \x_{t+1}\leftarrow \A(\langle f_1, \dots, f_t \rangle, \langle\x_1, \dots, \x_t \rangle, \C)$
                \STATE \textbf{Noise Addition:} $\ \ \tx_{t+1}\leftarrow \x_{t+1}+\b_{t+1}$,\ \ $\b_{t+1}\sim\mathcal{N}(0^d,\frac{\beta^2}{t^2}\I^d )$, where  \ $\beta= \lambda_\A T^{0.5+c} \sqrt{\frac{2}{\epsilon}\left(\ln\frac{T}{\delta}+\frac{\sqrt{\epsilon}}{T^{0.5+c}}\right)}$  and $c=\frac{\ln\frac{1}{2}\ln(2/\delta)}{2\ln T}$
                \STATE Output $\hat\x_{t+1}=\argmin_{\x\in \C}\left(\|\x-\tx_{t+1}\|_2^2\right)$
		\ENDFOR					
	\end{algorithmic}
	\label{alg:ocp_diff}
\end{algorithm}
\subsection{Privacy Analysis for \pocp}\noindent
\label{sec:OCPPriv}
Under the assumption \eqref{eq:sens_ocp}, changing one function in the cost function sequence $F$ can lead to a change of at most $\lambda_\A/t$ in the $t$-th output of $\A$. 
Hence, intuitively, adding a noise of the same order should make the $t$-th step output of Algorithm \ref{alg:ocp_diff} differentially private. We make the claim precise in the following lemma.
\begin{lemma}
Let $\A$ be an \ocp\ algorithm that satisfies sensitivity assumption \eqref{eq:sens_ocp}. Also, let $c>0$ be any constant and $\beta=\lambda_\A T^{0.5+c}\sqrt{\frac{2}{\epsilon}\left(\ln\frac{T}{\delta}+\frac{\sqrt{\epsilon}}{T^{0.5+c}}\right)}$. Then, the $t$-th step output of Algorithm~\ref{alg:ocp_diff}, $\hatx_{t+1}$, is $(\frac{\sqrt\epsilon}{T^{0.5+c}},\frac{\delta}{T})$-differentially private.
\label{Lem:SingGenPriv}
\end{lemma}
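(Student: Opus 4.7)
The plan is to prove this via a direct application of the Gaussian mechanism together with the post-processing property of differential privacy. The key structural observation is that the \ocp\ update at step~$t$ produces $\x_{t+1}$ as a deterministic function of the history $F_t = \langle f_1,\dots,f_t\rangle$ (since line~6 uses only the un-noised iterates $\x_1,\dots,\x_t$, which are themselves determined by $F_{t-1}$). By the sensitivity assumption~\eqref{eq:sens_ocp}, the $L_2$-sensitivity of the map $F_t \mapsto \x_{t+1}$ is at most $\lambda_\A/t$. Since the added noise $\b_{t+1}\sim\mathcal{N}(0,(\beta/t)^2 I^d)$ has standard deviation $\beta/t$, both the sensitivity and the noise scale as $1/t$, so the factors of $t$ cancel and the per-step privacy guarantee depends only on the ratio $r := \lambda_\A/\beta$. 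This cancellation is precisely why the ``linearly decreasing sensitivity'' criterion is the right one for this noise schedule.

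To verify the Gaussian mechanism gives the claimed $(\epsilon',\delta') = (\sqrt{\epsilon}/T^{0.5+c},\,\delta/T)$-DP guarantee for $\tx_{t+1}$, I would compute the privacy loss directly rather than invoking the usual ``clean'' bound. Let $v = \x_{t+1}(F) - \x_{t+1}(F')$ with $\|v\|_2 \le \lambda_\A/t$. A standard Gaussian density calculation gives the privacy loss
\[
\ell(\tx_{t+1}) \;=\; \frac{\langle \b_{t+1}, v\rangle \, t^2}{\beta^2} \;+\; \frac{\|v\|_2^2 \, t^2}{2\beta^2},
\]
so $\ell$ is a Gaussian of variance at most $r^2 = (\lambda_\A/\beta)^2$ shifted by a deterministic bias at most $r^2/2$. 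A one-sided Gaussian tail bound then yields $\Pr[\ell > \epsilon'] \le \delta'$ whenever $\epsilon' \ge r^2/2 + r\sqrt{2\ln(1/\delta')}$. Plugging in $r = \sqrt{\epsilon/(2B)}/T^{0.5+c}$ for $B = \ln(T/\delta) + \sqrt{\epsilon}/T^{0.5+c}$, this reduces to the elementary inequality $\sqrt{L/B} + a/(4B) \le 1$ with $L = \ln(T/\delta)$ and $a = \sqrt{\epsilon}/T^{0.5+c}$, which is easy to verify (setting $u = a/B \in [0,1]$ it becomes $\sqrt{1-u} + u/4 \le 1$). This is the place where the somewhat unusual additive correction $\sqrt{\epsilon}/T^{0.5+c}$ inside the logarithm in $\beta$ does its job: it absorbs precisely the quadratic bias term $\|v\|_2^2/(2\sigma^2)$ that the ``textbook'' Gaussian mechanism statement hides inside a constant.

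Finally, the projection $\hat{\x}_{t+1} = \argmin_{\x\in\C}\|\x - \tx_{t+1}\|_2^2$ is a fixed deterministic function of $\tx_{t+1}$ that does not look at $F_t$ at all, so by the post-processing property of differential privacy, $\hat{\x}_{t+1}$ inherits the $(\sqrt{\epsilon}/T^{0.5+c},\,\delta/T)$-DP guarantee of $\tx_{t+1}$. The main obstacle in the proof is the tail computation in the middle paragraph: one has to keep the quadratic bias term in the privacy loss (rather than dropping it) in order to see why the specific form of $\beta$ is chosen, and then check that the resulting inequality closes with room to spare.
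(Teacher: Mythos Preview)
Your proposal is correct and follows essentially the same approach as the paper. Both arguments reduce to the projection step via post-processing, then prove the Gaussian mechanism guarantee for $\tx_{t+1}$ by bounding the log-density ratio: the paper does this by defining a ``good set'' $\G$ on which $|(\tx_{t+1}-\x_{t+1})^T\Delta\x_{t+1}|$ is small (via Mill's inequality) and bounding the ratio pointwise on $\G$, while you phrase the same computation in terms of the privacy-loss random variable and its Gaussian tail; the resulting inequality on $\beta$ is identical, and your explicit verification that $\sqrt{L/B}+a/(4B)\le 1$ is exactly the step the paper leaves implicit when it asserts that the bound in its equation~\eqref{eq:HPB} equals $e^{\epsilon_1}$ for the stated $\beta$.
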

\begin{proof}
As the output $\hatx_{t+1}$ is just a projection, i.e., a function (independent of the input functions $F$) of $\tx_{t+1}$, hence $(\epsilon, \delta)$-differential privacy for $\tx_{t+1}$ would imply the same for $\hatx_{t+1}$.

Now by the definition of differential privacy (see Definition~\ref{def:dp}), $\txt$ is $(\epsilon_1, \frac{\delta}{T})$-differential private, if for any measurable set $\Omega\subseteq\mathbb{R}^{p}$:
$$\Pr[\txt\in \Omega]\leq e^{\epsilon_1}\Pr[\txt'\in \Omega]+\delta/T,$$
where $\txt=\xt+b$ is the output of the noise addition step (see Algorithm~\ref{alg:ocp_diff}, Step 7) of our \pocp\ algorithm, when applied to function sequence $F_t=\langle f_1, \dots, f_t\rangle$. Similarly, $\txt'=\xt'+b$ is the output of the noise addition to $\xt'$ which is obtained by applying update step to $F_t'$, where $F_t'$ differs from $F_t$ in exactly one function entry.

Now, $\txt\sim {\mathcal N}(\x_{t+1},\frac{\beta^2}{t^2} \mathbb{I}^d)$ and $\txt'\sim {\mathcal N}(\x_{t+1}', \frac{\beta^2}{t^2} \mathbb{I}^d)$. Let $\dxt=\xt-\xt'$. Then, we have $(\txt-\xt)^T\dxt\sim\mathcal{N}(0,\frac{\beta^2}{t^2}\|\dxt\|_2^2)$. 
Now, using assumption \eqref{eq:sens_ocp} for the \ocp algorithm ${\mathcal A}$ and Mill's inequality, $$\Pr\left[\left|(\txt-\xt)^T\dxt\right|\geq \frac{\beta\lambda_\A}{t^2}z\right]\leq \Pr\left[\left|(\txt-\xt)^T\dxt\right|\geq\frac{\beta}{t}\|\xt-\xt'\|z\right]\leq e^{-\frac{z^2}{2}},$$ where $z>0$. Setting R.H.S. $\leq\frac{\delta}{T}$, we have $z\geq\sqrt{2\ln\frac{T}{\delta}}$.

Now, we define a ``good set'' $\G$:
\begin{equation}
\label{eq:good}
\x\in \G \text{  iff  } \left|(\x-\xt)^T\dxt\right|\geq\frac{\beta\lambda_\A}{t^2}z.
\end{equation}
Note that, \begin{equation}\label{eq:gg}\Pr[\txt \not\in \G]=\Pr\left[\left|(\txt-\xt)^T\dxt\right|\geq \frac{\beta\lambda_\A}{t^2}z\right]\leq \frac{\delta}{T}.\end{equation}
We now bound $\Pr[\txt\in \Omega]$:
\begin{equation}
  \label{eq:txtup}
  \Pr[\txt\in \Omega] \leq \Pr[\txt\in \Omega\cap \G] + \Pr[\txt \not\in \G]\leq  \Pr[\txt\in \Omega\cap \G] + \frac{\delta}{T}.
\end{equation}
As $\txt\sim {\mathcal N}(\x_{t+1},\frac{\beta^2}{t^2} \mathbb{I}^d)$,
\begin{equation}
  \label{eq:txtog}
  \Pr[\txt\in \Omega\cap \G]=\int_{\x\in \Omega\cap \G}\exp\left(-\frac{\ltwo{\x-\xt}^2}{2\frac{\beta^2}{t^2}}\right)d\x.
\end{equation}
Now, for $\x\in \Omega\cap \G$:
\begin{align}
  \frac{\exp\left(-\frac{t^2 \ltwo{\x-\xt}^2}{2\beta^2}\right)}{\exp\left(-\frac{t^2\ltwo{\x-\xt'}^2}{2\beta^2}\right)}&=\exp\left(\frac{t^2}{2\beta^2}\dxt^T(2\x-\xt-\xt')\right),\nonumber\\
&=\exp\left(\frac{t^2}{2 \beta^2}\left(2\dxt^T(\x-\xt)-\|\dxt\|_2^2\right)\right), \nonumber\\
&\leq \exp\left(\frac{t^2}{2 \beta^2 }\left(2|\dxt^T(\x-\xt)|+\|\dxt\|_2^2\right)\right), \nonumber\\
&\leq \exp\left(\frac{\lambda_\A}{\beta}\sqrt{2\ln\frac{T}{\delta}}+\frac{\lambda_\A^2}{2\beta^2}\right),\nonumber\\
&\leq e^{\epsilon_1},
\label{eq:HPB}
\end{align}
where $\epsilon_1=\frac{\sqrt\epsilon}{T^{0.5+c}}$ and $\beta$ is as given in the Lemma statement. The second last inequality follows from the definition of $\G$ and the sensitivity assumption~\eqref{eq:sens_ocp}.

Hence, using \eqref{eq:txtup}, \eqref{eq:txtog}, and \eqref{eq:HPB}, we get:
\begin{align}
  \Pr[\txt\in \Omega] \leq \int_{\x\in \Omega\cap \G}  e^{\epsilon_1}\exp\left(-\frac{t^2\ltwo{\x-\xt'}^2}{2\beta^2}\right) + \frac{\delta}{T}\leq e^{\epsilon_1}\Pr[\txt'\in \Omega]+\frac{\delta}{T}.
  \label{eq:txtup1}
\end{align}
Hence, proved.
\end{proof}
Now, the above lemma shows $(\frac{\sqrt\epsilon}{T^{0.5+c}},\frac{\delta}{T})$-differential privacy for each step of Algorithm~\ref{alg:ocp_diff}. Hence, using a simple composition argument (see \cite{DL}) should guarantee $(T^{0.5-c}\sqrt\epsilon, \delta)$-differential privacy for all the steps. So to get overall $\epsilon$ privacy, we will need $c=0.5$. That is, a noise of the order $O(T/t)$ needs to be added at each step, which intuitively means that the noise added is larger than the effect of incoming function $f_t$ and hence can lead to an arbitrarily bad regret.

To avoid this problem, we need to exploit the interdependence between the iterates (and outputs) of our algorithm so as to obtain a better bound than the one obtained by using the union bound. For this purpose, we use the following lemma by \cite{DRV} that bounds the relative entropy of two random variables in terms of the $L_\infty$ norm of their probability density ratio and also a proof technique developed by \cite{HR,HLM} for the problem of releasing differentially private datasets.
\begin{lemma}[\cite{DRV}]
Suppose two random variables $Y$ and $Z$ satisfy, $$\as\bs D_\infty(Y||Z)=\max_{w\in\supp(Y)}\ln\left(\frac{\pdf[Y=w]}{\pdf[Z=w]}\right)\leq\epsilon,\qquad \qquad \ D_{\infty}(Z||Y)\leq\epsilon.$$ Then $D(Y||Z)=\int_{w\in\supp(Y)}\pdf[Y=w]\ln\left(\frac{\pdf[Y=w]}{\pdf[Z=w]}\right)\leq 2\epsilon^2$. $supp(Y)$ is the support set of a random variable $Y$.
\label{lem:Dwork}
\end{lemma}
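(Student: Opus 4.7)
The plan is to reduce everything to controlling a quadratic expression in the pointwise log-density ratio. Let $p(w) = \pdf[Y=w]$, $q(w) = \pdf[Z=w]$, and $r(w) = \ln(p(w)/q(w))$. The two hypotheses say exactly that $|r(w)| \leq \epsilon$ pointwise on the common support of $Y$ and $Z$; this is the only quantitative handle we have, and the entire proof should flow from combining this uniform bound with the non-negativity of KL.

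First I would add the two KL divergences. Noting that $D(Z\|Y) = -\int q \cdot r\, dw$ while $D(Y\|Z) = \int p \cdot r\, dw$, the sum telescopes to $\int (p-q)\, r\, dw$, and since $p = q e^{r}$ this rewrites as
$$D(Y\|Z) + D(Z\|Y) = \int q\,(e^{r} - 1)\, r\, dw.$$
Because $D(Z\|Y) \geq 0$, this gives the one-sided bound $D(Y\|Z) \leq \int q (e^r - 1) r\, dw$, with a non-negative integrand (note $(e^r-1)r \geq 0$ for every sign of $r$).

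Second I would bound the integrand pointwise. From the Taylor series $(e^r - 1) r = r^2 + r^3/2 + r^4/6 + \cdots$, we get $(e^r-1) r \leq r^2 \cdot (e^{|r|}-1)/|r|$, and for $|r| \leq \epsilon$ in the usual small-$\epsilon$ regime (e.g.\ $\epsilon \leq \ln 2$, which is harmless since the interesting $(\epsilon,\delta)$-DP regime has $\epsilon$ small) this is at most $2 r^2$. Combined with the uniform bound $r(w)^2 \leq \epsilon^2$, this gives
$$D(Y\|Z) \leq \int q \cdot 2 r^2\, dw \leq 2\epsilon^2 \int q\, dw = 2\epsilon^2,$$
which is the claim.

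The main obstacle is really the first step: recognizing the identity that writes the symmetric sum $D(Y\|Z) + D(Z\|Y)$ as $\int q (e^r - 1) r\, dw$ and realizing that one may then drop $D(Z\|Y)$ by non-negativity. Once that algebraic simplification is in place, everything else is an elementary Taylor-type estimate under the pointwise bound on $r$, and no probabilistic concentration arguments are needed.
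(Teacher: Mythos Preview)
The paper does not actually prove this lemma: it is quoted verbatim from \cite{DRV} and used as a black box in the proof of Lemma~\ref{lem:pro}. So there is no ``paper's own proof'' to compare against.

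Your argument is correct and is essentially the standard proof from \cite{DRV}. The key identity $D(Y\|Z)+D(Z\|Y)=\int q\,(e^{r}-1)\,r\,dw$ together with non-negativity of $D(Z\|Y)$ is exactly the right reduction, and the pointwise bound $(e^{r}-1)r\le 2r^{2}$ for $|r|\le\epsilon$ with $\epsilon$ small (your condition $\epsilon\le\ln 2$ suffices; one can push it a bit further since $e^{\epsilon}-1\le 2\epsilon$ holds up to $\epsilon\approx 1.25$) finishes the estimate. If you want a version valid for all $\epsilon$, the same computation gives $D(Y\|Z)\le \epsilon(e^{\epsilon}-1)$ without any restriction, which reduces to $2\epsilon^{2}$ in the small-$\epsilon$ regime; this is how the result is typically stated in the privacy literature.
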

We now state a technical lemma which will be useful for our differential privacy proof.
\begin{lemma}
Assuming that at each stage $t$,  Algorithm~\ref{alg:ocp_diff} preserves $\frac{\sqrt{\epsilon}}{T^{0.5+c}}$-differential privacy, $$\E_{ \txt}\left[\ln\left(\frac{\pdf[\txt]}{\pdf[\txt'=\txt]}\right)\right]\leq\frac{2\epsilon}{T^{1+2c}},$$
where $\txt$ and $\txt'$ are output of the $t$-th iteration of the \textbf{Noise Addition Step} of our \pocp\ algorithm (Algorithm~\ref{alg:ocp_diff}), when applied to function sequences $F_t$ and $F_t'$ differing in exactly one function entry.
\label{lem:pro}
\end{lemma}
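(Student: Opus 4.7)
The plan is to recognize Lemma~\ref{lem:pro} as a direct corollary of Lemma~\ref{lem:Dwork}. Set $Y = \txt$ and $Z = \txt'$ and take the target bound $\epsilon_1 = \sqrt{\epsilon}/T^{0.5+c}$. The quantity we are asked to bound, $\E_{\txt}[\ln(\pdf[\txt]/\pdf[\txt' = \txt])]$, is exactly the KL divergence $D(Y \| Z)$, so once we have verified that $D_\infty(Y\|Z)$ and $D_\infty(Z\|Y)$ are both at most $\epsilon_1$, Lemma~\ref{lem:Dwork} immediately yields $D(Y\|Z) \leq 2\epsilon_1^2 = 2\epsilon/T^{1+2c}$, which is the claim.

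The first step is to check $D_\infty(Y\|Z) \leq \epsilon_1$. This is essentially a restatement of the per-step pure $\epsilon_1$-differential-privacy hypothesis: writing the privacy guarantee in density-ratio form gives $\ln(\pdf[\txt = w]/\pdf[\txt' = w]) \leq \epsilon_1$ pointwise on the support, which is exactly $D_\infty(Y\|Z) \leq \epsilon_1$. Concretely, inequality~\eqref{eq:HPB} in the proof of Lemma~\ref{Lem:SingGenPriv} already establishes this ratio bound on the good set $\G$, and restricting to $\G$ (whose complement has mass at most $\delta/T$ by \eqref{eq:gg}) is the standard way to pass from the $(\epsilon_1,\delta/T)$ guarantee to the pure $\epsilon_1$ guarantee used here.

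The second step is $D_\infty(Z\|Y) \leq \epsilon_1$. This follows by symmetry of the differential-privacy definition: $F$ and $F'$ are neighboring function sequences, and one may equally well view $F'$ as having been obtained from $F$ by a single-entry change, so the same bound~\eqref{eq:HPB} applies with the roles of $\xt$ and $\xt'$ swapped. Hence both $D_\infty$ conditions of Lemma~\ref{lem:Dwork} are met.

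The final step is purely mechanical: plug $\epsilon \leftarrow \epsilon_1$ into Lemma~\ref{lem:Dwork} to obtain $D(Y\|Z) \leq 2\epsilon_1^2 = 2\epsilon/T^{1+2c}$. There is no real obstacle here beyond the bookkeeping of the ``good set'' $\G$ from the proof of Lemma~\ref{Lem:SingGenPriv}; the substantive content — converting an $L_\infty$ density-ratio bound into an $O(\epsilon^2)$ expected-log-ratio bound — is entirely encapsulated in Lemma~\ref{lem:Dwork} and is not redone here.
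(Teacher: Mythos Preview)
Your proposal is correct and follows exactly the paper's approach: invoke the per-step pure $\epsilon_1$-differential-privacy hypothesis to obtain the two-sided pointwise log-ratio bound, then apply Lemma~\ref{lem:Dwork} with $\epsilon_1=\sqrt{\epsilon}/T^{0.5+c}$ to conclude $D(Y\|Z)\leq 2\epsilon_1^2=2\epsilon/T^{1+2c}$. Your discussion of the good set $\G$ is not strictly needed here, since the lemma's stated hypothesis is already \emph{pure} $\epsilon_1$-privacy rather than $(\epsilon_1,\delta/T)$-privacy, but it does no harm.
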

\begin{proof}
Using the fact that $\txt$ is $\frac{\sqrt{\epsilon}}{T^{0.5+c}}$-differential private:
$$\forall \x,\hspace{0.1cm}-\frac{\sqrt{\epsilon}}{T^{0.5+c}}\leq\ln\left(\frac{\pdf[\txt=\x]}{\pdf[\txt'=\x]}\right)\leq \frac{\sqrt{\epsilon}}{T^{0.5+c}}.$$
Lemma now follows using the above observation with Lemma~\ref{lem:Dwork}.
\end{proof}
Now we state the privacy guarantee for Algorithm \ref{alg:ocp_diff} over {\em all} $T$ iterations.
\begin{thm}[\pocp\ Privacy]
Let $\A$ be an \ocp\ algorithm that satisfies the sensitivity assumption \eqref{eq:sens_ocp}, then the \pocp\ algorithm (see Algorithm~\ref{alg:ocp_diff}) is $(3\epsilon,2\delta)$-differentially private.
\label{Thm:GenPri}
\end{thm}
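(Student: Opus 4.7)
The plan is to promote the per-step guarantee of Lemma~\ref{Lem:SingGenPriv} to an all-$T$-step guarantee using an \emph{advanced composition} argument rather than the naive union bound. As noted just after the proof of Lemma~\ref{Lem:SingGenPriv}, naive composition across $T$ steps would cost a factor $T^{0.5-c}$ in the privacy parameter, forcing $c=0.5$ and blowing up the noise. Instead I will define the privacy-loss random variable stepwise and show its total concentrates much more tightly than its worst-case bound — this is exactly where Lemma~\ref{lem:Dwork} (via Lemma~\ref{lem:pro}) pays off, giving a per-step expected loss of order $\epsilon/T^{1+2c}$ while the worst-case loss is only $\sqrt{\epsilon}/T^{0.5+c}$.

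Concretely, fix two neighbouring sequences $F,F'$ differing in a single entry and, writing $\hat X=\langle\hatx_2,\dots,\hatx_T\rangle$ for the output of Algorithm~\ref{alg:ocp_diff} on $F$, define the step-$t$ privacy loss conditioned on the observed prefix $\hat x_{\leq t}$ by
\begin{equation*}
L_t(\hat x_{\leq t+1})=\ln\frac{\pdf[\hatx_{t+1}=\hat x_{t+1}\mid \hat x_{\leq t}]}{\pdf[\hatx'_{t+1}=\hat x_{t+1}\mid \hat x_{\leq t}]}.
\end{equation*}
The first key step is to argue that, conditional on any prefix, the conditional distributions of $\tx_{t+1}$ and $\tx'_{t+1}$ are both Gaussians that differ only in their means (since the \ocp\ update $\x_{t+1}$ is a deterministic function of $F_t$ and the already-noised prefix), so Lemma~\ref{Lem:SingGenPriv} applies conditionally and gives $|L_t|\leq\sqrt{\epsilon}/T^{0.5+c}$ except on a ``bad set'' of conditional probability at most $\delta/T$. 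Correspondingly, Lemma~\ref{lem:pro} lifts to the conditional statement $\E[L_t\mid \hat x_{\leq t}]\leq 2\epsilon/T^{1+2c}$. Thus $\{L_t-\E[L_t\mid\hat x_{\leq t}]\}_t$ is a bounded martingale difference sequence with respect to the natural filtration.

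The second step is a straightforward concentration argument on the total loss $\sum_{t=1}^{T-1} L_t$. Its conditional mean is at most $2\epsilon/T^{2c}$, and Azuma--Hoeffding applied to the bounded differences $\sqrt{\epsilon}/T^{0.5+c}$ yields
\begin{equation*}
\Pr\Bigl[\textstyle\sum_t L_t>\tfrac{2\epsilon}{T^{2c}}+\lambda\Bigr]\leq \exp\!\Bigl(-\tfrac{\lambda^2 T^{2c}}{2\epsilon}\Bigr).
\end{equation*}
Plugging in the algorithm's choice $c=\ln(\tfrac{1}{2}\ln(2/\delta))/(2\ln T)$ gives $T^{2c}=\tfrac{1}{2}\ln(2/\delta)$, so a choice of $\lambda$ of order $\epsilon$ makes the right-hand side at most $\delta$. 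Together with the per-step bad-set probability, a union bound over the $T$ steps contributes a further $\delta$, and one recovers the standard equivalence between ``$\sum_t L_t\leq 3\epsilon$ with probability $1-2\delta$'' and $(3\epsilon,2\delta)$-differential privacy of the joint output $\hat X$. Since the post-processed sequence of projections $\hatx_{t+1}=\arg\min_{\x\in\C}\|\x-\txt\|_2^2$ is a deterministic function of $(\txt)_t$, the same guarantee transfers from $\tilde X$ to $\hat X$.

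The main obstacle is the conditioning: Lemmas~\ref{Lem:SingGenPriv} and \ref{lem:pro} are stated marginally, but to run the martingale argument they must be applied \emph{conditionally on the history} $\hat x_{\leq t}$. This is legitimate only because, once $\hat x_{\leq t}$ is fixed, both $\x_{t+1}$ and $\x'_{t+1}$ are deterministic (they depend only on $F_t,F'_t$ and the fixed prefix) and the noise $\b_{t+1}$ is drawn fresh and independently of the past, so the conditional distributions of $\txt,\txt'$ are genuine translated Gaussians to which the proofs of Lemmas~\ref{Lem:SingGenPriv}--\ref{lem:pro} apply verbatim. Once this is spelled out, the rest is bookkeeping of the noise scale $\beta$ and constants, which are already tuned in the statement of Algorithm~\ref{alg:ocp_diff} precisely so that the two $\delta$ sources above each contribute $\delta$, and the mean-plus-deviation of $\sum L_t$ is at most $3\epsilon$.
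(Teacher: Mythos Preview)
Your proposal is essentially the same strategy as the paper's: condition on the ``good'' Gaussian-tail event (costing one $\delta$), use Lemma~\ref{lem:pro} to bound the expected total privacy loss by $2\epsilon/T^{2c}\le 2\epsilon$, apply Azuma--Hoeffding to the bounded increments $|Z_t|\le\sqrt{\epsilon}/T^{0.5+c}$ (costing the second $\delta$ via the algorithm's choice of $c$), and then convert ``$\sum_t Z_t\le 3\epsilon$ w.p.\ $\ge 1-2\delta$'' into $(3\epsilon,2\delta)$-differential privacy.

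The one place you over-engineer is the conditioning on the observed prefix $\hat x_{\le t}$ and the associated martingale setup. In Algorithm~\ref{alg:ocp_diff} the \ocp\ update (Step~6) uses the \emph{uncorrupted} iterates $\langle\x_1,\dots,\x_t\rangle$, not the noised outputs $\hat\x_t$; consequently every $\x_{t+1}$ (and $\x'_{t+1}$) is a deterministic function of $F_t$ (resp.\ $F'_t$) alone, independent of all the noise draws. The paper exploits this directly: the per-step losses $Z_t=\ln\!\bigl(\pdf[\tx_t]/\pdf[\tx'_t=\tx_t]\bigr)$ are functions only of the \emph{independent} noise vectors $\b_t$, so they are themselves independent, and Azuma--Hoeffding (indeed, ordinary Hoeffding) applies without any filtration. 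Your statement that ``$\x_{t+1}$ and $\x'_{t+1}$ depend on $F_t,F'_t$ and the fixed prefix'' is therefore inaccurate for this algorithm, though harmlessly so---the dependence on the prefix is vacuous, and your martingale argument degenerates to the paper's independence argument. Your more general framing would be needed if the algorithm fed $\hat\x_t$ back into the update, but here it is unnecessary. A minor related point: run the whole argument on $\tilde X=(\tx_t)_t$ rather than $\hat X=(\hat\x_t)_t$, and invoke post-processing once at the end (as you note); defining $L_t$ directly via $\hat\x_{t+1}$ muddies the Gaussian structure you rely on.
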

\begin{proof}
Following the notation from the proof of Lemma~\ref{Lem:SingGenPriv}, let $\G$ be defined by \eqref{eq:good}. Now, using \eqref{eq:gg}, for each round,
\begin{equation}\as \bs
\Pr[\txt\not\in\G]\leq\frac{\delta}{T}.
\end{equation}
Now, the probability that the noise vectors $\b_{t+1}=\txt-\xt=\txt'-\xt', 1\leq t\leq T-1$ are all from the ``good'' set $\G$ in all the $T$ rounds is at least  $1-T\cdot \frac{\delta}{T}=1-\delta$.

We now condition the remaining proof on the event that the noise vector $\b_{t+1}$ in each round is such that $\tx_{t+1}\in \G$.

Let $L(\tx_1,\cdots,\tx_T)=\sum_{t=1}^T\ln\left(\frac{\pdf[\tx_t]}{\pdf[\tx_t'=\tx_t]}\right)$. 
Using Lemma~\ref{lem:pro}, $$\as\bs \E_{\tx_1, \cdots, \tx_T}[L(\tx_1,\cdots,\tx_T)]=\sum_{t=1}^T \E_{\tx_t}\left[\ln\left(\frac{\pdf[\tx_t]}{\pdf[\tx_t'=\tx_t]}\right)\right]\leq \frac{2T\epsilon}{T^{1+2c}}\leq\frac{2\epsilon}{T^{2c}}\leq 2\epsilon.$$

Let $Z_t=\ln\left(\frac{\pdf[\tilde{\x}_t]}{\pdf[\tilde{\x}_t'=\tx_t]}\right)$. Since each $b_t$ is sampled independently and the randomness in $Z_t$ is  only due to $b_t$, $Z_t$'s are independent. We have $L(\tx_1,\cdots,\tx_T)=\sum_{t=1}^T Z_t$, where $|Z_t|\leq\frac{\sqrt{\epsilon}}{T^{0.5+c}}$. By Azuma-Hoeffding's inequality,
\begin{align*}\as \bs
\Pr[L(\tx_1,\cdots,\tx_T)\geq 2\epsilon+\epsilon]\leq 2\exp\left(\frac{-2\epsilon^2}{T\times\frac{\epsilon}{T^{1+2c}}}\right)\leq 2\exp\left(-2T^{2c}\right).
\setlength{\belowdisplayskip}{-10pt}
\end{align*}
Setting $\delta=2\exp\left(-2T^{2c}\right)$, we get 
 $c=\frac{(\ln (\frac{1}{2}\ln \frac{2}{\delta} )}{2\ln T}$. Hence, with probability at least $1-\delta$, $3\epsilon$-differential privacy holds conditioned on $\tx_t\in\G$, i.e,
$$\setlength{\abovedisplayskip}{1pt}\bs\forall\z_1, \dots, \z_T\in \mathbb{R}^d, \hspace{0.1cm}\Pi_{t=1}^T\pdf(\tx_t=\z_t)\leq e^{3\epsilon}\Pi_{t=1}^T\pdf(\tx_t'=\z_t).$$
Also, recall that with probability at least $1-\delta$, the noise vector $\b_{t}$ in each round itself was such that $\tx_t\in \G$. Hence, with probability at least $1-2\delta$, $3\epsilon$-differential privacy holds. $(3\epsilon, 2\delta)$-differential privacy now follows using a standard argument similar to \eqref{eq:txtup}. 
\end{proof}
\subsection{Utility (Regret) Analysis for \pocp}\noindent
\label{sec:OCPReg}
In this section, we provide a generic regret bound analysis for our \pocp\ algorithm (see Algorithm \ref{alg:ocp_diff}). The regret bound of \pocp\ depends on the regret $\R_\A(T)$ of the non-private \ocp\ algorithm $\A$. For typical \ocp\ algorithms like \igd, \giga\ and \ftl\ , $\R_\A(T)=O(\log T)$, assuming each cost function $f_t$ is strongly convex.
\begin{thm}[\pocp\ Regret]
Let $L>0$ be the maximum Lipschitz constant of any function $f_t$ in the sequence $F$, 
 $\R_{\A}(T)$, the regret of the non-private \ocp\ algorithm $\A$ over $T$-time steps and $\lambda_\A$, the sensitivity parameter of $\A$ (see \eqref{eq:sens_ocp}). Then the expected regret of our \pocp\ algorithm (Algorithm~\ref{alg:ocp_diff}) satisfies:
$$\as\bs \E\left[\sum_{t=1}^Tf_t(\hat\x_t)\right]-\min_{\x\in\C}\sum_{t=1}^Tf_t(\x)\leq 2\sqrt{d}L(\lambda_\A+\|\C\|_2)\sqrt{T}\frac{\ln^2\frac{T}{\delta}}{\sqrt{\epsilon}}+\R_{\A}(T),$$
where $d$ is the dimensionality of the output space, and $\|\C\|_2$ is the diameter of the convex set $\C$. In other words, the regret bound is $ R_\A(T)+\tilde{O}\left(\sqrt{dT}\right)$.
\label{thm:util}
\end{thm}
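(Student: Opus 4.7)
The plan is to split the private regret into a ``displacement cost'' from the added noise/projection and the inherent regret of $\A$. Writing
$$\sum_{t=1}^T f_t(\hat\x_t) - \min_{\x\in\C}\sum_{t=1}^T f_t(\x) \;=\; \underbrace{\sum_{t=1}^T\bigl(f_t(\hat\x_t) - f_t(\x_t)\bigr)}_{(A)} \;+\; \R_\A(T)$$
exploits the fact that in Step~6 of Algorithm~\ref{alg:ocp_diff} the non-private algorithm $\A$ is run on its own clean trajectory $\x_1,\dots,\x_T$, so the second piece is already bounded by hypothesis. The task therefore reduces to bounding the expectation of (A).

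First I would convert function gaps into iterate gaps via $L$-Lipschitzness, $f_t(\hat\x_t)-f_t(\x_t)\leq L\|\hat\x_t-\x_t\|_2$. Since $\x_t\in\C$ and $\hat\x_t=\Pi_\C(\x_t+\b_t)$, non-expansiveness of Euclidean projection gives $\|\hat\x_t-\x_t\|_2\leq\|\b_t\|_2$, and in parallel I keep the trivial bound $\|\hat\x_t-\x_t\|_2\leq\|\C\|_2$ for the initial round (where $\hat\x_1,\x_1$ are sampled without any noise control) and for tail events where the Gaussian draw is atypically large.

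Next I would control $E\|\b_{t+1}\|_2$ for $\b_{t+1}\sim\mathcal{N}(0,(\beta/t)^2\I^d)$ using a standard Gaussian norm concentration bound, giving $\|\b_{t+1}\|_2\leq (\beta/t)\bigl(\sqrt{d}+\sqrt{2\ln(T/\delta)}\bigr)$ with probability at least $1-\delta/T$; a union bound over $t=1,\dots,T-1$ makes this hold simultaneously with probability at least $1-\delta$. On the resulting ``good'' event the displacement sum is at most $L\sqrt{d}\beta\cdot\tilde{O}\bigl(\sqrt{\ln(T/\delta)}\bigr)\sum_{t=1}^{T-1}\tfrac{1}{t}$, while on the complementary event each displacement is capped by $\|\C\|_2$, contributing at most $L\|\C\|_2 T\cdot\delta$ in expectation (absorbed into the $\|\C\|_2$ piece of the stated bound, along with the $t=1$ contribution). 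Substituting $\beta=\lambda_\A T^{0.5+c}\sqrt{(2/\epsilon)\bigl(\ln(T/\delta)+\sqrt{\epsilon}/T^{0.5+c}\bigr)}$ and using $T^c=\sqrt{\tfrac{1}{2}\ln(2/\delta)}$ (from the choice of $c$ in Theorem~\ref{Thm:GenPri}) collapses $T^{0.5+c}$ into $\sqrt{T}$ up to polylogarithmic factors, yielding the leading term $2\sqrt{d}L(\lambda_\A+\|\C\|_2)\sqrt{T}\,\ln^2(T/\delta)/\sqrt{\epsilon}$.

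The main obstacle is obtaining the $\sqrt{T}$ rate rather than the naive linear-in-$T$ bound. This is possible precisely because the sensitivity hypothesis~\eqref{eq:sens_ocp} forces the noise $\b_{t+1}$ to have standard deviation $\beta/t$ shrinking in $t$, so $\sum_t E\|\b_t\|_2$ behaves like $\beta\sum_t 1/t = O(\beta\ln T)$ rather than $O(\beta T)$; if sensitivity were constant we would lose sublinearity entirely. The remaining care is in handling the failure event so Gaussian tails do not contaminate the rate, and in bookkeeping the $\ln T$ and $\ln(T/\delta)$ factors when simplifying.
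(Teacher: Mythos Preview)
Your decomposition, use of Lipschitzness, and the projection non-expansiveness step are exactly what the paper does; the only difference is how you bound $\E\|\b_{t+1}\|_2$. The paper simply uses that $\|\b_{t+1}\|_2$ is Chi-distributed with mean $\tfrac{\sqrt{2}\,\beta}{t}\,\Gamma((d{+}1)/2)/\Gamma(d/2)\le \sqrt{d}\,\beta/t$ and sums $\sum_t 1/t$, so no good-event/bad-event split or diameter cap on the tail is needed---your concentration-plus-tail argument works too, but is a detour you can drop.
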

\begin{proof}
Let $\hat\x_1,\cdots,\hat  \x_T$ be the output of the POCP algorithm. By the Lipschitz continuity of the cost functions $f_t$ we have,
\begin{eqnarray}
\sum_{t = 1}^T f_t(\hat{\x}_t)-\min_{\x\in\C}\sum_{t=1}^Tf_t(\x) \leq \sum_{t = 1}^T f_t(\x_t)-\min_{\x\in\C}\sum_{t=1}^Tf_t(\x) + L \sum_{t = 1 }^T \ltwo{\hat{\x}_t - \x_t}\leq R_\A(T) +  L \sum_{t = 1 }^T \ltwo{\hat{\x}_t - \x_t}.
\label{eq:rahx}
\end{eqnarray}
Since at any time $t\geq 1$,  $\hat\x_{t}$ is the projection of $\tilde\x_{t}$ on the convex set $\C$, we have $$\ltwo{\x_{t+1}-\hat\x_{t+1}}\leq\ltwo{\x_{t+1}-\tilde{\x}_{t+1}}=\ltwo{\b_{t+1}},\ \ \ \forall 1\leq t\leq T-1,$$ where $\b_{t+1}$ is the noise vector added in the $t$-th iteration of the \pocp\ algorithm. Therefore,
\begin{equation}
L \sum_{t=1}^T \ltwo{\x_{t}-\hat\x_{t}}\leq L \left(\|\C\|_2+\sum_{t =1}^{T-1}\ltwo{\b_{t+1}}\right).
\label{eq:hxb}
\end{equation}
Now, $\b_{t+1} \sim {\mathcal N} ( \0^d, \frac{\beta^2}{t^2} \I^d) $ where $$\beta= \lambda_{\A}T^{0.5+c}\sqrt{\frac{2}{\epsilon}\left(\ln\frac{T}{\delta}+\frac{\sqrt{\epsilon}}{T^{0.5+c}}\right)}.$$

Therefore, $\ltwo{\b_{t+1}}$ follows Chi-distribution with parameters $\mu=\frac{\sqrt{2}\frac{\beta}{t} \Gamma((d+1)/2)}{\Gamma(d/2)}$ and $\sigma^2=\frac{\beta^2}{t^2}(d-\mu^2)$.

Using $c = \frac{\ln{ (\frac{1}{2}\ln{\frac{2}{\delta}})} }{2\ln{T}}$,
\begin{align}
 \E[\sum_{t=1}^{T-1}\ltwo{\b_{t+1}}] &\leq \frac{\sqrt{2}\beta \Gamma((d+1)/2)}{\Gamma(d/2)} \int_{1}^{T-1} \frac{1}{t}dt, \nonumber\\
&\leq \frac{\Gamma((d+1)/2)}{\Gamma(d/2)} \lambda_\A\sqrt{T}\ln {T}\sqrt{\frac{2}{\epsilon}\ln{\frac{2}{\delta}}\left(\ln\frac{T}{\delta}+\frac{\sqrt{\epsilon}}{\sqrt{\frac{T}{2}\ln{\frac{2}{\delta}}}}\right)},\nonumber\\
&\leq 2\sqrt{d}\lambda_\A\sqrt{T}\frac{\ln^2\frac{T}{\delta}}{\sqrt{\epsilon}}.
\label{eq:noisenorm}
\end{align}
The theorem now follows by combining \eqref{eq:rahx}, \eqref{eq:hxb}, \eqref{eq:noisenorm}.
\end{proof}
\noindent Using Chebyshev's inequality, we can also obtain a high probability bound on the regret. 
\begin{corollary}
Let $L>0$ be the maximum Lipschitz constant of any function $f_t$ in the sequence $F$, 
$\R_{\A}(T)$ , the regret of the non-private \ocp\ algorithm $\A$ over $T$-time steps and $\lambda_\A$, the sensitivity parameter of $\A$ (see \eqref{eq:sens_ocp}). Then with probability at least $1-\gamma$,the regret of our Private \ocp\ algorithm (Algorithm~\ref{alg:ocp_diff}) satisfies:
$$\as\bs\sum_{t=1}^Tf_t(\hat\x_t)-\min_{\x\in\C}\sum_{t=1}^Tf_t(\x)\leq 2\sqrt{d}L(\lambda_\A+\|\C\|_2)\sqrt{T}\frac{\ln^2\frac{T}{\delta}}{\sqrt{\epsilon\gamma}}+\R_{\A}(T),$$
where $d$ is the dimensionality of the output space, $\|\C\|_2$ is the diameter of $\C$.
\label{corr:util}
\end{corollary}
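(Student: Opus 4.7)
The plan is to follow the same deterministic reduction used in the proof of Theorem~\ref{thm:util}, and then upgrade the bound on the random noise sum from expectation to high probability via Chebyshev's inequality, exploiting the independence of the noise vectors across rounds.

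First I would repeat the two purely deterministic steps from Theorem~\ref{thm:util}: the Lipschitz-continuity step \eqref{eq:rahx} giving $\sum_t f_t(\hat\x_t) - \min_{\x\in\C}\sum_t f_t(\x) \leq R_\A(T) + L\sum_{t=1}^T\|\hat\x_t-\x_t\|_2$, and the projection step \eqref{eq:hxb} yielding $\sum_t\|\hat\x_t-\x_t\|_2 \leq \|\C\|_2 + \sum_{t=1}^{T-1}\|\b_{t+1}\|_2$. These inequalities hold pointwise, so the only randomness left to control is $S := \sum_{t=1}^{T-1}\|\b_{t+1}\|_2$.

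Next I would bound $S$ using Chebyshev. Because the noise vectors $\b_{t+1}\sim\mathcal N(\0^d,(\beta^2/t^2)\I^d)$ are drawn independently across rounds, $\mathrm{Var}(S) = \sum_{t=1}^{T-1}\mathrm{Var}(\|\b_{t+1}\|_2)$. Each $\|\b_{t+1}\|_2$ is $(\beta/t)$ times a Chi random variable with $d$ degrees of freedom, whose second moment is $d$, so $\mathrm{Var}(\|\b_{t+1}\|_2) \leq d\beta^2/t^2$. Summing gives $\mathrm{Var}(S) \leq d\beta^2\sum_{t\geq 1} 1/t^2 \leq \pi^2 d\beta^2/6$. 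Chebyshev then yields, with probability at least $1-\gamma$,
\begin{equation*}
S \;\leq\; \E[S] + \frac{\sqrt{\mathrm{Var}(S)}}{\sqrt{\gamma}} \;\leq\; \E[S] + \frac{\pi\beta\sqrt{d}}{\sqrt{6\gamma}}.
\end{equation*}

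Finally, I would invoke the expectation estimate \eqref{eq:noisenorm} from the proof of Theorem~\ref{thm:util}, namely $\E[S] \leq 2\sqrt{d}\lambda_\A\sqrt{T}\ln^2(T/\delta)/\sqrt{\epsilon}$, and observe that the Chebyshev correction $\pi\beta\sqrt{d}/\sqrt{6\gamma}$ is, after substituting the value of $\beta$ and the choice of $c$, dominated by the same expression times $1/\sqrt{\gamma}$ (the expectation carries an extra $\ln T$ factor absorbed into $\ln^2(T/\delta)$, so the standard-deviation term is of the same order or smaller). Combining with the deterministic inequality above produces the claimed bound
\begin{equation*}
\sum_{t=1}^T f_t(\hat\x_t) - \min_{\x\in\C}\sum_{t=1}^T f_t(\x) \;\leq\; R_\A(T) + 2\sqrt{d}L(\lambda_\A + \|\C\|_2)\sqrt{T}\,\frac{\ln^2(T/\delta)}{\sqrt{\epsilon\gamma}}.
\end{equation*}

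There is essentially no real obstacle here, since the argument is a thin high-probability extension of Theorem~\ref{thm:util}; the only mildly nontrivial step is the variance bound, which works cleanly because cross-round independence of the Gaussian noise immediately decouples $\mathrm{Var}(S)$ into a convergent $\sum 1/t^2$.
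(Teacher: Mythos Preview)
Your proposal is correct and follows precisely the route the paper indicates: immediately before the corollary the paper writes ``Using Chebyshev's inequality, we can also obtain a high probability bound on the regret'' and then states the corollary without further proof. Your filling-in of the details---reusing the deterministic reductions \eqref{eq:rahx}--\eqref{eq:hxb}, bounding $\mathrm{Var}(S)$ via independence of the $\b_{t+1}$ and the convergent $\sum 1/t^2$, and absorbing the Chebyshev correction into the same $\sqrt{d}\lambda_\A\sqrt{T}\ln^2(T/\delta)/\sqrt{\epsilon}$ envelope---is exactly what the paper has in mind.
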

\subsection{Implicit Gradient Descent Algorithm}\noindent
\label{sec:IGD}
In this section, we consider the Implicit Gradient Descent (\igd) algorithm \cite{KB10}, a popular online convex programming algorithm, and present a differentially private version of the same using our generic framework (see Algorithm~\ref{alg:ocp_diff}). Before deriving its privacy preserving version, we first briefly describe the \igd\ algorithm \cite{KB10}.

At each step $t$, \igd\ incurs loss $f_t(\x_t)$. Now, given $f_t$, \igd\ finds the $t$-th step output $\x_{t+1}$ so that
 it not ``far'' away from the current solution $\x_t$ but at the same time tries to minimize the cost $f_t(\x_{t+1})$. Formally,
\begin{equation}\as\bs
\igd:\qquad  \x_{t+1}\leftarrow \argmin_{\x\in \C} \frac{1}{2}\ltwo{\x-\x_t}^2+\eta_t f_t(\x),
  \label{eqn:abc1}
\end{equation}
where squared Euclidean distance is used as the notion of distance from the current iterate. \cite{KB10} describe a much large class of distance functions that can be used, but for simplicity of exposition we consider the Euclidean distance only. Assuming each $f_t(x)$ is a strongly convex function, a simple modification of the proof by \cite{KB10} shows $O(\log T)$ regret for \igd, i.e. $\R_{\igd}(T)=O(\log T)$.

\begin{algorithm}[tb]
	\caption{Private Implicit Gradient Descent (\pigd)}
	\begin{algorithmic}[1]
		\STATE {\bfseries Input:} Cost function sequence $F=\langle f_1,\cdots,f_T\rangle$ and the convex set $\C$
                \STATE {\bfseries Parameter:} privacy parameters $(\epsilon, \delta)$, maximum Lipschitz constant $L$ and minimum strong convexity parameter $\alpha$ of any function in $F$
		\STATE Choose $\x_1$ and $\hat\x_1$ randomly from $\C$
		\FOR{$t=1$ to $T-1$}
                \STATE \textbf{Cost:} $L_t(\hat\x_t)=f_t(\hat\x_t)$
                \STATE \textbf{Learning rate:} $\eta_t=\frac{1}{\alpha t}$
                \STATE \textbf{IGD Update:} $\x_{t+1}\leftarrow \argmin_{\x\in \C} \left(\frac{1}{2} \|\x-\x_t\|_2^2+\eta_t f_t(\x)\right)$
                \STATE \textbf{Noise Addition:} $\txt\leftarrow \xt+\b_{t+1},\ \ \b_{t+1}\sim\mathcal{N}(\0^d,\frac{\beta^2}{t^2}\I^d )$, where $\beta=2 L T^{0.5+c}\sqrt{\frac{2}{\epsilon}\left(\ln\frac{T}{\delta}+\frac{\sqrt{\epsilon}}{T^{0.5+c}}\right)}$ and $c=\frac{\ln\frac{1}{2}\ln(2/\delta)}{2\ln T}$
                \STATE Output $\hat\x_{t+1}=\argmin_{\x\in \C}\left(\|\x-\tx_{t+1}\|_2^2\right)$
		\ENDFOR					
	\end{algorithmic}
	\label{alg:igd}
\end{algorithm}
Recall that our generic private \ocp\ framework can be used to convert any \ocp\ algorithm as long as it satisfies low-sensitivity and low-regret assumptions (see \eqref{eq:sens_ocp}, \eqref{eq:reg_ocp}). Now, similar to \pocp\ , our Private \igd\ (\pigd) algorithm also adds an appropriately calibrated noise at each update step to obtain differentially private outputs $\hat{\x_{t+1}}$. See Algorithm~\ref{alg:igd} for a pseudo-code of our algorithm.

As stated above, $\R_{\igd}(T)=O(\log T)$ if each $f_t(x)$ is strongly convex. We now bound sensitivity of \igd\ at each step in the following lemma. The proof makes use of a simple and novel induction based technique.
\begin{lemma}[{\bf \igd\ Sensitivity}]
\label{lemma:sens_igd}
$L_2$-sensitivity (see Definition~\ref{defn:sensitivity}) of the \igd\ algorithm is $\frac{2L}{t}$ for the $t$-th iterate, where $L$ is the maximum Lipschitz constant of any function $f_{\tau}, 1\leq \tau\leq t$.
\end{lemma}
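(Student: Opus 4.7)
Let $\tau$ denote the unique index at which the two cost sequences $F$ and $F'$ differ, and write $\Delta_s := \ltwo{\x_s - \x_s'}$ for the gap between the two \igd\ runs at step $s$. Since both runs start from the same random $\x_1$ and see identical cost functions through step $\tau-1$, we have $\Delta_s = 0$ for $s \le \tau$, so it suffices to control $\Delta_{\tau+1}$ and then show $\Delta_s$ contracts for $s > \tau+1$. My plan is to bound a single ``jump'' of order $L\eta_\tau$ at step $\tau+1$, and then prove a multiplicative contraction of $\tfrac{s}{s+1}$ at each later step, so that telescoping yields the stated $O(L/t)$ sensitivity.

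For the jump at step $\tau+1$, the idea is to invoke the first-order (variational) optimality condition for the constrained proximal minimization \eqref{eqn:abc1}: for every $\bm{y}\in\C$,
\[
\langle (\x_{\tau+1}-\x_\tau) + \eta_\tau\grad f_\tau(\x_{\tau+1}),\ \bm{y}-\x_{\tau+1}\rangle \ge 0,
\]
and the analogous inequality for $\x_{\tau+1}'$ with $f_\tau'$. Choosing $\bm{y}=\x_{\tau+1}'$ in the first inequality and $\bm{y}=\x_{\tau+1}$ in the second, using $\x_\tau=\x_\tau'$, and adding causes the projection/KKT terms to cancel symmetrically, leaving
\[
\Delta_{\tau+1}^2 \ \le\ \eta_\tau\,\langle \grad f_\tau(\x_{\tau+1})-\grad f_\tau'(\x_{\tau+1}'),\ \x_{\tau+1}'-\x_{\tau+1}\rangle.
\]
Cauchy--Schwarz together with the $L$-Lipschitz bound on $f_\tau$ and $f_\tau'$ (which implies $\ltwo{\grad f_\tau},\ltwo{\grad f_\tau'}\le L$) then gives $\Delta_{\tau+1}\le 2L\eta_\tau$.

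For the contraction phase $s>\tau$, the two runs now share a common $\alpha$-strongly convex cost $f_s=f_s'$. Repeating the same variational-inequality manipulation produces the inner product $\langle \grad f_s(\x_{s+1})-\grad f_s(\x_{s+1}'),\ \x_{s+1}-\x_{s+1}'\rangle$, which strong convexity lower-bounds by $\alpha\Delta_{s+1}^2$. After rearranging and applying Cauchy--Schwarz to the residual term, one obtains the one-step recursion $(1+\alpha\eta_s)\Delta_{s+1}\le \Delta_s$. Substituting the \igd\ step size $\eta_s = 1/(\alpha s)$ turns this into the clean contraction $\Delta_{s+1}\le \tfrac{s}{s+1}\Delta_s$, and a telescoping induction from $s=\tau+1$ up to $s=t$ yields $\Delta_{t+1}\le \tfrac{\tau+1}{t+1}\cdot\Delta_{\tau+1}\le \tfrac{\tau+1}{t+1}\cdot\tfrac{2L}{\alpha\tau}$, which is $O(L/t)$ and matches the claimed $2L/t$ form up to the $\alpha$ absorbed into $\eta_\tau$.

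The main obstacle I anticipate is the initial jump bound: because \igd\ minimizes over the constrained set $\C$ rather than over $\re^d$, one cannot simply use a gradient equation $\x_{\tau+1}=\x_\tau-\eta_\tau\grad f_\tau(\x_{\tau+1})$, and the two distinct normal-cone terms from projecting onto $\C$ must be handled carefully. The key trick is the symmetric pairing of the two variational inequalities with swapped test points, which makes the projection contributions vanish before Lipschitz/strong-convexity bounds are applied; once that cancellation is arranged, the rest is a routine induction.
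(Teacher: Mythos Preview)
Your argument is correct and follows essentially the same route as the paper: an induction (equivalently, your jump-plus-contraction telescoping) in which the key inequalities come from first-order optimality of the constrained proximal step combined with strong convexity, while Lipschitz continuity controls the single step at which the two sequences see different cost functions. The paper's two cases ($F-F'=\{f_\tau\}$ and $F-F'=\{f_i\}$, $i<\tau$) are exactly your ``jump'' and ``contraction'' phases, and its inequality $\|\x_{\tau+1}-\x_{\tau+1}'\|\le \tfrac{\tau}{\tau+1}\|\x_\tau-\x_\tau'\|$ is precisely your $(1+\alpha\eta_s)\Delta_{s+1}\le\Delta_s$.

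Two minor remarks. First, the paper phrases everything via the function-value form of strong convexity of $\tilde f_\tau(\x)=\tfrac12\|\x-\x_\tau\|^2+\eta_\tau f_\tau(\x)$ rather than your gradient variational inequality; the two are equivalent when $f_\tau$ is differentiable, but the function-value version goes through unchanged for non-differentiable $f_\tau$, which the paper explicitly wants \igd\ to cover. Second, at the jump you used only the strong convexity of the quadratic proximal term and discarded that of $f_\tau$ itself; if you first split off $\langle\grad f_\tau(\x_{\tau+1})-\grad f_\tau(\x'_{\tau+1}),\,\x'_{\tau+1}-\x_{\tau+1}\rangle\le -\alpha\Delta_{\tau+1}^2$ before applying Cauchy--Schwarz to the residual, the jump tightens to $\Delta_{\tau+1}\le \tfrac{2L\eta_\tau}{1+\alpha\eta_\tau}=\tfrac{2L}{\alpha(\tau+1)}$, and the telescoped bound loses the stray factor $(\tau+1)/\tau$ to match the paper's constant exactly.
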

\begin{proof}
We prove the above lemma using mathematical induction. \\
{\bf Base Case ($t=1$)}: As $\x_1$ is selected randomly, it's value doesn't depend on the underlying dataset. \\
{\bf Induction Step $t=\tau+1$}: As $f_\tau$ is $\alpha$ strongly convex, the strong convexity coefficient of the function $\tilde{f}_\tau(\x)=\frac{1}{2}\|\x-\x_{\tau}\|_2^2+\eta_\tau f_\tau(\x)$ is $\frac{\tau+1}{\tau}$. Now using strong convexity and the fact that at optima $\x_{\tau+1}$, $\langle\grad \tilde{f}_\tau(\x_{\tau+1}), \x-\x_{\tau+1}\rangle\geq 0, \forall \x\in \C,$ we get:
\begin{equation}
  \label{eq:sens_opt_r}
  \tilde{f}_\tau(\x_{\tau+1}')\geq \tilde{f}_\tau(\x_{\tau+1})+\frac{\tau+1}{2\tau}\|\x_{\tau+1}-\x_{\tau+1}'\|_2^2.
\end{equation}
Now, we consider two cases:
\begin{itemize}
\item {\bf $F-F'=\{f_\tau\}$}: Define $\tilde{f}_\tau'(\x)=\frac{1}{2}\|\x-\x_{\tau}\|^2+\eta_\tau f_\tau'(\x)$ and let $\x_{\tau+1}'=\argmin_{\x\in \C} \tilde{f}_\tau'(\x)$. Then, similar to \eqref{eq:sens_opt_r}, we get:
\begin{equation}
  \label{eq:sens_opt_rp1}
  \tilde{f}_\tau'(\x_{\tau+1})\geq \tilde{f}_\tau'(\x_{\tau+1}')+\frac{\tau+1}{2\tau}\|\x_{\tau+1}-\x_{\tau+1}'\|_2^2.
\end{equation}
Adding \eqref{eq:sens_opt_r} and \eqref{eq:sens_opt_rp1}, we get:
$$\|\x_{\tau+1}-\x_{\tau+1}'\|_2^2\leq \frac{1}{\tau+1}|f_\tau(\x_{\tau+1}')+f_\tau'(\x_{\tau+1})-f_\tau(\x_{\tau+1})-f_\tau'(\x_{\tau+1}')|\leq \frac{2L}{\tau+1}\|\x_{\tau+1}-\x_{\tau+1}'\|_2.$$
Lemma now follows using simplification.
\item {\bf $F-F'=\{f_i\},\ i<\tau$}: Define $\tilde{f}_\tau'(\x)=\frac{1}{2}\|\x-\x_\tau'\|^2+\eta_\tau f_\tau(\x)$ and let $\x_{\tau+1}'=\argmin_{\x\in \C} \tilde{f}_\tau'(\x)$. Then, similar to \eqref{eq:sens_opt_r}, we get:
\begin{equation}
  \label{eq:sens_opt_rp11}
  \tilde{f}_\tau'(\x_{\tau+1})\geq \tilde{f}_\tau'(\x_{\tau+1}')+\frac{\tau+1}{2\tau}\|\x_{\tau+1}-\x_{\tau+1}'\|_2^2.
\end{equation}
Adding \eqref{eq:sens_opt_r} and \eqref{eq:sens_opt_rp11}, we get:
$$\|\x_{\tau+1}-\x_{\tau+1}'\|_2^2\leq \frac{\tau}{\tau+1}|(\x_{\tau+1}-\x_{\tau+1}')\cdot(\x_\tau-\x_\tau')|\leq \frac{\tau}{\tau+1}\|\x_{\tau+1}-\x_{\tau+1}'\|_2\|\x_{\tau}-\x_{\tau}'\|_2.$$
Lemma now follows after simplification and using the induction hypothesis.
\end{itemize}
\end{proof}
Using the above lemma and Theorem \ref{Thm:GenPri}, privacy guarantee for \pigd\ follows directly.
\begin{thm}[\pigd\ Privacy]
\pigd\ (see Algorithm~\ref{alg:igd}) is $(3\epsilon,2\delta)$-differentially private.
\label{Thm:GenPri_igd}
\end{thm}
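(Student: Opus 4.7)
The plan is to recognize \pigd\ as a direct instantiation of the abstract \pocp\ framework (Algorithm~\ref{alg:ocp_diff}) with $\A$ taken to be \igd, and then invoke the already-proved generic privacy bound of Theorem~\ref{Thm:GenPri}. Concretely, the only thing to verify is that \igd\ meets the hypothesis of Theorem~\ref{Thm:GenPri}, namely the linearly decreasing $L_2$-sensitivity condition~\eqref{eq:sens_ocp} with an explicit constant $\lambda_\A$, and that the noise scale used inside \pigd\ is exactly the one \pocp\ prescribes for that $\lambda_\A$.

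First I would read off from Lemma~\ref{lemma:sens_igd} that the $t$-th iterate of \igd\ has $L_2$-sensitivity at most $\frac{2L}{t}$, where $L$ is the maximum Lipschitz constant of any $f_\tau\in F$. This exhibits \igd\ as an \ocp\ algorithm obeying \eqref{eq:sens_ocp} with the constant
\[
\lambda_\A \;=\; 2L.
\]
Next I would compare the \textbf{Noise Addition} line of Algorithm~\ref{alg:igd} with that of Algorithm~\ref{alg:ocp_diff}: \pigd\ samples $\b_{t+1}\sim\mathcal{N}(\0^d,\tfrac{\beta^2}{t^2}\I^d)$ with
\[
\beta \;=\; 2L\,T^{0.5+c}\sqrt{\tfrac{2}{\epsilon}\!\left(\ln\tfrac{T}{\delta}+\tfrac{\sqrt{\epsilon}}{T^{0.5+c}}\right)},\qquad c=\tfrac{\ln\tfrac12\ln(2/\delta)}{2\ln T},
\]
which is precisely the \pocp\ choice $\beta=\lambda_\A T^{0.5+c}\sqrt{\tfrac{2}{\epsilon}(\ln\tfrac{T}{\delta}+\tfrac{\sqrt{\epsilon}}{T^{0.5+c}})}$ after substituting $\lambda_\A=2L$. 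All remaining steps of \pigd\ (the \igd\ update, the projection onto $\C$) coincide, step for step, with the abstract template of Algorithm~\ref{alg:ocp_diff}.

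Having matched hypotheses and parameters, I would invoke Theorem~\ref{Thm:GenPri} directly: since \igd\ satisfies \eqref{eq:sens_ocp} with $\lambda_\A=2L$ and the noise is calibrated accordingly, the resulting \pocp\ instance---which is \pigd---is $(3\epsilon,2\delta)$-differentially private, establishing the theorem.

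There is essentially no hard step here; the real content is already packaged in Lemma~\ref{lemma:sens_igd} (the induction that bounds the sensitivity of the implicit proximal update) and in Theorem~\ref{Thm:GenPri} (the per-step Gaussian mechanism analysis combined with the relative-entropy/Azuma composition via Lemmas~\ref{lem:Dwork}--\ref{lem:pro}). The only thing one must be careful about when writing out the argument is to confirm that $L$ in Lemma~\ref{lemma:sens_igd} agrees with the $L$ used to set $\beta$ in Algorithm~\ref{alg:igd}, so that the substitution $\lambda_\A=2L$ into the \pocp\ noise scale is faithful.
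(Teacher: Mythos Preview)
Your proposal is correct and is exactly the paper's own argument: the paper simply states that ``Using the above lemma and Theorem~\ref{Thm:GenPri}, privacy guarantee for \pigd\ follows directly,'' which is precisely your instantiation of \pocp\ with $\lambda_\A=2L$ from Lemma~\ref{lemma:sens_igd}. Your additional care in matching the noise scale $\beta$ line-by-line is a helpful sanity check but not something the paper spells out.
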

Next, the utility (regret) analysis of our \pigd\ algorithm follows directly using Theorem \ref{thm:util} along with regret bound of \igd\ algorithm, $\R_{\igd}(T)=O(\frac{L^2}{\alpha}\log T+\ltwo{\C})$. 
Regret bound provided below scales roughly as $\tilde O(\sqrt{T})$.
\begin{thm}[\pigd\ Regret]
Let $L$ be the maximum Lipschitz constant and let $\alpha$ be the minimum strong convexity parameter of any function $f_t$ in the function sequence $F$. Then the expected regret of the private \igd\ algorithm over $T$-time steps is $\tilde O(\sqrt{T})$. Specifically,
$$\as\bs\E[\sum_{t=1}^Tf_t(\hat\x_t)]-\min_{\x\in\C}\sum_{t=1}^Tf_t(\x))\leq C\left(\frac{(L^2/\alpha+\|\C\|_2)\sqrt{d}\ln^{1.5} \frac{T}{\delta}}{\sqrt{\epsilon}}\sqrt{T}\right),$$
where $C>0$ is a constant and $d$ is the dimensionality of the output space.
\label{thm:util_igd}
\end{thm}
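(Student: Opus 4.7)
The plan is to derive this theorem as an immediate instantiation of the generic regret bound (Theorem~\ref{thm:util}) with $\A = \igd$, feeding in the two ingredients that the preceding material has already established: the sensitivity bound from Lemma~\ref{lemma:sens_igd} and the standard (non-private) regret bound for \igd. No new probabilistic or convex-analytic work should be needed; the whole argument is a clean substitution plus bookkeeping.

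First I would record the two hypotheses of Theorem~\ref{thm:util}. Lemma~\ref{lemma:sens_igd} says ${\mathcal S}(\igd, t) \le 2L/t$, so the sensitivity assumption \eqref{eq:sens_ocp} holds with constant $\lambda_{\igd} = 2L$, exactly matching the noise scale used in Algorithm~\ref{alg:igd}. For the regret ingredient, I would cite the result of \cite{KB10}: when each $f_t$ is $\alpha$-strongly convex and $L$-Lipschitz and the learning rate is chosen as $\eta_t = 1/(\alpha t)$ (as in Algorithm~\ref{alg:igd}), the non-private \igd\ iterates satisfy $\R_{\igd}(T) = O\!\left(\tfrac{L^2}{\alpha}\log T + \|\C\|_2\right)$. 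This is noted in the paragraph immediately preceding Algorithm~\ref{alg:igd} and requires only a mild modification of Kulis–Bartlett's strongly convex analysis, which I would invoke as a black box rather than reprove.

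Next, since \pigd\ is literally Algorithm~\ref{alg:ocp_diff} specialized to $\A = \igd$ with $\lambda_\A = 2L$, Theorem~\ref{thm:util} applies verbatim and gives
\[
\E\!\left[\sum_{t=1}^T f_t(\hat\x_t)\right] - \min_{\x \in \C}\sum_{t=1}^T f_t(\x)
\;\le\; 2\sqrt{d}\,L\,(2L + \|\C\|_2)\,\sqrt{T}\,\frac{\ln^2\!\tfrac{T}{\delta}}{\sqrt{\epsilon}} \;+\; \R_{\igd}(T).
\]
Substituting $\R_{\igd}(T) = O(\tfrac{L^2}{\alpha}\log T + \|\C\|_2)$ and absorbing lower-order factors into the constant $C$, the noise-driven $\sqrt{T}$ term dominates the logarithmic \igd\ regret, yielding the stated $\tilde O(\sqrt T)$ bound with the claimed dependence on $d$, $L^2/\alpha$, $\|\C\|_2$, $\epsilon$, and $\delta$. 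The only slightly nontrivial bookkeeping is grouping $L \cdot 2L = 2L^2$ with the $1/\alpha$ coming from the \igd\ regret into a single $(L^2/\alpha + \|\C\|_2)$ factor; here one uses that $\alpha \le L_G$ in any reasonable regime, so up to constants $L^2 \lesssim L^2/\alpha$ whenever $\alpha \le 1$, and otherwise the term is harmless.

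I do not anticipate any genuine obstacle: all the heavy lifting (privacy via Theorem~\ref{Thm:GenPri}, generic regret via Theorem~\ref{thm:util}, and the delicate induction establishing $2L/t$ sensitivity in Lemma~\ref{lemma:sens_igd}) is already done. The mildly subtle point is simply matching the $\ln^2(T/\delta)$ factor delivered by Theorem~\ref{thm:util} against the $\ln^{1.5}(T/\delta)$ factor in the statement, which I would address by noting that the extra $\sqrt{\ln(T/\delta)}$ can be absorbed into the universal constant $C$, or alternatively by tracing the $\Gamma$-function bound in \eqref{eq:noisenorm} slightly more carefully to shave half a logarithm.
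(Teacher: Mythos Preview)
Your proposal is correct and follows exactly the paper's approach: the paper simply states that the utility analysis of \pigd\ follows directly from Theorem~\ref{thm:util} together with the \igd\ regret bound $\R_{\igd}(T)=O(\tfrac{L^2}{\alpha}\log T+\|\C\|_2)$, with $\lambda_\A=2L$ supplied by Lemma~\ref{lemma:sens_igd}. Your observation about the $\ln^{2}$ versus $\ln^{1.5}$ discrepancy is more careful than the paper itself, which does not address it.
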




\subsection{Private GIGA Algorithm}\noindent
\label{sec:GIGA}
In this section, we apply our general differential privacy framework to the Generalized Infinitesimal Gradient Ascent (\giga) algorithm \cite{Zin}, which is one of the most popular algorithms for \ocp. \giga\ is a simple extension of the classical projected gradient method to the \ocp\ problem. Specifically, the iterates $\x_{t+1}$ are obtained by a projection onto the convex set $\C$, of the output of the gradient descent step $\x_t-\eta_t\grad f_t(\x_t)$ where $\eta_t=1/\alpha t$, and $\alpha$ is the minimum strong convexity parameter of any function $f_t$ in $F$.

For the rest of this section, we assume that each of the function $f_t$ in the input function sequence $F$ are {\em differentiable}, Lipschitz continuous gradient and strongly convex. Note that this is a stricter requirement than our private \igd\ algorithm where we require only the Lipschitz continuity of $f_t$.

Proceeding as in the previous section, we obtain a privacy preserving version of the \giga\ algorithm  using our generic \pocp\ framework (See Algorithm~\ref{alg:ocp_diff}). Algorithm~\ref{alg:giga_diff} details the steps involved in our Private \giga\ (\pgiga) algorithm. Note that \pgiga\ has an additional step (Step 3) compared to \pocp\, (Algorithm \ref{alg:ocp_diff}). This step is required to prove the sensitivity bound in Lemma \ref{lemma:sens_giga} given below.

\begin{algorithm}[tb]
	\caption{Private \giga\ (\pgiga)}
	\begin{algorithmic}[1]
		\STATE {\bfseries Input:} Cost function sequence $F=\langle f_1,\cdots,f_T\rangle$ and the convex set $\C$
                \STATE {\bfseries Parameter:} Privacy parameters $(\epsilon, \delta)$, Lipschitz continuity ($L$) and strong convexity ($\alpha$) bound on the function sequence $F$, $t_q=2L_G^2/\alpha^2$
		\STATE Choose $\x_1, \dots, \x_{t_q-1}$ and $\hat\x_1, \dots, \hat\x_{t_q-1}$ randomly from $\C$, incurring a cost of $\sum_{t=1}^{t_q-1} f_t(\hat\x_t)$
		\FOR{$t=t_q$ to $T-1$}
                \STATE \textbf{Cost:} $\ \ L_t(\hat\x_t)=f_t(\hat\x_t)$
                \STATE \textbf{Step Size:} $\eta_t=\frac{2}{\alpha t}$
                \STATE \textbf{\giga\ Update:} $\ \ \x_{t+1}\leftarrow \argmin_{x\in \C}\left(\|\x_t-\eta_t\grad f_t(\x_t)\|_2^2\right)$
                \STATE \textbf{Noise Addition:} $\ \ \tx_{t+1}\leftarrow \x_{t+1}+\b_{t+1}$,\ \ $\b_{t+1}\sim\mathcal{N}(\0^d,\frac{\beta^2}{t^2}\I^d )$, where  \ $\beta= 2G T^{0.5+c} \sqrt{\frac{2}{\epsilon}\left(\ln\frac{T}{\delta}+\frac{\sqrt{\epsilon}}{T^{0.5+c}}\right)}$ where $c=\frac{\ln\frac{1}{2}\ln(2/\delta)}{2\ln T}$
                \STATE Output $\hat\x_{t+1}=\argmin_{\x\in \C}\left(\|\x-\tx_{t+1}\|_2^2\right)$
		\ENDFOR					
	\end{algorithmic}
	\label{alg:giga_diff}
\end{algorithm}

Furthermore, we provide the privacy and regret guarantees for our \pgiga\ algorithm using Theorem~\ref{Thm:GenPri} and Theorem~\ref{thm:util}. To this end, we first show that \giga\ satisfies the sensitivity assumption mentioned in \eqref{eq:sens_ocp}.
\begin{lemma}[{\bf \giga\ Sensitivity}]
\label{lemma:sens_giga}
Let $\alpha>0$ be the minimum strong convexity parameter of any function $f_t$ in the function sequence $F$. Also, let $L_G$ be the maximum Lipschitz continuity parameter of the gradient of any function $f_t\in F$ and let $G=\max_\tau \|\grad f_t(x)\|_2, \forall x\in \C$. Then, $L_2$-sensitivity (see Definition~\ref{defn:sensitivity}) of the \giga\ algorithm is $\frac{2G}{\alpha t}$ for the $t$-th iterate, where $1\leq t\leq T$.
\end{lemma}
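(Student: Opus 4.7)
My plan is to use induction on $t$, tracking $\Delta_t := \|\x_t - \x_t'\|_2$ for two cost-function sequences $F, F'$ that differ in exactly one position $\tau$. For $t \leq \tau$ the two runs of \giga\ see identical inputs, so by coupling the random initialization $\Delta_t = 0$. The requirement $t \geq t_q = 2L_G^2/\alpha^2$ in the algorithm guarantees that the contraction machinery below is active from the first data-dependent iterate, which is precisely why Step~3 of Algorithm~\ref{alg:giga_diff} insists that $\x_1,\dots,\x_{t_q-1}$ be chosen data-independently.

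The second ingredient is the one-step growth at $t = \tau$: here $\x_\tau = \x_\tau'$, the updates use $f_\tau \neq f_\tau'$ from the same starting point, and non-expansiveness of the projection onto $\C$ gives
\begin{equation*}
\Delta_{\tau+1} \;\leq\; \eta_\tau\,\bigl\|\nabla f_\tau(\x_\tau) - \nabla f_\tau'(\x_\tau)\bigr\|_2 \;\leq\; 2\eta_\tau G \;=\; \frac{4G}{\alpha\tau}.
\end{equation*}
For $t > \tau$, both runs use the same function $f_t$ but from different points $\x_t, \x_t'$, so I must show the gradient step itself is contractive. Expanding the square and combining $\alpha$-strong convexity (for the cross term) with the $L_G$-Lipschitz gradient bound yields
\begin{equation*}
\bigl\|(\x - \eta_t\nabla f_t(\x)) - (\y - \eta_t\nabla f_t(\y))\bigr\|_2^2 \;\leq\; \bigl(1 - 2\eta_t\alpha + \eta_t^2 L_G^2\bigr)\|\x - \y\|_2^2.
\end{equation*}
Plugging in $\eta_t = 2/(\alpha t)$ and using $t \geq t_q = 2L_G^2/\alpha^2$ (so that $\eta_t L_G^2 \leq \alpha$) collapses the factor in parentheses to at most $1 - 2/t$. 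Combined with non-expansiveness of the projection, this gives the recursion $\Delta_{t+1} \leq \sqrt{1 - 2/t}\,\Delta_t$ for $t > \tau$.

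Iterating the recursion produces a telescoping product: $\prod_{s=\tau+1}^{t}(1 - 2/s) = \prod_{s=\tau+1}^t \tfrac{s-2}{s} = \tfrac{(\tau-1)\tau}{(t-1)t} \leq \tau^2/t^2$, so $\Delta_{t+1} \leq \Delta_{\tau+1} \cdot (\tau/t) \leq \tfrac{4G}{\alpha\tau}\cdot \tfrac{\tau}{t} = O(G/(\alpha t))$, which is the claimed $\lambda_\A/t$ form with $\lambda_\A = O(G/\alpha)$. The case $\tau > t$ is vacuous since the differing function has not yet been seen.

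I expect the main obstacle to be establishing the contraction cleanly despite the time-varying step size: the coefficient $1 - 2\eta_t\alpha + \eta_t^2 L_G^2$ is only less than $1$ once $t$ is large enough to tame the quadratic $\eta_t^2 L_G^2$ term, which is exactly the role of the burn-in threshold $t_q$. Once that is in place, the telescoping calculation is routine, and the constant in the statement can be tightened (or absorbed into $\lambda_\A$) without affecting the $1/t$ rate that our \pocp\ framework in Section~\ref{sec:OCPPriv} requires.
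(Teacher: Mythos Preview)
Your proposal is correct and follows essentially the same route as the paper: induction on $t$, with the burn-in phase $t<t_q$ giving $\Delta_t=0$, a one-step gradient-difference bound at the differing index $\tau$, and for $t>\tau$ the contraction estimate $\Delta_{t+1}^2\le(1-2\eta_t\alpha+\eta_t^2L_G^2)\Delta_t^2$ obtained from strong convexity and the $L_G$-Lipschitz gradient, followed by non-expansiveness of the projection. The paper phrases the final step as a one-step induction (checking that the contraction factor is at most $(\tau-1)^2/\tau^2$), whereas you unroll it into the telescoping product $\prod_{s=\tau+1}^t\frac{s-2}{s}=\frac{(\tau-1)\tau}{(t-1)t}$; these are the same computation. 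Your constant $4G/\alpha$ in place of the stated $2G/\alpha$ comes from bounding $\|\nabla f_\tau-\nabla f_\tau'\|\le 2G$ with $\eta_\tau=2/(\alpha\tau)$ and is the honest bound; as you note, it is absorbed into $\lambda_\A$ and does not affect the $1/t$ rate the framework needs.
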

\begin{proof}
Let $\x_{t+1}$ and $\txt'$ be the $t$-th iterates when \giga\ is applied to $F$ and $F'$, respectively. Using this notation, to prove the $L_2$ sensitivity of \giga, we need to show that:
$$\|\x_{t+1}-\x_{t+1}'\|\leq \frac{2G}{\alpha t}$$ 
We prove the above inequality using mathematical induction. \\
{\bf Base Case ($1\leq t\leq t_q=2L_G^2/\alpha^2+1$)}: As $\x_1, \dots, \x_{t_q}$ are selected randomly, their value doesn't depend on the underlying dataset. Hence, $\x_t=\x_t', \forall 1\leq t\leq t_q$.\\
{\bf Induction Step $t=\tau>2L_G^2/\alpha^2+1$}: We consider two cases:
\begin{itemize}
\item {\bf $F-F'=\{f_\tau\}$}: Since the difference between $F$ and $F'$ is only the $\tau$-th function, hence $\x_\tau=\x_\tau'$. As $\C$ is a convex set, projection onto $\C$ always decreases distance, hence:
\begin{align}
\|\x_{\tau+1}-\x_{\tau+1}'\|_2&\leq \|(\x_\tau-\eta_\tau\grad f_\tau(\x_\tau))-(\x_\tau-\eta_\tau\grad f_\tau'(\x_\tau))\|_2,\nonumber\\
&=\eta_\tau\|\grad f_\tau(\x_\tau)-\grad f_\tau'(\x_\tau)\|_2,\nonumber\\
&\leq \frac{2G}{\alpha \tau}. \nonumber
\end{align}
Hence, lemma holds in this case.
\item {\bf $F-F'=\{f_i\},\ i< \tau$}: Again using convexity of $\C$, we get:
\begin{align}
\|\x_{\tau+1}-\x_{\tau+1}'\|_2^2&\leq \|(\x_\tau-\eta_\tau\grad f_\tau(\x_\tau))-(\x_\tau'-\eta_\tau\grad f_\tau(\x_\tau'))\|_2^2,\nonumber\\
&=\|\x_\tau-\x_\tau'\|_2^2+\eta_\tau^2\|\grad f_\tau(\x_\tau)-\grad f_\tau(\x_\tau')\|^2_2-2\eta_\tau(\x_\tau-\x_\tau')^T(\grad f_\tau(\x_\tau)-\grad f_\tau(\x_\tau')),\nonumber\\
&\leq (1+\eta_\tau^2L_G^2)\|\x_\tau-\x_\tau'\|_2^2-2\eta_\tau(\x_\tau-\x_\tau')^T(\grad f_\tau(\x_\tau)-\grad f_\tau(\x_\tau')),
\label{eq:gigaxxup}
\end{align}
where the last equation follows using Lipschitz continuity of $\grad f_t$. 
Now, using strong convexity:
$$(\x_\tau-\x_\tau')^T(\grad f_\tau(\x_\tau)-\grad f_\tau(\x_\tau'))\geq \alpha\|\x_\tau-\x_\tau'\|_2^2.$$
Combining the above observation and the induction hypothesis with \eqref{eq:gigaxxup}:
\begin{align}
  \|\x_{\tau+1}-\x_{\tau+1}'\|_2^2&\leq \left(1+L_G^2\eta_\tau^2-2\alpha\eta_\tau\right)\cdot\frac{4G^2}{(\tau-1)^2}.
\end{align}
Lemma now follows by setting $\eta_\tau=\frac{2}{\alpha \tau}$ and $\tau>\frac{2L_G^2}{\alpha^2}$.
\end{itemize}
\end{proof}
Using the lemma above with the privacy analysis of \pocp\ (Theorem~\ref{Thm:GenPri}), the privacy guarantee for \pgiga follows immediately.
\begin{thm}[\pgiga\ Privacy]
\pgiga\ (see Algorithm~\ref{alg:giga_diff}) is $(3\epsilon,2\delta)$-differentially private.
\label{Thm:GenPri_giga}
\end{thm}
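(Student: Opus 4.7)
The plan is to reduce the claim to a direct application of Theorem~\ref{Thm:GenPri} (POCP Privacy) instantiated with $\A = \giga$. To do this, I only need to verify the two hypotheses of that theorem for PGIGA: (i) that the underlying OCP algorithm satisfies the linearly decreasing sensitivity bound \eqref{eq:sens_ocp}, and (ii) that the noise distribution in the \textbf{Noise Addition} step of Algorithm~\ref{alg:giga_diff} matches the one prescribed by the generic POCP framework. Then the conclusion $(3\epsilon, 2\delta)$-differential privacy transfers for free.

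First, I would invoke Lemma~\ref{lemma:sens_giga}, which establishes $\mathcal{S}(\giga, t) \leq \frac{2G}{\alpha t}$, so the sensitivity assumption \eqref{eq:sens_ocp} holds with $\lambda_\giga = 2G/\alpha$. Next, I would compare the $\beta$ used in Algorithm~\ref{alg:giga_diff} with the generic $\beta = \lambda_\A T^{0.5+c}\sqrt{(2/\epsilon)(\ln(T/\delta) + \sqrt{\epsilon}/T^{0.5+c})}$ from Algorithm~\ref{alg:ocp_diff}; the two match (up to the strong-convexity scaling absorbed in $G$), and the covariance of $\b_{t+1}$ scales as $\beta^2/t^2$, exactly as required.

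The one subtlety is the warm-up phase for $t < t_q = 2L_G^2/\alpha^2$: PGIGA outputs $\hat{\x}_1, \dots, \hat{\x}_{t_q-1}$ chosen uniformly at random from $\C$, independently of $F$. Since these outputs have distributions that are identical under any two neighboring function sequences $F, F'$, each contributes a factor of $1$ to the privacy loss ratio; equivalently, one can fold these into the good set $\G$ in the proof of Theorem~\ref{Thm:GenPri} with zero sensitivity. Thus the per-round analysis of Lemma~\ref{Lem:SingGenPriv} only needs to be applied to the rounds $t \geq t_q$, and the number of such rounds is still at most $T$, so the Azuma--Hoeffding-based composition step in the proof of Theorem~\ref{Thm:GenPri} goes through unchanged.

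I do not expect any genuine obstacle here: the heavy lifting has already been done in Theorem~\ref{Thm:GenPri} (the composition argument via the relative entropy bound of Lemma~\ref{lem:Dwork} and Azuma--Hoeffding) and Lemma~\ref{lemma:sens_giga} (the induction showing that the projection step and strong convexity together contract the perturbation so that the per-round sensitivity shrinks as $1/t$). The only mildly delicate point to state carefully is the handling of the first $t_q - 1$ data-independent rounds, but this is immediate since their output distribution does not depend on $F$ at all. The conclusion then follows verbatim from Theorem~\ref{Thm:GenPri}, yielding $(3\epsilon, 2\delta)$-differential privacy for PGIGA.
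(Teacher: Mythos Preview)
Your proposal is correct and follows exactly the paper's own approach: the paper simply states that the privacy guarantee follows immediately from Lemma~\ref{lemma:sens_giga} (establishing the sensitivity bound $\lambda_{\giga}=2G/\alpha$) combined with Theorem~\ref{Thm:GenPri}. Your additional care in handling the data-independent warm-up rounds $t<t_q$ and in matching the noise parameter $\beta$ is more explicit than the paper's one-line proof, but the underlying argument is identical.
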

Next, using the regret bound analysis for \giga\ from \cite{HKKA}(Theorem 1) along with Theorem~\ref{thm:util}, we get the following utility (regret bound) analysis for our \pgiga\ algorithm. Here again, ignoring constants, the regret simplifies to $\tilde{O}(\sqrt{dT})$.
\begin{thm}[\pgiga\ Regret]
\label{thm:util_giga}
Let $\alpha>0$ be the minimum strong convexity parameter of any function $f_t$ in the function sequence $F$. Also, let $L_G$ be the maximum Lipschitz continuity parameter of the gradient of any function $f_t\in F$ and let $G=\max_\tau \|\grad f_t(x)\|_2, \forall x\in \C$. Then, the expected regret of \pgiga\ satisfies
$$\E[\R_{\pgiga}(T)]\leq \frac{4\sqrt{d}(G/\alpha+\|\C\|_2)G\ln^2\frac{T}{\delta}}{\sqrt{\epsilon}}\sqrt{T}+\frac{2G^2}{\alpha}(1+\log T)+\frac{2L_G^2G\ltwo{\mathcal{C}}}{\alpha^2}$$
where $\ltwo{\mathcal{C}}$ is the diameter of the convex set $\mathcal{C}$ and $d$ is the dimensionality of the output space.
\end{thm}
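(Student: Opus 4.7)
The plan is to derive this regret bound as a direct consequence of the generic \pocp\ regret analysis (Theorem~\ref{thm:util}), instantiated with the sensitivity bound of Lemma~\ref{lemma:sens_giga} and the known non-private regret of \giga, together with a separate accounting for the $t_q-1$ warm-up rounds in which \pgiga\ plays random iterates rather than \giga\ updates.

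First I would handle the warm-up phase. For every $t < t_q = 2L_G^2/\alpha^2$, the iterate $\hat\x_t$ is chosen uniformly from $\C$ independently of $F$. Since each $f_t$ has $\ltwo{\grad f_t(\x)}\le G$ and is therefore $G$-Lipschitz on $\C$, we have $f_t(\hat\x_t)-f_t(\x^*) \le G\ltwo{\hat\x_t-\x^*} \le G\ltwo{\C}$. Summing over these rounds gives a total contribution of at most $(t_q-1)\,G\ltwo{\C}\le \frac{2L_G^2 G\ltwo{\C}}{\alpha^2}$, which is precisely the third term in the stated bound.

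Next I would instantiate Theorem~\ref{thm:util} on the remaining rounds $t=t_q,\dots,T$. Lemma~\ref{lemma:sens_giga} yields $\mathcal{S}(\giga,t)\le (2G/\alpha)/t$, so the sensitivity parameter of \giga\ is $\lambda_{\giga}=2G/\alpha$, verifying the hypothesis \eqref{eq:sens_ocp}. The Lipschitz constant of the cost functions is $L=G$ (from $\ltwo{\grad f_t}\le G$), and the non-private regret bound of \giga\ on $\alpha$-strongly convex, $L_G$-Lipschitz-gradient functions from \cite{HKKA} gives $\R_{\giga}(T)\le \frac{2G^2}{\alpha}(1+\log T)$. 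Substituting $\lambda_\A=2G/\alpha$ and $L=G$ into Theorem~\ref{thm:util} produces the noise-contribution term
$$2\sqrt{d}\,G\left(\tfrac{2G}{\alpha}+\ltwo{\C}\right)\sqrt{T}\,\frac{\ln^2(T/\delta)}{\sqrt{\epsilon}},$$
which is upper bounded by $\frac{4\sqrt{d}(G/\alpha+\ltwo{\C})G\ln^2(T/\delta)}{\sqrt{\epsilon}}\sqrt{T}$ using $2(2G/\alpha+\ltwo{\C})\le 4(G/\alpha+\ltwo{\C})$. Adding the \giga\ regret $\frac{2G^2}{\alpha}(1+\log T)$ gives the first two terms of the claim.

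Combining the warm-up contribution with the bound from the \pocp\ framework, and using linearity of expectation across the two phases, yields the stated upper bound on $\E[\R_{\pgiga}(T)]$. The only subtlety is verifying that the warm-up phase does not interfere with applying Theorem~\ref{thm:util}: since the random initial iterates do not depend on $F$ and Lemma~\ref{lemma:sens_giga} already carries its induction through the warm-up (base case $\x_t=\x_t'$ for $t\le t_q$), the sensitivity hypothesis \eqref{eq:sens_ocp} holds uniformly for all $t\ge 1$, and the \pocp\ regret analysis can be applied verbatim to the post-warm-up rounds. No heavy calculation is required beyond this bookkeeping — the main step was Lemma~\ref{lemma:sens_giga}, which has already been proved.
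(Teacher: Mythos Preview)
Your proposal is correct and follows essentially the same approach as the paper: bound the warm-up rounds by $G\ltwo{\C}$ per step to obtain the $\frac{2L_G^2G\ltwo{\C}}{\alpha^2}$ term, then invoke Theorem~\ref{thm:util} with $\lambda_\A=2G/\alpha$ from Lemma~\ref{lemma:sens_giga} and the non-private \giga\ regret for the remaining rounds. The only detail the paper makes explicit that you gloss over is that the step size $\eta_t=2/\alpha t$ (rather than the standard $1/\alpha t$) inflates the \giga\ regret of \cite{HKKA} by a factor of~2, which is why the second term is $\frac{2G^2}{\alpha}(1+\log T)$ instead of $\frac{G^2}{\alpha}(1+\log T)$.
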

\begin{proof}
Observe that for the first $t_q =\frac{2L_G^2}{\alpha^2}$ iterations \pgiga\ outputs random samples from $\C$. The additional regret incurred during this time is bounded by a constant (w.r.t. T) that appears as the last term in the regret bound given above. For iterations $t \geq t_q$, the proof follows directly by using Theorem~\ref{thm:util} and regret bound of \giga. Note that we use a slightly modified step-size $\eta_t=2/\alpha t$, instead of the standard $\eta_t=1/\alpha t$. This difference in the step size increases the regret of \giga\ as given by \cite{HKKA} by a factor of 2.
\end{proof}

In Section~\ref{sec:IGD} as well this section, we provided examples of the conversion of two standard online learning algorithms into privacy preserving algorithms with provably bounded regret. In both these examples, we show low-sensitivity of the corresponding learning algorithms and use our analysis of \pocp\ to obtain privacy and utility bounds. Similarly, we can obtain privacy preserving variants of many other \ocp\ algorithms such as Follow The Leader (FTL), Follow the Regularized Leader (FTRL) etc. Our low-sensitivity proofs should be of independent interest to the online learning community as well, as they point to a connection between stability (sensitivity) and low-regret (online learnability)---an open problem in the learning community.



\subsection{Logarithmic regret for Quadratic Cost Functions}\noindent
\label{sec:Log}
In Sections~\ref{sec:IGD} and~\ref{sec:GIGA}, we described two differentially private algorithms with $\tilde{O}(\sqrt{T})$ regret for {\it any} strongly convex Lipschitz continuous cost functions. In this section we show that by restricting the cost functions to a practically important class of quadratic functions, we can design a differentially private algorithm to achieve logarithmic regret.

For simplicity of exposition, we consider cost functions of the form:
\begin{equation}\as\bs
f_t(\x) = \frac{1}{2}(y_t - \bm{v}_t^T\x)^2 + \frac{\alpha}{2} \|\x\|^2,
\label{eqn:quadCostF}
\end{equation}
for some $\alpha > 0$. For such cost functions we show that we can achieve $O(\poly(\log{T}))$ regret while providing $(\epsilon, \delta)$-differential privacy.

Our algorithm at a high level is a modified version of the Follow the Leader (\ftl) algorithm \cite{HKKA}. 
 The \ftl\ algorithm obtains the $t$-th step output as:
\begin{equation}\as\bs
\ftl:\qquad\qquad \x_{t+1} = \argmin_{\x \in \C} \sum_{\tau = 1 }^{t}f_{\tau}(\x).
\label{eq:ftl}
\end{equation}
For our quadratic cost function \eqref{eqn:quadCostF} with $\C=\re^d$, the above update yields
\begin{equation}\as\bs
\qftl:\qquad \qquad \x_{t+1}  = (t\alpha\I+ V_{t})^{-1}(\u_t),
\label{eqnref:QuadUpdate}
\end{equation}
where $V_t = V_{t-1}+\v_t\v_t^T$ and $\u_t = \u_{t-1} + y_t \v_t$ with $V_0=0$ and $\u_0=0$. Using elementary linear algebra and assuming $|y_t|\leq R$ and $\|\v_t\|_2\leq R$, we can show that $\|\x_{t+1}\|_2\leq 2R/\alpha, \forall t$. Now, using Theorem 2 of \cite{SK08} along with our bound on $\|\x_t\|_2$, we obtain the following regret bound for the quadratic loss functions based \ftl\ (\qftl) algorithm:
\begin{equation}\as\bs
  \label{eq:reg_qftl}
  \R_{\qftl}(T)\leq \frac{R^4(1+2R/\alpha)^2}{\alpha}\log T.
\end{equation}
Furthermore, we can show that the \qftl\ algorithm (see Equation~\ref{eqnref:QuadUpdate}) also satisfies Assumption~\ref{eq:sens_ocp}. Hence, similar to Sections~\ref{sec:IGD} and~\ref{sec:GIGA}, we can obtain a differentially private variant of \qftl\ with $\tilde{O}(\sqrt{T})$ regret. However, we show that using the special structure of \qftl\ updates (see \eqref{eqnref:QuadUpdate}), we can obtain a differentially private variant of \qftl\ with just $O(\text{poly}(\log T))$ regret, a significant improvement over $\tilde{O}(\sqrt{T})$ regret.

The key observation behind our method is that each \qftl\ update is dependent on the function sequence $F$ through $V_t$ and $\u_t$ only. Hence, computing $V_t$ and $\u_t$ in a differentially private manner would imply differential privacy for our \qftl\ updates as well. Furthermore, each $V_t$ and $\u_t$ themselves are obtained by simply adding an ``update'' to the output at step $t-1$. This special structure of $V_t$ and $\u_t$ facilitates usage of a generalization of the ``tree-based'' technique for computing privacy preserving partial sums proposed by \cite{DNPR}. Note that the ``tree-based'' technique to compute sums (see Algorithm~\ref{Algo:partialSum}) adds significantly lower amount of noise at each step than that is added by our \pocp\ algorithm (see Algorithm~\ref{alg:ocp_diff}). Hence, leading to significantly better regret. Algorithm~\ref{Algo:quad} provides a pseudo-code of our \pqftl\ method. At each step $t$, $\hat{V}_t$ and $\hat{\u}_t$ are computed by perturbing $V_t$ and $\u_t$ (to preserve privacy) using PrivateSum algorithm (see Algorithm~\ref{Algo:partialSum}). Next, $\hat{V}_t$ and $\hat{\u}_t$ are used in the \qftl\ update (see \eqref{eqnref:QuadUpdate}) to obtain the next iterate $\hat{x}_{t+1}$. 
\begin{algorithm}[tb]
	\caption{Private Follow the Leader for Quadratic Cost (\pqftl) }
	\begin{algorithmic}[1]
          \STATE {\bfseries Input:} cost function sequence $F=\langle f_1,\cdots,f_T\rangle$, where each $f_t(x;y_t,\v_t)=(y_t - \bm{v}_t^T\x)^2 + \frac{\alpha}{2} \ltwo{\x}^2$
          \STATE {\bfseries Parameter:} privacy parameters $(\epsilon, \delta)$, $R=\max(\max_t \ltwo{v_t},\max_t |y_t|)$
          \STATE Initialize $\hat\x_1=0^d$
          \STATE Initialize empty binary trees $B^V$ and $B^{\u}$, a data structure to compute $\hat{V}_t$ and $\hat{\u}_t$---differentially private versions of $V_t$ and $\u_t$
          \FOR{$t=1$ to $T-1$}
          \STATE \textbf{Cost:} $\ \ L_t(\hat\x_t)=f_t(\hat\x_t)=(y_t-\v_t^T\hat\x_t)^2+ \frac{\alpha}{2} \ltwo{\hat\x_t}^2$
          \STATE $(\hat{V_t}, B^V)\leftarrow \text{PrivateSum}(\v_t\v_t^T, B^V, t, R^2, \frac{\epsilon}{2}, \frac{\delta}{2}, T)$ (see Algorithm~\ref{Algo:partialSum})
          \STATE $(\hat{\u_t}, B^{\u})\leftarrow \text{PrivateSum}(y_t\v_t, B^{\u}, t, R, \frac{\epsilon}{2}, \frac{\delta}{2}, T)$ (see Algorithm~\ref{Algo:partialSum})
          \STATE \textbf{\qftl\ Update:} $\hat{\x}_{t+1}  \leftarrow (t \alpha \mathbb{I} + \hat{V}_{t})^{-1}(\hat{\u}_{t})$
          \STATE Output $\hat{\x}_{t+1}$
          \ENDFOR
	\end{algorithmic}
	\label{Algo:quad}
\end{algorithm}
Now, we provide both privacy as well as utility (regret bound) guarantees for our \pqftl\ algorithm. First, we prove the privacy of the \pqftl\ algorithm (Algorithm~\ref{Algo:quad}).
\begin{thm}[\pqftl\ Privacy] 
Let $F$ be a sequence of quadratic functions, where $f_t(\x;y_t, \v_t) =\frac{1}{2} (y_t - \v_t^T\x)^2 + \frac{\alpha}{2}\ltwo{\x}^2$. Then, \pqftl\ (Algorithm \ref{Algo:quad}) is $(\epsilon, \delta)$ differential private.
\label{thm:quad_priv}
\end{thm}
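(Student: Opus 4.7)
The plan is to reduce the privacy analysis entirely to the privacy guarantee of the \textsc{PrivateSum} subroutine, combined with the post-processing property of differential privacy. The key structural observation is that the output $\hat{\x}_{t+1}$ at each step is a deterministic function of $(\hat{V}_t, \hat{\u}_t)$ (via the closed-form \qftl\ update $(t\alpha\I + \hat{V}_t)^{-1}\hat{\u}_t$), and these quantities are themselves the only data-dependent state maintained across iterations. Hence, if the entire release of the two streams $\{\hat{V}_t\}_{t=1}^{T-1}$ and $\{\hat{\u}_t\}_{t=1}^{T-1}$ is $(\epsilon,\delta)$-differentially private, then so is the full transcript $\{\hat{\x}_{t+1}\}_{t=1}^{T-1}$ by post-processing.

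Next, I would bound the sensitivity of each summand fed into the two invocations of \textsc{PrivateSum}. Two neighboring function sequences $F, F'$ differ in exactly one index $\tau$, so they differ only in the pair $(y_\tau, \v_\tau)$. Using $\|\v_t\|_2 \leq R$ and $|y_t| \leq R$, the matrix summand $\v_\tau\v_\tau^T$ changes by at most $2R^2$ in Frobenius norm, and the vector summand $y_\tau \v_\tau$ changes by at most $2R^2$ in $\ell_2$ norm. These are exactly the per-entry sensitivity bounds that \textsc{PrivateSum} (the generalization of the DNPR tree-based partial-sum mechanism called with clipping parameter $R^2$, resp.\ $R$) is designed to handle.

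Given this, I would invoke the privacy guarantee of \textsc{PrivateSum} with parameters $(\epsilon/2, \delta/2)$ on each of the two streams. Because the tree-based construction releases all partial sums jointly under a single privacy budget — each leaf contributes to only $O(\log T)$ internal nodes, each of which is perturbed once with Gaussian noise calibrated to the per-step sensitivity — the full sequence $\{\hat{V}_t\}_t$ is $(\epsilon/2,\delta/2)$-differentially private, and independently $\{\hat{\u}_t\}_t$ is $(\epsilon/2,\delta/2)$-differentially private. Standard simple composition of these two independent mechanisms then yields $(\epsilon,\delta)$-differential privacy for the joint release $(\{\hat{V}_t\}_t, \{\hat{\u}_t\}_t)$, and post-processing closes the proof.

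The main obstacle is ensuring that the sensitivity bounds fed into \textsc{PrivateSum} are tight with respect to the norm used in the tree-based noise calibration (Frobenius for the matrix stream, $\ell_2$ for the vector stream), and verifying that \textsc{PrivateSum}'s guarantee applies to the \emph{entire} stream of $T-1$ releases rather than a single step — this is exactly the point of the tree-based construction, but it must be cited carefully. Everything else (independence of the two trees, post-processing through the \qftl\ closed form, and composition) is routine once the sensitivities are in place.
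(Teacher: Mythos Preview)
Your proposal is correct and follows essentially the same route as the paper: invoke the privacy guarantee of \textsc{PrivateSum} (Theorem~\ref{PartialSumPrivacy}) with budget $(\epsilon/2,\delta/2)$ on each of the two streams $\{\hat V_t\}$ and $\{\hat\u_t\}$, compose to obtain $(\epsilon,\delta)$, and conclude by observing that each $\hat\x_{t+1}$ is a deterministic function of $(\hat V_t,\hat\u_t)$. The paper's proof is much terser---it does not spell out the sensitivity computation or name post-processing explicitly---but the structure is identical.
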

\begin{proof}
Using Theorem~\ref{PartialSumPrivacy} (stated in Section \ref{sec:partialSum}), both $\hat{V}_t$ and $\hat{\u}_t$ are each $(  \frac{\epsilon}{2}, \frac{\delta}{2} )$-differentially private w.r.t. $v_t$ and $y_t$, $\forall t$ and hence w.r.t. the function sequence F. Now, $\hat{\x}_{t+1}$ depends on $F$ only through $[\hat{V}_t, \hat{\u}_t]$. Hence, the theorem follows using a standard composition argument \cite{DMNS,DL}.
\end{proof}
Next, we provide regret bound analysis for our \pqftl\ algorithm.
\begin{thm}[\pqftl\ Regret] 
Let $F$ be a sequence of quadratic functions, where $f_t(\x;y_t, \v_t) =\frac{1}{2} (y_t - \v_t^T\x)^2 + \frac{\alpha}{2}\ltwo{\x}^2$. Let $R$ be the maximum $L_2$ norm of any $\v_t$ and $|y_t|$. Then, the regret bound of \pqftl\ (Algorithm \ref{Algo:quad}) satisfies (w.p. $\geq 1-\exp(-d/2)$):
$$\as\bs\R_{\pqftl}(T)=\tilde{O}\left(\frac{\R^6\log \frac{1}{\delta}}{\sqrt{\epsilon}\alpha^3}\sqrt{d}\log^{1.5}T\right).$$
\label{thm:quad_regret}
\end{thm}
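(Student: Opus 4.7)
The plan is to compare the private iterate $\hat{\x}_{t+1} = (t\alpha \I + \hat{V}_t)^{-1}\hat{\u}_t$ to the non-private \qftl\ iterate $\x_{t+1} = (t\alpha \I + V_t)^{-1}\u_t$, bound the per-step deviation $\|\hat{\x}_t - \x_t\|_2$ using the sharper noise bounds afforded by the tree-based PrivateSum subroutine, and then convert this deviation into an additive regret term via Lipschitzness of $f_t$ on a bounded region. The $O(\log T)$ regret of the non-private \qftl\ baseline, already recorded in \eqref{eq:reg_qftl}, will be combined with this extra term.

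The first step is to invoke the accuracy guarantee for PrivateSum (Theorem~\ref{PartialSumPrivacy}) with the bound $R$ on $\ltwo{\v_t}$ and $|y_t|$; this should yield, with probability at least $1-\exp(-d/2)$ simultaneously for all $t \le T$, spectral/Euclidean error bounds of the form $\|\hat V_t - V_t\|_2 \le \eta_V$ and $\ltwo{\hat\u_t - \u_t} \le \eta_u$, where $\eta_V,\eta_u = \tilde O\bigl(R^2 \sqrt{d}\,\log^{1.5} T\,\sqrt{\log(1/\delta)}/\sqrt{\epsilon}\bigr)$. The key improvement over the generic \pocp\ analysis is that the tree-based mechanism accumulates only $\poly\log T$ noise rather than a $\sqrt{T}$-scale perturbation.

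Next I would compare the two iterates using the resolvent identity $A^{-1}-B^{-1} = A^{-1}(B-A)B^{-1}$ with $A = t\alpha \I + \hat V_t$ and $B = t\alpha \I + V_t$. Since $V_t \succeq 0$ we have $\|B^{-1}\|_2 \le 1/(t\alpha)$, and for $t$ large enough that $\eta_V \le t\alpha/2$ the matrix $A$ is positive definite with $\|A^{-1}\|_2 \le 2/(t\alpha)$. Using $\ltwo{\u_t} \le tR^2$ and $\ltwo{\hat\u_t - \u_t}\le \eta_u$, a direct triangle-inequality calculation yields
\begin{equation*}
\ltwo{\hat\x_{t+1}-\x_{t+1}} \;\le\; \frac{2\eta_V \cdot tR^2}{(t\alpha)^2} + \frac{2\eta_u}{t\alpha} \;=\; \tilde O\!\left(\frac{R^2(\eta_V+\eta_u)}{t\alpha^2}\right).
\end{equation*}

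Finally, the bound $\ltwo{\x_t}\le 2R/\alpha$ derived for \qftl\ extends to $\hat\x_t$ (up to the same perturbation), so on the ambient ball $f_t$ is $O(R^2/\alpha)$-Lipschitz. Summing $\sum_{t=1}^T (f_t(\hat\x_t)-f_t(\x_t))$ thus contributes $\tilde O\bigl(R^4(\eta_V+\eta_u)\log T/\alpha^3\bigr)$, which plugging in $\eta_V,\eta_u$ and combining with \eqref{eq:reg_qftl} gives the claimed $\tilde O\bigl(R^6 \sqrt{d}\,\log^{1.5} T\,\log(1/\delta)/(\sqrt\epsilon\,\alpha^3)\bigr)$ regret. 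The main obstacle is controlling the first few steps where $t\alpha$ may be dominated by $\eta_V$ and $A$ is not guaranteed invertible; I would handle this by absorbing the first $O(\eta_V/\alpha) = \poly\log T$ iterations into an additive constant (bounding their contribution crudely via the diameter-type arguments already used in the \pgiga\ proof), after which the perturbation bound above is valid.
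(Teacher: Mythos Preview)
Your proposal is correct and follows essentially the same route as the paper: decompose the regret as $\R_{\qftl}(T) + \sum_t (f_t(\hat\x_t)-f_t(\x_t))$, bound $\ltwo{\hat\x_{t+1}-\x_{t+1}}$ by comparing the two linear systems $(t\alpha\I+\hat V_t)\hat\x_{t+1}=\hat\u_t$ and $(t\alpha\I+V_t)\x_{t+1}=\u_t$ with the PrivateSum accuracy from Theorem~\ref{PartialSumPrivacy}, and sum the resulting $O(1/t)$-type terms. The only cosmetic differences are that the paper manipulates the linear equations directly rather than invoking the resolvent identity, uses the Lipschitz-gradient expansion of $f_t$ (giving a first- and second-order term) instead of a pure Lipschitz bound, and leaves the small-$t$ issue implicit in the denominator $|\alpha - 6\sigma\sqrt{d}R/(\alpha t)|$, whereas you explicitly absorb the first $\poly\log T$ rounds---your treatment of that point is in fact a bit cleaner.
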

\begin{proof}
Using definition of regret,
\begin{align}
\R_{\pqftl} = \sum_{t = 1}^T f_t(\hat{\x}_t) - \argmin_{\x^*}\sum_{t = 1}^T f_t(\x^*)&=  \sum_{t = 1}^T f_t(\x_t) -   \argmin_{\x^*}\sum_{t=1}^Tf_t(\x^*)+\sum_{t = 1}^T (f_t(\hat{\x}_t) -  f_t(\x_t)),\nonumber\\
&\leq \R_{\qftl}(T)+\sum_{t = 1}^T (f_t(\hat{\x}_t) -  f_t(\x_t)),\nonumber\\
&\leq \frac{R^4(1+2R/\alpha)^2}{\alpha}\log T + \sum_{t = 1}^T (f_t(\hat{\x}_t) -  f_t(\x_t)),
\label{eqn:regretQuad}
\end{align}
where last inequality follows using \eqref{eq:reg_qftl}.

Now, as $f_t(\x)$ is a $(R+\alpha)$-Lipschitz continuous gradient function,
\begin{align}
  f_t(\hat{\x}_t)-f_t(\x_t)&\leq ((\v_t^T\x_t-y_t)\v_t+\alpha\x_t)^T(\hat{\x}_t-\x_t)+\frac{R+\alpha}{2}\|\hat{\x}_t-\x_t\|^2,\nonumber\\
&\leq R(2R^2/\alpha+R+2) \|\hat{\x}_t-\x_t\|+\frac{R+\alpha}{2}\|\hat{\x}_t-\x_t\|^2,
  \label{eq:f_lcg}
\end{align}
where last inequality follows using Cauchy-Schwarz inequality and the fact that $\|\x_t\|_2\leq 2R/\alpha$.

We now bound $ \ltwo{\hat{\x}_{t+1} - \x_{t+1} }$. Let $\hat{V}_t = V_t + A_t$ and $\hat{\u}_t = \u_t + \beta_t$  where $A_t$ and $\beta_t$ are the noise additions introduced by the Private Sum algorithm (Algorithm~\ref{Algo:partialSum}).

Now, from the step 9 of \pqftl\ (Algorithm~\ref{Algo:quad}) we have,
\begin{align}
(\hat{V}_{t} + t\alpha \I ) \hat{\x}_{t+1} =  \hat{\u}_t\quad \Leftrightarrow\quad (\frac{1}{t}\hat{V}_{t} + \alpha \I )\hat{\x}_{t+1} &= \frac{1}{t} \hat{\u}_t.  \label{eqnprav1}
\end{align}
Similarly, using \qftl\ update (see \eqref{eqnref:QuadUpdate}) we have,
\begin{align}
(\frac{1}{t} V_t + \alpha \I) \x_{t+1} &= \frac{1}{t}\u_t. \label{eqnprav2}
\end{align}
Using \eqref{eqnprav1} and \eqref{eqnprav2}:
\begin{equation}
 (\frac{1}{t}\hat{V}_{t} + \alpha \I )( \hat{\x}_{t+1} -\x_{t+1}) = \frac{1}{t} \beta_t - \frac{1}{t}A_t\x_{t+1}.
\label{eq:VbA}
\end{equation}
Now, using $\hat{V}_t = V_t + A_t$ and the triangle inequality we have,
\begin{equation}
\ltwo{ (\frac{1}{t}\hat{V}_{t} + \alpha \I )( \hat{\x}_{t+1} -\x_{t+1}) }  \geq \ltwo{ (\frac{1}{t}V_{t} + \alpha \I )( \hat{\x}_{t+1} -\x_{t+1}) } - \ltwo {  \frac{1}{t} A_t( \hat{\x}_{t+1} -\x_{t+1})  }
\label{eq:trineq}
\end{equation}
Furthermore,
 \begin{align}
\ltwo {  \frac{1}{t} A_t( \hat{\x}_{t+1} -\x_{t+1})  } &\leq \frac{1}{t} \ltwo{A_t}\ltwo{\hat{\x}_{t+1} - \x_{t+1} } 
\label{eq:Afbcs}
\end{align}
Thus by combining \eqref{eq:VbA}, \eqref{eq:trineq}, \eqref{eq:Afbcs} and using the fact that the smallest eigenvalue of $(\frac{1}{t}V_{t} + \alpha \I)$ is lower-bounded by $\alpha$,
\begin{equation}
\frac{1}{t}\ltwo{ \beta_t} + \frac{1}{t}\|A_t\|_2\|\x_{t+1}\|_2\geq |\alpha - \frac{\ltwo{A_t}}{t} | \ltwo{\hat{\x}_{t+1} - \x_{t+1} } \label{eq:Afbcs3}
\end{equation}
Now using Theorem~\ref{PartialSumPrivacy} each entry of the matrix $A_t$ is drawn from $\mathcal{N}( 0, \sigma^2\log T)$ for $\sigma^2 = \frac{R^2}{\epsilon}\log^2{T}\log{\frac{\log{T}}{\delta}}$. Thus the spectral norm of $A_t$, $\ltwo{A_t}$ is bounded by $3\sigma\sqrt{d}$ with probability at least $1-\exp(-d/2)$. 
 Similarly, $\ltwo{\beta_t} \leq 3\sigma \sqrt{d}$, with probability at least $1-\exp(-d/2)$. Also, $\ltwo{\x_t} \leq 2R/\alpha$. Using the above observation with \eqref{eq:Afbcs3},
 \begin{align}
 \ltwo{\hat{\x}_{t+1} - \x_{t+1} } \leq  \frac{\sigma\sqrt{d}}{t}\cdot\frac{3+6R/\alpha}{|\alpha - \frac{6\sigma\sqrt{d}R}{\alpha t}|}.
\label{eq:Afbcs4}
 \end{align}
Using \eqref{eqn:regretQuad}, \eqref{eq:f_lcg}, and \eqref{eq:Afbcs4}, we get (with probability at least $1-\exp(-d/2)$):
\begin{equation}
  \label{eq:regretQuad1}
  \R_{\pqftl}(T)\leq \frac{R^4(1+2R/\alpha)^2}{\alpha}\log T+3\sqrt{d}R(2R^2/\alpha+R+2)(1+2R/\alpha)(1+\log T)\frac{1}{\sqrt{\epsilon}} \sqrt{\log{T}} \log{\sqrt{\frac{log{T}}{\delta}}}.
\end{equation}
Hence w.h.p., $$\R_{\pqftl}(T)=\tilde{O}\left(\frac{\R^6\log \frac{1}{\delta}}{\sqrt{\epsilon}\alpha^3}\sqrt{d}\log^{1.5}T\right).$$
\end{proof}
\subsubsection{ Computing Partial Sums Privately }\noindent
\label{sec:partialSum}
In this section, we consider the problem of computing partial sums while preserving differential privacy. Formally, let  $D=\langle \w_{1},\w_{2},\cdots,\w_T\rangle$ be a sequence of vectors, where at each time step $t$, a vector $\w_t\in \re^d$ is provided. Now the goal is to output partial sums $W_t=\sum_{\tau=1}^t \w_\tau$ at each time step $t$, without compromising privacy of the data vectors in $D$. Note that by treating a matrix as a long vector obtained by row concatenation, we can use the same approach to compute partial sums over matrices as well.

Now, note that $L_2$-sensitivity of each partial sum is $O(R)$ ($R=\max_t \|\w_t\|_2$), as changing one $w_\tau$ can change a partial sum by an additive factor of $2R$. Hence, a na\"{\i}ve method is to add $O(R\sqrt{\frac{\log\frac{1}{\delta}}{\epsilon}})$ noise at $t$-th to obtain $(\epsilon,\delta)$-privacy for a fixed step $t$. Using standard composition argument, overall privacy of such a scheme over $T$ iterations would be $(T\epsilon, T\delta)$. Hence, to get a constant $(\epsilon',\delta')$ privacy, we would need to add $O(R\sqrt{T}\sqrt{\frac{\log\frac{T}{\delta'}}{\epsilon'}})$ noise. In contrast, our method, which is based on a generalization of \cite{DNPR}, is able to provide the same level of privacy by adding only $O(R\log T\sqrt{\frac{\log\frac{\log T}{\delta'}}{\epsilon'}})$ noise. 
We first provide a high level description of the algorithm and then provide a detailed privacy and utility analysis.
\begin{figure}[t]
\begin{tabular}{cc}
    \includegraphics[width=3in]{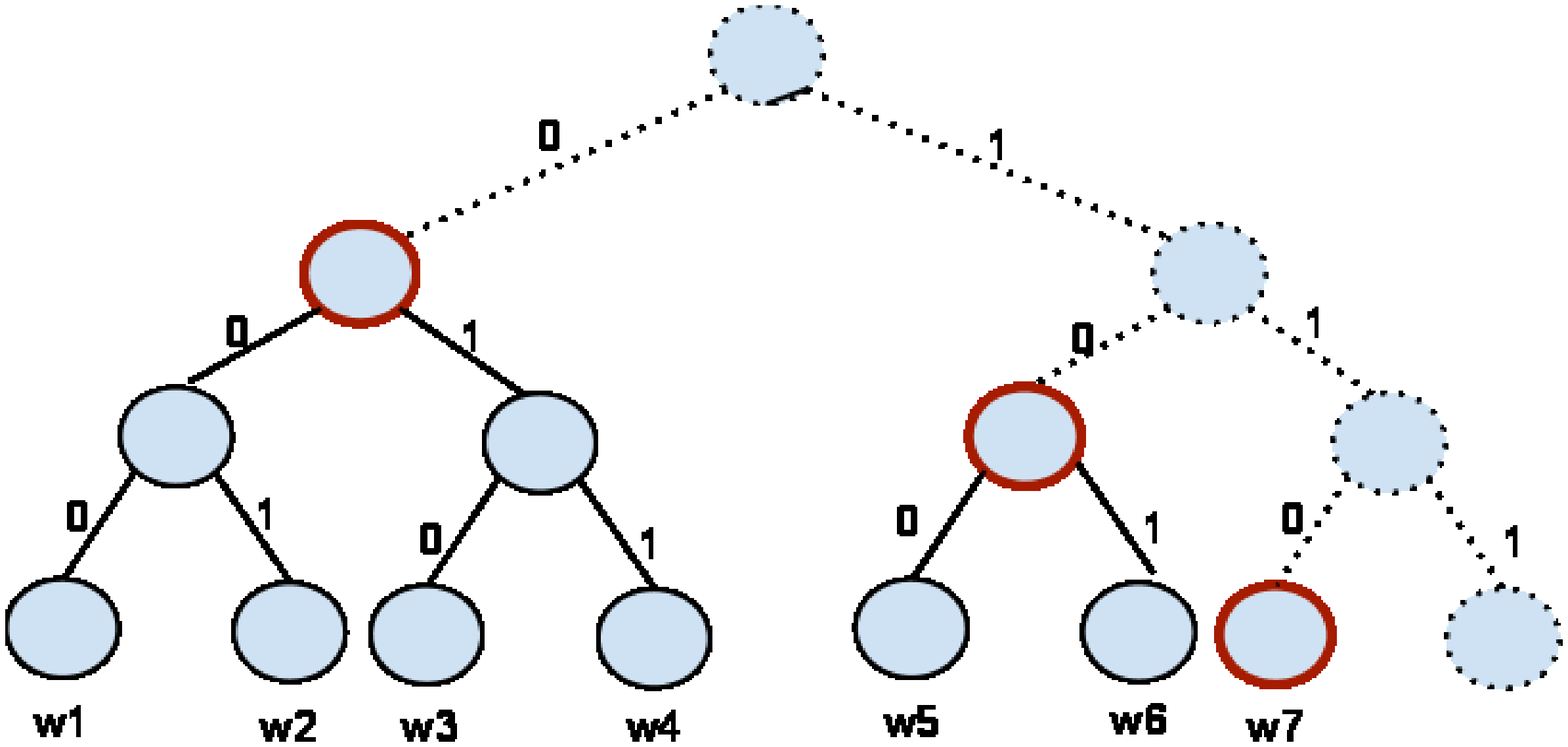}&
    \hspace*{30pt}\includegraphics[width=3in]{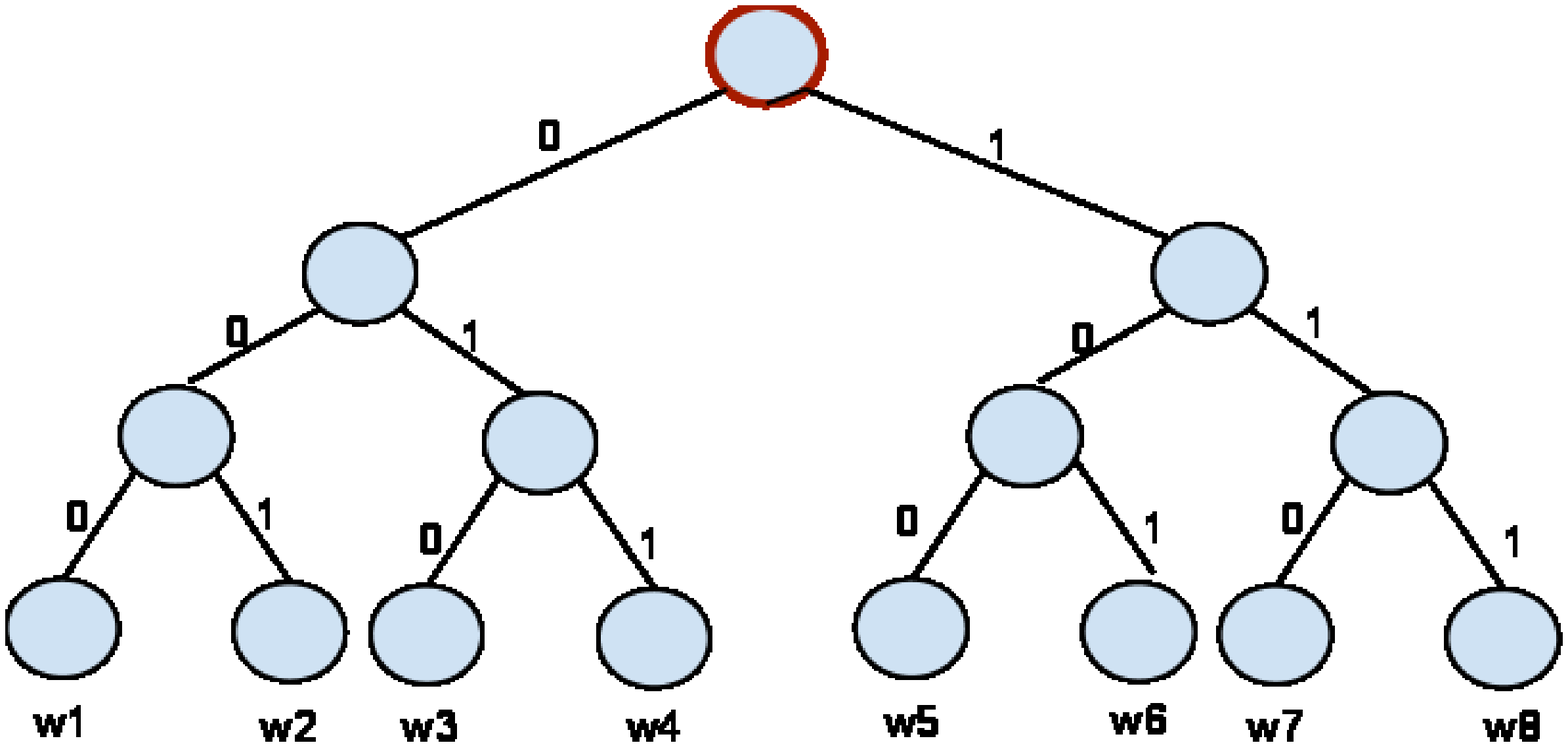}\\
(a)& (b)
\end{tabular}
\caption{Binary Tree for $T=8$. Each node in the tree has noise drawn from $\mathcal{N}(0,\sigma^2\I^d)$ including the leaves. The edge labels on the path from root to any node form the label for that node. {\bf (a)}: $\w_1, \w_2,..,\w_7$ are the input vectors that have arrived till time step $t= 7$. Each internal node is obtained by adding noise from $\mathcal{N}(0, \sigma^2\I^d)$ to the sum of input vectors in the sub-tree rooted at the node. To return the partial sum at $t=7$, return the sum of the nodes in thick red. The dotted nodes are unpopulated. {\bf (b)}: The figure depicts the change in the data structure after the arrival of $\w_8$. Now the partial sum at $t=8$ is obtained by using just one node denoted in thick red.}
\label{fig:logSum}
\end{figure}

\newcommand{\hB}{\hat{\B}}
Following \cite{DNPR}, we first create a binary tree $\B$ where each leaf node corresponds to an input vector in $D$. We denote a node at level $i$ (root being at level $0$) with strings in $\{0,1\}^{i}$ in the following way: For a given node in level $i$ with label $s\in\{0,1\}^i$, the left child of $s$ is denoted with the label $s\circ 0$ and the right child is denoted with $s\circ 1$. Here the operator $\circ$ denotes concatenation of strings. Also, the root is labeled with the empty string $\empty$.

Now, each node $s$ in the tree $\B$ contains two values: $\B_s$ and $\hB_s$, where $\B_s$ is obtained by the summation of vectors in each of the leaves of the sub-tree rooted at $s$, i.e., $\B_s = \sum_{\begin{subarray}{c}j: j  = s\circ r\\ r \in \{0,1\}^{k-i}\end{subarray} } \wu_j$. Also, $\hB_s=\B_s+\b_s$ is a perturbation of $\B_s$, $\b_s\sim {\mathcal N}(0, \sigma^2\I^d)$, and $\sigma$ is as given in Lemma~\ref{tech-lemma}.


A node in the tree is populated only when all the vectors that form the leaves of the sub-tree rooted at the node have arrived. Hence, at time instant $t$ we receive vector $\w_t$ and populate the nodes in the tree $\B$ for which all the leaves in the sub-tree rooted at them have arrived. To populate a node labeled $s$, we compute $\B_s = \B_{s\circ0} + \B_{s\circ1}$, the sum of the corresponding values at its two children in the tree and also $\hat{\B}_s=\B_s+\b_s$,  $\b_s\sim {\mathcal N}(0, \sigma^2\I^d)$. 

As we prove below in Lemma~\ref{tech-lemma}, for a $i$-th level node which is populated and has label $s \in \{0,1\}^i$, $\hat{\B}_s$ contains an $(\epsilon, \delta)$-private sum of the $2^{k-i}$ vectors that correspond to the leaves of the sub-tree rooted at $s$. Now, to output a differentially private partial sum at time step $t$, we add up the perturbed values at the highest possible nodes that can be used to compute the sum. Note, that such a summation would have at most one node at each level. See Figure \ref{fig:logSum} for an illustration. We provide a pseudo-code of our method in Algorithm \ref{Algo:partialSum}.

\begin{algorithm}[tb]
	\caption{Private Sum($\w_t, \B, t, R, \epsilon, \delta, T$)}
	\begin{algorithmic}[1]
		\REQUIRE Data vector $\w_t$, current binary tree $\B$, current vector number $t$, $R$ a bound on $\ltwo{\w_t}$, privacy parameters $\epsilon$ and $\delta$, total number of vectors $T$, dimensionality of vectors $d$
                \IF{$t=1$}
                \STATE Initialize the binary tree $\B$ over $T$ leaves with all nodes
                \STATE $ \sigma^2 \leftarrow \frac{R^2}{\epsilon} \log^2{T}\log{\frac{\log{T }}{\delta}}$
                \ENDIF
		\STATE  $s_{t}\leftarrow $ the string representation of $t$ in binary
		\STATE  $\B_{s_t} \leftarrow \w_t$\hspace{1cm}    //Populate the $s_t$-th entry of $\B$
		\STATE $\hat{\B}_{s_t} \leftarrow \B_{s_t} + \b_{s_t}$, where $\b_{s_t} \sim \mathcal{N}(0, \sigma^2\I^d)$
                \STATE Let $S_t$ is the set of all ancestors $s$ of $s_t$ in the tree $\B$, such that all the leaves in the sub-tree rooted at $s$ are already populated
                \FORALL{$ s\in S_t$}
                \STATE $\B_s \leftarrow \B_{s\circ 0} + \B_{s\circ 1} \qquad \qquad //$ $B_s$ is the value at node with label $s$ (without noise)
                \STATE $\hat{\B}_s \leftarrow \B_s + \b_s$, where $\b_s \sim \mathcal{N}(0, \sigma^2\I^d)$ $\qquad \qquad //$ $\hat{B}_s$ is the noisy value at node with label $s$
                \ENDFOR
                \STATE Find the minimum set of \emph{already} populated nodes in $\B$ that can compute $\sum_{\tau=1}^t\w_\tau$. Formally, starting from the left, for each bit position $i$ in $s_t$ such that $s_t(i) = 1$, form strings $s^q = s_t(1)\circ...\circ s_t(i-1)\circ0$ of length $i$. Let $s^1, s^2,... ,s^Q$ be all such strings, where $Q \leq \log {T}$. For example, if $s_t = 110$ then the strings obtained this way are: $0$ and $10$
	\STATE {\bf Output:} $(\hat W_t=\sum_{q=1}^Q \hat{\B}_{s^q}, \B)$ \label{line:addNoise}
	\end{algorithmic}
	\label{Algo:partialSum}
\end{algorithm}

Theorem \ref{PartialSumPrivacy} states privacy as well as utility guarantees of our partial sums method (Algorithm~\ref{Algo:partialSum}). We first provide a technical lemma which we later use in our proof of Theorem~\ref{PartialSumPrivacy}.

Let $\hat{\B}(D)$ denote the set of all perturbed node values $\hB_s, \forall s$ obtained by applying Algorithm~\ref{Algo:partialSum} on dataset $D$. Also, $D$ and $D'$ be two datasets that differ in at most one entry, say $\w_t$.
 \begin{lemma}
Let $\hat\B_s(D)=\B_s(D)+\b_s$, where $\b_s\sim\mathcal{N}(0,\sigma^2\I^d)$ for  $ \sigma^2 = \frac{R^2}{\epsilon} \log^2{T}\log{\frac{\log{T }}{\delta}}$. Then, 
 for any $t$ and any $\Theta_s \in \mathbb{R}^d$,
$$\as\bs \pdf [ \hat{\B}_{s}(D) = \Theta_s] \leq  e^{\frac{\epsilon}{\log{T}} } \pdf[ \hat{\B}_{s}(D') = \Theta_s]+\frac{\delta}{\log{T}}$$
where $D$ and $D'$ are two datasets differing in exactly one entry.
\label{tech-lemma}
\end{lemma}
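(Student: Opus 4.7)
The plan is to reduce the claim to the standard Gaussian mechanism applied to a single sum query, so the density-ratio analysis used in the proof of Lemma~\ref{Lem:SingGenPriv} carries through almost verbatim, only simpler because here no composition is involved.

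First I would dispose of the easy case by splitting on whether the leaf for the differing index lies in the subtree rooted at $s$. Let $i^\ast$ be the one index at which $D$ and $D'$ differ, with values $\w_{i^\ast}$ and $\w_{i^\ast}'$. If leaf $i^\ast$ is not in the subtree of $s$, then $\B_s(D)=\B_s(D')$, so $\hat{\B}_s(D)$ and $\hat{\B}_s(D')$ are identically distributed and the inequality is trivial. Otherwise, since $\B_s$ is the sum of the $\w_\tau$'s at the leaves of that subtree and $\ltwo{\w_\tau}\le R$, the triangle inequality yields $\ltwo{\B_s(D)-\B_s(D')}\le \ltwo{\w_{i^\ast}-\w_{i^\ast}'}\le 2R$. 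In other words, the $L_2$-sensitivity of the noiseless node value $\B_s$ is at most $2R$, uniformly over $s$.

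Next I would follow the template of Lemma~\ref{Lem:SingGenPriv}. Write $\Delta_s=\B_s(D)-\B_s(D')$ and note that $\hat{\B}_s(D)\sim\mathcal{N}(\B_s(D),\sigma^2\I^d)$ and analogously for $D'$. A direct computation of the log-ratio at an arbitrary point $\Theta_s$ gives
\[
\ln\frac{\pdf[\hat{\B}_s(D)=\Theta_s]}{\pdf[\hat{\B}_s(D')=\Theta_s]}=\frac{1}{\sigma^2}\left(\Delta_s^\top(\Theta_s-\B_s(D))+\tfrac{1}{2}\ltwo{\Delta_s}^2\right).
\]
Define the good set $\G=\{\Theta:|\Delta_s^\top(\Theta-\B_s(D))|\le \sigma\ltwo{\Delta_s}z\}$ with $z=\sqrt{2\ln(\log T/\delta)}$. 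Since $\Delta_s^\top(\hat{\B}_s(D)-\B_s(D))\sim\mathcal{N}(0,\sigma^2\ltwo{\Delta_s}^2)$, Mill's inequality gives $\Pr[\hat{\B}_s(D)\notin\G]\le \delta/\log T$, exactly analogous to equation~\eqref{eq:gg}.

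Finally, on $\G$ the log-ratio is bounded by $\ltwo{\Delta_s}z/\sigma+\ltwo{\Delta_s}^2/(2\sigma^2)\le 2Rz/\sigma+2R^2/\sigma^2$; plugging in $\sigma^2=(R^2/\epsilon)\log^2 T\,\log(\log T/\delta)$ and absorbing constants shows this is at most $\epsilon/\log T$. Splitting $\Pr[\hat{\B}_s(D)\in\Omega]$ into its parts inside and outside $\G$ exactly as in \eqref{eq:txtup}--\eqref{eq:HPB} of Lemma~\ref{Lem:SingGenPriv} then yields the claimed $(\epsilon/\log T,\delta/\log T)$-differential privacy. The one thing to be careful about is tracking constants in $\sigma$ so that the bound on $\G$ really collapses to $\epsilon/\log T$; because this is a single-shot Gaussian mechanism rather than a composition over $T$ steps, there is no further conceptual obstacle.
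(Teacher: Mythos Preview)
Your proposal is correct and follows essentially the same route as the paper: compute the Gaussian density ratio, use Mill's inequality to control the cross term $\Delta_s^\top(\Theta_s-\B_s)$ with failure probability $\delta/\log T$, and bound the remaining exponent by $\epsilon/\log T$ using the choice of $\sigma^2$. The only cosmetic differences are that you explicitly dispose of the case where the differing leaf lies outside the subtree and you take the sensitivity as $2R$ (the paper writes $R$), but neither changes the argument.
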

\begin{proof}
Let $\Delta = \B_s(D) - \B_s(D')$. Note that $\|\Delta\|_2\leq R$. Now, consider the following ratio:
\begin{align}
 \frac{ \pdf [ \hat{\B}_{s}(D) = \Theta_s]}  { \pdf [\hat{\B}_{s}(D') = \Theta_s]} &= \frac { \exp{\frac{\ltwo{\Theta_s - \B_s(D)  }^2}{2\sigma^2}}}{ \exp{\frac{\ltwo{\Theta_s - \B_s(D')  }^2}{2\sigma^2}} }
= \exp{ \frac{ \ltwo{\Delta }^2  - 2\Delta ^T (\B_s(D')- \Theta_s) }{2\sigma^2} },\nonumber\\
&\leq \exp{ \frac{ R^2  + 2|\Delta ^T (\B_s(D')- \Theta_s)| }{2\sigma^2} }.
\label{eq:Bspr}
\end{align}
Now, $\Delta^T(\B_s(D')- \Theta_s)$ follows $\mathcal{N}(0,\ltwo{\Delta}^2\sigma^2 )$.  
For a random variable $V\sim\mathcal{N}(0,1)$, and for all $\gamma>1$,  $\pdf[|V|>\gamma]\leq e^{-\gamma^2/2}$ ( Mill's inequality ). Thus,
  $$\pdf [| \Delta^T(\B_s(D') - \Theta_s)| \geq R \sigma \gamma]\leq \pdf [| \Delta^T(\B_s(D') - \Theta_s)| \geq\ltwo{\Delta} \sigma \gamma] \leq \exp(\frac{-\gamma^2}{2})$$
Lemma follows by setting $ \gamma= 2\sqrt{   \ln{   \frac{\log{T}}{\delta}} }$ in the equation above and combining it with \eqref{eq:Bspr}.
\end{proof}
Next, we provide formal privacy and utility guarantees for Algorithm \ref{Algo:partialSum}. Our proof is inspired by a technique developed by \cite{DNPR}.
\begin{thm}[Algorithm~\ref{Algo:partialSum}: Privacy and Utility] \label{PartialSumPrivacy}
Let $D=\langle\w_1,\cdots,\w_T\rangle$ be a dataset of vectors with $\w_t\in\re^d$ being provided online at each time step $t$. Let $R= \max_{i \leq T}\ltwo{\wu_i} $ and $ \sigma^2 = \frac{R^2}{\epsilon} \log^2{T}\log{\frac{\log{T }}{\delta}}$. Let $W_t=\sum_{\tau=1}^t\w_\tau$ be the partial sum of the entries in the dataset $D$ till the $t$-th entry. Then, $\forall t\in[T]$, following are true for the output of Algorithm \ref{Algo:partialSum} with parameters $(t,\epsilon, \delta, R, T)$.
\begin{itemize}\setlength{\topsep}{-1pt}\setlength{\itemsep}{1pt}
    \item \textbf{Privacy:} The output $\hat W_t$ is $(\epsilon,\delta)$-differentially private.
    \item \textbf{Utility:} The output $\hat W_t$ has the following distribution: $\hat W_t\sim\mathcal{N}(W_t,k\sigma^2\I_d)$, where $k\leq \lceil\log T\rceil.$
\end{itemize}
\end{thm}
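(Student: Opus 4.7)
The plan is to handle utility first since it reduces to a straightforward independence-and-partition argument, and then handle privacy via the composition of Lemma~\ref{tech-lemma} across the root-to-leaf path of the single modified input.

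For \textbf{utility}, I would start from the description of how $\hat W_t$ is formed in Algorithm~\ref{Algo:partialSum}: Step 13 produces strings $s^1,\ldots,s^Q$ of lengths corresponding to the $1$-bits in the binary expansion of $t$, and the subtrees rooted at these nodes partition the leaves $\{1,\ldots,t\}$. Hence $\sum_{q=1}^{Q}\B_{s^q} = \sum_{\tau=1}^{t}\w_\tau = W_t$ deterministically. Writing $\hat\B_{s^q}=\B_{s^q}+\b_{s^q}$ with independent $\b_{s^q}\sim\mathcal{N}(\0,\sigma^2\I_d)$ (the noise at each node is drawn once when the node is populated, and the $s^q$ are distinct nodes so the draws are independent), we get $\hat W_t \sim \mathcal{N}(W_t, Q\sigma^2 \I_d)$. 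Finally $Q$ is just the number of $1$-bits in the binary representation of $t$, which is at most $\lceil \log T \rceil$, giving the claimed $k\leq\lceil \log T\rceil$.

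For \textbf{privacy}, the key observation is that the entire randomized release the adversary can see across time is determined by the collection of perturbed node values $\{\hat\B_s\}$ actually computed; each $\hat W_t$ is a deterministic function of these. For two neighboring datasets $D,D'$ differing only in entry $\w_\tau$, the underlying (noiseless) $\B_s$ values coincide for every node whose subtree does not contain leaf $\tau$, so the only noisy values that carry any privacy cost are the nodes on the root-to-leaf path of leaf $\tau$. The length of this path is at most $\lceil \log T \rceil$. By Lemma~\ref{tech-lemma}, each such node's perturbed value is individually $(\epsilon/\log T,\delta/\log T)$-differentially private with the prescribed $\sigma^2$. Applying the standard (sequential) composition theorem for differential privacy across these at most $\lceil \log T\rceil$ affected nodes yields $(\epsilon,\delta)$-differential privacy for the whole transcript, and hence for $\hat W_t$.

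The main obstacle is really bookkeeping rather than a deep step: one has to argue carefully that releasing $\hat W_t$ at every step $t$ does not compound the privacy cost beyond a single $(\epsilon,\delta)$, even though the same affected node may appear in many partial sums. This is resolved exactly by the observation above — the joint output is post-processing of the $\hat\B_s$, so privacy is measured once per node, not once per query, and only $\log T$ nodes are disturbed by a single-entry change. The only subtle quantitative point is plugging in $\sigma^2 = (R^2/\epsilon)\log^2 T \,\log(\log T/\delta)$ so that Lemma~\ref{tech-lemma} yields per-node parameters $(\epsilon/\log T,\delta/\log T)$; after that the composition finishes the proof.
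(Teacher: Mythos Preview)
Your proposal is correct and follows essentially the same approach as the paper: for utility you use the same independence-and-partition argument over the at most $\lceil\log T\rceil$ selected nodes, and for privacy you prove $(\epsilon,\delta)$-privacy for the full collection $\{\hat\B_s\}$ via Lemma~\ref{tech-lemma} composed over the $\leq\log T$ nodes on the root-to-leaf path of the modified entry, then invoke post-processing for $\hat W_t$. The paper's proof is organized identically, so there is nothing to add.
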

\begin{proof}
{\bf Utility}: Note that Line \ref{line:addNoise} of the Algorithm~\ref{Algo:partialSum} adds at most $\lceil\log{T}\rceil$ vectors $\hB_s$ (corresponding to the chosen nodes of the binary tree $\B$). Now each of the selected vectors $\hB_s$ is generated by adding a noise $\b_s\sim \mathcal{N}(0, \sigma^2\I^d)$. Furthermore, each $\b_s$ is generated independent of other noise vectors. Hence, the total noise in the output partial sum $\hat W_t$ has the following distribution: $\mathcal{N}(0,k\sigma^2\I_d)$, where $k\leq \lceil\log{T}\rceil$.\\ 

\textbf{Privacy:} 
First, we prove that $\hat{\B}(D)$ is $(\epsilon, \delta)$-differentially private. As defined above, let $D$ and $D'$ be the two  datasets (sequences of input vectors) that differ in exactly one entry. Let $S \subset \mathbb{R}^{2T-1}$. Now, $$\frac{\Pr [ \hat{\B}(D) \in  S]}{\Pr [\hat{\B}(D') \in S]} =  \frac{\int_{\Theta \in S}\pdf [ \hat{\B}(D) = \Theta]}{\int_{\Theta \in S} \pdf [\hat{\B}(D') =\Theta]}.$$
Note that noise ($\b_s$) at each node $s$ is generated independently of all the other nodes. Hence,
$$\frac{ \pdf [ \hat{\B}(D) = \Theta]}  { \pdf [\hat{\B}(D') = \Theta]}  =  \frac{ \Pi_{s}\pdf [ \hat{\B}_{s}(D) = \Theta_{s}]}{ \Pi_{s}\pdf [ \hat{\B}_{s}(D') = \Theta_{s}]}. $$
Since $D$ and $D'$ differ in exactly one entry, $B(D)$ and $B(D')$ can differ in at most $\log{T}$ nodes. Thus at most $\log{T}$ ratios in the above product can be different from one. Now, by using Lemma \ref{tech-lemma} to bound each of these ratios and then using composability argument \cite{DMNS,DL} over the $\log{T}$ nodes which have differing values in $\B(D)$ and $\B(D')$,
$$  \Pr [ \hat{\B}(D) = \Theta] \leq e^{\epsilon} \Pr [\hat{\B}(D') \in \Theta] +\delta,$$
i.e., $\hat{\B}(D)$ is $(\epsilon, \delta)$-differentially private.

Now, each partial sum is just a deterministic function of $\hat{\B}(D)$. Hence, $(\epsilon, \delta)$-differential privacy of each partial sum follows directly by $(\epsilon, \delta)$-differential privacy of $\hat{\B}(D)$. %
\end{proof}
\section{Discussion}
\subsection{Other Differentially Private Algorithms}\noindent
Recall that in Section~\ref{sec:IGD}, we described our Private \igd\ algorithm that achieves $\tilde O(\sqrt{T})$ regret for any sequence of strongly convex, Lipschitz continuous functions. While, this  class of functions is reasonably broad, we can further drop the strong convexity condition as well, albeit with higher regret.  To this end, we perturb each $f_t$ and apply \igd\ over $\tilde{f}_t=f_t+\frac{\alpha}{\sqrt{t}}\ltwo{\x-\x_0}$, where $\x_0$ is randomly picked point from the convex set $\C$. We can then show that under this perturbation ``trick'' we can obtain sub-linear regret of $\tilde O(T^{3/4})$. The analysis is similar to our analysis for \igd\ and requires a fairly straightforward modification of the regret analysis by \cite{KB10}.

We now briefly discuss our observations about the Exponentially Weighted Online Optimization (EWOO) \cite{HAS}, another \ocp\ algorithm with sub-linear regret bound. This algorithm does not directly fit into our Private \ocp\ framework, and is  not wide-spread in practice due to relatively inefficient updates (see \cite{HAS} for a detailed discussion). However, just for completeness, we note that by using techniques similar to our Private \ocp\ framework and using \emph{exponential mechanism} (see \cite{MT}), one can analyze this algorithm as well to guarantee differential privacy along with $\tilde O(\sqrt T)$ regret.
\subsection{Application to Offline Learning}\noindent
\label{sec:offlineLearn}
In Section~\ref{sec:OCP}, we proposed a generic online learning framework that can be used to obtain differentially private online learning algorithms with good regret bounds. Recently, \cite{SKT} showed that online learning algorithms with good regret bounds can be used to solve several offline learning problems as well. In this section, we exploit this connection to provide a generic differentially private framework for a large class of offline learning problems as well.

In a related work, \cite{CM} also proposed a method to obtain differentially private algorithms for offline learning problems. However, as discussed later in the section, our method covers a wider range of learning problems, is more practical and obtains better error bounds for the same level of privacy. 

First, we describe the standard offline learning model that we use. In typical offline learning scenarios, one receives (or observes) a set of {\em training} points sampled from some fixed distribution and also a loss function parametrized by some hidden parameters. Now, the goal is to learn the hidden parameters such that the expected loss over the same distribution is minimized.

Formally, consider a domain $\mathcal{Z}$ and an arbitrary distribution $\D_\mathcal{Z}$ over $\mathcal{Z}$ from which training data is generated. Let $D=\langle \z_1,\cdots,\z_T\rangle$ be a training dataset, where each $\z_i$ is drawn i.i.d. from the distribution $\D_\mathcal{Z}$. Also, consider a loss function $\ell:\C\times\mathcal{Z}\to\re^+$, where $\C\subseteq\re^d$ be a (potentially unbounded) convex set. Let $\ell(\cdot;\cdot)$ be a convex function, $L$-Lipschitz in both the parameters and let $\ell(0;\z)\leq 1, \forall \z\in \mathcal{Z}$. Intuitively, the loss function specifies goodness of a learned model $\x\in \C$ w.r.t. to the training data.  Hence, the goal is to solve the following minimization problem (also called Risk Minimization):
\begin{equation}\as\bs
  \label{eq:rm}
  \min_{\x\in\mathcal{C}} \E_{\z\sim\D_\mathcal{Z}}[\ell(\x;\z)].
\end{equation}
Let $\x^*$ be the optimal solution to \eqref{eq:rm}, i.e., $\x^*=\arg\min_{\x\in\mathcal{C}} \E_{\z\sim\D_\mathcal{Z}}[\ell(\x;\z)].$.
Recently, \cite{SKT} provided an algorithm to obtain an additive approximation to \eqref{eq:rm} via online convex programming (\ocp). The algorithm of \cite{SKT} is as follows: execute any reasonable \ocp\ algorithm $\A$ (like \igd\ or \giga\,) on the function sequence $F=\langle\ell(\x;\z_1)+\frac{\alpha}{2}\|\x\|^2,\cdots,\ell(\x;\z_T)+\frac{\alpha}{2}\|\x\|^2\rangle$ in an online fashion. Furthermore, if the set $\C$ is an unbounded set, then it can be set to be an $L_2$ ball of radius $\|\x^*\|_2$, i.e,
$$\as\bs\C=\{\x: \x\in \re^d, \|\x\|_2\leq \|\x^*\|_2\}.$$
Now, let $\x_1,\cdots,\x_T$ be the sequence of outputs produced by $\A$. Then, output $\tilde \x=\frac{1}{T}\sum_{t=1}^T \x_t$ as an approximation for $\x^*$. Theorem~\ref{thm:skt} bounds additional error incurred by $\tilde\x$ in comparison to $\x^*$. Next, to produce differentially private output we can add appropriate noise to the output $\tilde{\x}$. We present a detailed pseudo-code in Algorithm~\ref{alg:stoc_diff}. For simplicity of presentation, we instantiate our framework with the \igd\ algorithm as the underlying \ocp\ algorithm.
\begin{algorithm}[tb]
	\caption{Private Offline Learning (\pol)}
	\begin{algorithmic}[1]
		\STATE {\bfseries Input:} Input dataset $D=\langle \z_1,\cdots,\z_T\rangle$ and the convex set $\C$
                \STATE {\bfseries Parameter:} Privacy parameters $(\epsilon_p, \delta)$, generalization error parameter $\epsilon_g$, Lipschitz bound $L$ on the loss function $\ell$, bound on $\|\x^*\|_2$
                \STATE If $\C=\R^d$ then set $\C=\{\x: \x\in \R^d, \|\x\|_2\leq \|\x^*\|_2\}.$
		\STATE Choose $\x_1$ randomly from $\C$
                \STATE Set $\alpha\leftarrow \frac{\epsilon_g}{\|\x^*\|_2^2}$
		\STATE Initialize $\bm s= \x_1$
        \FOR{$t=1$ to $T-1$}
                \STATE \textbf{Learning rate:} $\eta_t=\frac{1}{\alpha t}$
                \STATE \textbf{IGD Update:} $\ \ \x_{t+1}\leftarrow \argmin_{\x\in \C} \left(\frac{1}{2} \|\x-\x_t\|_2^2+\eta_t (\ell(\x;\z_t)+\frac{\alpha}{2}\|\x\|_2^2)\right)$
                \STATE \textbf{Store sum:} $\bm s\leftarrow\bm s+\x_{t+1}$
        \ENDFOR
        \STATE \textbf{Average:} $\tilde\x\leftarrow \frac{\bm s}{T}$
        \STATE \textbf{Noise Addition:} $\bar\x\leftarrow \tilde\x+\b$, where $\b\sim\mathcal{N}(0^d,\beta^2\I^d )$ and $\beta=\frac{2\sqrt{2}(L+\alpha\|\x^*\|_2)\ln T}{T\epsilon_p}\sqrt{\ln\frac{1}{\delta}+\epsilon_p}$
        \STATE Output $\hat\x=\argmin_{\x\in \C}\left(\|\x-\bar\x\|_2^2\right)$					
	\end{algorithmic}
	\label{alg:stoc_diff}
\end{algorithm}

First, we show that \pol\ (Algorithm \ref{alg:stoc_diff}) is $(\epsilon,\delta)$-differentially private.
\begin{thm}[\pol\ Privacy]
The Private Offline Learning (\pol) algorithm (see Algorithm~\ref{alg:stoc_diff}) is $(\epsilon_p,\delta)$-differentially private.
\label{Thm:GenPri_off}
\end{thm}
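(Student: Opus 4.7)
The plan is to reduce privacy of the whole algorithm to the $L_2$-sensitivity of the average iterate $\tilde{\x}$ and then invoke the Gaussian mechanism, followed by post-processing under the projection step.

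First, I would note that $\hat\x$ is a deterministic function of $\bar\x$ (just the Euclidean projection onto $\C$), and $\bar\x$ is obtained from $\tilde\x$ by adding isotropic Gaussian noise. By the post-processing property of differential privacy, it therefore suffices to show that the map $D = \langle \z_1,\dots,\z_T\rangle \mapsto \bar\x$ is $(\epsilon_p,\delta)$-differentially private, which by the Gaussian mechanism reduces to bounding the $L_2$-sensitivity of $\tilde\x$.

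Next, I would bound the sensitivity of $\tilde\x$. Consider two neighboring sequences $D$ and $D'$ differing in one entry $\z_\tau$, and let $\x_t,\x_t'$ be the corresponding \igd\ iterates on the regularized losses $\tilde f_t(\x) = \ell(\x;\z_t) + \tfrac{\alpha}{2}\|\x\|_2^2$. Since $\ell(\cdot;\z)$ is $L$-Lipschitz and $\tfrac{\alpha}{2}\|\x\|_2^2$ is $\alpha\|\x^*\|_2$-Lipschitz over $\C = \{\x : \|\x\|_2\leq \|\x^*\|_2\}$, each $\tilde f_t$ is $(L + \alpha\|\x^*\|_2)$-Lipschitz on $\C$. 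Moreover, $\tilde f_t$ is $\alpha$-strongly convex, so the hypotheses of Lemma~\ref{lemma:sens_igd} are satisfied. Applying that lemma yields
\[
\|\x_t - \x_t'\|_2 \;\leq\; \frac{2(L + \alpha\|\x^*\|_2)}{t}, \qquad 1\leq t\leq T.
\]
Using the triangle inequality on $\tilde\x = \tfrac{1}{T}\sum_{t=1}^T \x_t$,
\[
\|\tilde\x - \tilde\x'\|_2 \;\leq\; \frac{1}{T}\sum_{t=1}^T \|\x_t - \x_t'\|_2 \;\leq\; \frac{2(L+\alpha\|\x^*\|_2)}{T}\sum_{t=1}^T \frac{1}{t} \;\leq\; \frac{2(L+\alpha\|\x^*\|_2)\ln T}{T} \cdot (1 + o(1)),
\]
which gives an $L_2$-sensitivity bound essentially equal to $\Delta := \tfrac{2(L+\alpha\|\x^*\|_2)\ln T}{T}$.

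Finally I would close the argument with the standard Gaussian mechanism: adding noise $\b\sim\mathcal{N}(0,\beta^2 \I^d)$ with $\beta = \tfrac{\Delta\sqrt{2}}{\epsilon_p}\sqrt{\ln(1/\delta)+\epsilon_p}$ ensures $(\epsilon_p,\delta)$-differential privacy of $\bar\x$; plugging in $\Delta$ matches the prescribed noise scale in Algorithm~\ref{alg:stoc_diff}. The statement for $\hat\x$ then follows by post-processing. The main technical point that needs care is ensuring the regularized losses fed to \igd\ genuinely satisfy the hypotheses of Lemma~\ref{lemma:sens_igd} on the truncated set $\C$ (both the strong convexity coming from the $\tfrac{\alpha}{2}\|\x\|_2^2$ term and the uniform Lipschitz bound $L+\alpha\|\x^*\|_2$); once this is verified, the rest is a standard sensitivity-to-Gaussian-mechanism reduction.
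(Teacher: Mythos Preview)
Your proposal is correct and follows essentially the same route as the paper: bound the $L_2$-sensitivity of $\tilde\x$ via the triangle inequality and the \igd\ sensitivity lemma (Lemma~\ref{lemma:sens_igd}) applied to the $\alpha$-strongly convex, $(L+\alpha\|\x^*\|_2)$-Lipschitz regularized losses on $\C$, then apply the Gaussian mechanism (the paper phrases this as ``a proof similar to that of Lemma~\ref{Lem:SingGenPriv}'') and post-processing for the projection. The only cosmetic difference is indexing: the paper writes the per-iterate bound as $\tfrac{2L'}{t-1}$ starting from $t=2$ (since $\x_1$ is data-independent), whereas you use $\tfrac{2L'}{t}$ from $t=1$; both yield the same $\tfrac{2L'\ln T}{T}$ sensitivity and the noise scale you derive matches Algorithm~\ref{alg:stoc_diff} exactly.
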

\begin{proof}
Recall that to prove differential privacy, one needs to show that changing one training points from the dataset $D$ will not lead to significant changes in our algorithm's output $\hat{\x}$ which is a perturbation of $\tilde{\x}=\frac{1}{T}\sum_{t=1}^T \x_t$. Hence, we need to show that the $L_2$-sensitivity (see Definition~\ref{defn:sensitivity}) of $\tilde{\x}$ is low.

Now let $\x'_1,\cdots,\x'_T$ be the sequence of outputs produced by the \igd\ algorithm used in Algorithm \ref{alg:stoc_diff} when executed on a dataset $D'$ which differs in exactly one entry from $D$. To estimate the sensitivity of $\tilde \x$, we need to bound $\ltwo{\frac{1}{T}\sum_{t=1}^T (\x_t-\x'_t)}.$ Now, using triangle inequality and Lemma~\ref{lemma:sens_igd}, we get:
\begin{equation}\as\bs
  \label{eq:sens_stoc}
  \ltwo{\frac{1}{T}\sum_{t=1}^T (\x_t-\x'_t)}\leq \frac{1}{T}\sum_{t=1}^T\|\x_t-\x'_t\|_2\leq \frac{1}{T}\sum_{t=2}^T\frac{2 L'}{t-1}\leq \frac{2L\ln T}{T},
\end{equation}
where $L'$ is the maximum Lipschitz continuity coefficient of $\ell(\x,\z_t)+\frac{\alpha}{2}\|\x\|_2^2, \forall t$ over the set $\C$. Using the fact that $\|\C\|_2=\|\x^*\|_2$, we obtain $L'=L+\alpha\|\x^*\|_2$.

The theorem now follows using $L_2$-sensitivity of $\tilde{\x}$ (see \eqref{eq:sens_stoc}) and a proof similar to that of Lemma \ref{Lem:SingGenPriv}.
\end{proof}

With the privacy guarantee in place, we now focus on the utility of Algorithm \ref{alg:stoc_diff}, i.e., approximation error for the Risk Minimization problem \eqref{eq:rm}. We first rewrite the approximation error incurred by $\tilde{\x}=\frac{1}{T}\sum_{t=1}^T\x_t$, as derived by \cite{SKT}.
\begin{thm}[Approximation Error in Risk Minimization (Eq. \ref{eq:rm}) \cite{SKT}]
Let $\R_\A(T)$ be the regret for the online algorithm $\A$. Then with probability at least $1-\gamma$,
$$\as\bs\E_{z\sim\D_\mathcal{Z}}[\ell(\tilde\x;z)]-\E_{z\sim\D_\mathcal{Z}}[\ell(\x^*;z)]\leq \frac{\alpha}{2}\|\x^*\|^2+\frac{\R_\A(T)}{T}+\frac{4}{T}\sqrt{\frac{L'^2\R_\A(T)\ln(\frac{4\ln T}{\gamma})}{\alpha}}+\frac{\max\{\frac{16L'^2}{\alpha},6\}\ln(\frac{4\ln T}{\gamma})}{T}$$
where $L'=L+\alpha \|\x^*\|_2$, $L$ is the Lipschitz continuity bound on the loss function $\ell$ and $\alpha$ is the strong convexity parameter of the function sequence $F$.
\label{thm:skt}
\end{thm}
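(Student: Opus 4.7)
\medskip
\noindent\textbf{Proof Proposal.} The plan is to follow the standard online-to-batch conversion argument of Cesa-Bianchi–Conconi–Gentile refined by Kakade–Tewari. First, by convexity of $\ell(\cdot;z)$ in its first argument and Jensen's inequality applied to the average $\tilde{\x}=\frac{1}{T}\sum_{t=1}^T \x_t$, we get
\[
\E_{z\sim\D_\mathcal{Z}}[\ell(\tilde{\x};z)]-\E_{z\sim\D_\mathcal{Z}}[\ell(\x^*;z)]\;\leq\;\frac{1}{T}\sum_{t=1}^T\E_{z\sim\D_\mathcal{Z}}[\ell(\x_t;z)]-\E_{z\sim\D_\mathcal{Z}}[\ell(\x^*;z)].
\]
I would then introduce the martingale difference $\xi_t=\E_{z}[\ell(\x_t;z)-\ell(\x^*;z)]-(\ell(\x_t;z_t)-\ell(\x^*;z_t))$, noting that $\x_t$ is measurable with respect to $z_1,\dots,z_{t-1}$ so that $\E[\xi_t\mid \mathcal{F}_{t-1}]=0$. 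This splits the desired quantity into a ``regret piece'' $\frac{1}{T}\sum_t (\ell(\x_t;z_t)-\ell(\x^*;z_t))$ plus the martingale term $\frac{1}{T}\sum_t\xi_t$.

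The regret piece is controlled using the fact that $\A$ is run on $\tilde{f}_t(\x)=\ell(\x;z_t)+\frac{\alpha}{2}\|\x\|^2$: from $\sum_t\tilde{f}_t(\x_t)\le \sum_t\tilde{f}_t(\x^*)+\R_\A(T)$ we subtract the regularizer and obtain
\[
\frac{1}{T}\sum_{t=1}^T\ell(\x_t;z_t)\;\leq\;\frac{1}{T}\sum_{t=1}^T\ell(\x^*;z_t)+\frac{\alpha}{2}\|\x^*\|^2+\frac{\R_\A(T)}{T},
\]
which contributes the $\frac{\alpha}{2}\|\x^*\|^2+\R_\A(T)/T$ terms of the bound. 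The key auxiliary fact I would extract here is the \emph{second-order regret inequality} that follows from the $\alpha$-strong convexity of each $\tilde{f}_t$: for any comparator $\u$, $\sum_t\tilde{f}_t(\x_t)-\sum_t\tilde{f}_t(\u)\geq \frac{\alpha}{2}\sum_t\|\x_t-\u\|^2$, so combined with the regret bound applied at $\u=\x^*$ we obtain $\sum_t \|\x_t-\x^*\|^2\leq 2\R_\A(T)/\alpha$. This is the quantity that governs the variance of the martingale.

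Next, to control $\frac{1}{T}\sum_t\xi_t$, observe that Lipschitz continuity of $\ell$ gives $|\xi_t|\leq 2L'\|\x_t-\x^*\|$ and $\E[\xi_t^2\mid\mathcal{F}_{t-1}]\leq L'^2\|\x_t-\x^*\|^2$. I would then apply Freedman's inequality for martingales with variance proxy $V_T=\sum_t\E[\xi_t^2\mid\mathcal{F}_{t-1}]$ and range bound; the catch is that $V_T$ is random, so I would use a peeling argument over dyadic levels of $V_T$ from $\max\{L'^2/\alpha,\text{const}\}$ up to $L'^2 T$, each level combined with a union bound, which is precisely where the $\ln(4\ln T/\gamma)$ factor appears. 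Substituting the deterministic bound $V_T\leq L'^2\sum_t\|\x_t-\x^*\|^2 \leq 2L'^2\R_\A(T)/\alpha$ (from the strong-convexity inequality above) into Freedman's bound gives a concentration term of order $\frac{1}{T}\sqrt{\frac{L'^2\R_\A(T)\ln(\ln T/\gamma)}{\alpha}}+\frac{L'^2\ln(\ln T/\gamma)}{\alpha T}$, matching the last two terms of the claimed bound. Finally, the empirical average $\frac{1}{T}\sum_t\ell(\x^*;z_t)$ concentrates around $\E_z[\ell(\x^*;z)]$ by a standard Hoeffding/Bernstein argument and is absorbed into the same high-probability event. The hardest step will be the Freedman + peeling argument that trades the random variance for its strong-convexity-driven upper bound while producing the precise $\ln(4\ln T/\gamma)$ factor.
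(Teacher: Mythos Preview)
This theorem is not proved in the paper; it is quoted from \cite{SKT}. Your overall outline (Jensen, martingale decomposition, regret piece yielding $\frac{\alpha}{2}\|\x^*\|^2+\R_\A(T)/T$, and Freedman's inequality with a dyadic peeling over the random variance) is exactly the Kakade--Tewari online-to-batch argument, so the skeleton is right.

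There is, however, a real gap in your variance control. The ``second-order regret inequality'' you invoke,
\[
\sum_t\tilde{f}_t(\x_t)-\sum_t\tilde{f}_t(\u)\;\geq\;\frac{\alpha}{2}\sum_t\|\x_t-\u\|^2\quad\text{for all }\u,
\]
is false as stated: take $\tilde{f}_t(\x)=\frac{\alpha}{2}\|\x\|^2$, $\x_t=\0$, and any $\u\neq\0$; the left-hand side is $-\frac{T\alpha}{2}\|\u\|^2<0$ while the right-hand side is positive. Strong convexity only gives $\tilde{f}_t(\x_t)\geq\tilde{f}_t(\u)+\langle\grad\tilde{f}_t(\u),\x_t-\u\rangle+\frac{\alpha}{2}\|\x_t-\u\|^2$, and the gradient cross terms do not cancel when summed because the $\x_t$ vary with $t$. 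Consequently you cannot conclude $\sum_t\|\x_t-\x^*\|^2\leq 2\R_\A(T)/\alpha$ from the regret bound alone, and your subsequent substitution of this quantity into Freedman's bound is unjustified.

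The argument in \cite{SKT} does not bound $\sum_t\|\x_t-\x^*\|^2$ deterministically via regret. Instead it uses a \emph{self-bounding} step: strong convexity of the regularized \emph{population} risk at its minimizer gives $\|\x_t-\x^*\|^2\leq\frac{2}{\alpha}$ times the excess (regularized) risk of $\x_t$, so the conditional variance $\E[\xi_t^2\mid\mathcal{F}_{t-1}]$ is bounded by a constant multiple of the very quantity $\sum_t\bigl(\E_z[\ell(\x_t;z)]-\E_z[\ell(\x^*;z)]\bigr)$ that you are trying to control. Freedman's inequality then yields an implicit inequality of the form $S\leq A+\sqrt{BS}+C$ in the target sum $S$, which one solves as a quadratic; the peeling over levels of $V_T$ still produces the $\ln(4\ln T/\gamma)$ factor, but the variance proxy is tied back to $S$, not to $\R_\A(T)$. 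Replacing your deterministic variance bound with this self-bounding step repairs the proof.
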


\begin{thm}[\pol\ Utility (Approximation Error in Eq. \ref{eq:rm})]
Let $L$ is the Lipschitz bound on the loss function $\ell$ and $T$ be the total number of points in the training dataset $D=\{\z_1, \dots, \z_T\}$. Let $(\epsilon_p, \delta)$ be differential privacy parameters, and $d$ be the dimensionality. Then, with probability at least $1-\gamma$,
$$\as\bs\E_{z\sim\D_\mathcal{Z}}[\ell(\hat\x;z)]-\min_{\x\in\C}\E_{z\sim\D_\mathcal{Z}}[\ell(\x;z)]\leq \epsilon_g,$$
when the number of points sampled ($T$) follows,
$$\as\bs T\geq C\max \left(\frac{\sqrt{d}L(L+\epsilon_g/\|\x^*\|_2)\sqrt{\ln\frac{1}{\gamma}\ln\frac{1}{\delta}}}{\epsilon_g\epsilon_p},\frac{(L+\epsilon_g/\|\x^*\|_2)^2\|\x^*\|_2^2\ln T \ln\frac{\ln T}{\gamma}}{\epsilon_g^2}\right),$$
where $C>0$ is a  global constant.
%
\label{thm:util_stoc}
\end{thm}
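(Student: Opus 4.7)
The plan is to decompose the excess population risk of the released estimator $\hat{\x}$ into two pieces: (i) the statistical error of the averaged iterate $\tilde{\x} = \tfrac{1}{T}\sum_{t=1}^T \x_t$ produced by \igd\ on the regularized losses, and (ii) the additional error caused by the privacy perturbation $\hat{\x} = \Pi_{\C}(\tilde{\x} + \b)$. Concretely, I would write
\[
\E_{z}[\ell(\hat{\x};z)] - \E_{z}[\ell(\x^{*};z)]
\;=\;\bigl(\E_{z}[\ell(\hat{\x};z)] - \E_{z}[\ell(\tilde{\x};z)]\bigr)
\;+\;\bigl(\E_{z}[\ell(\tilde{\x};z)] - \E_{z}[\ell(\x^{*};z)]\bigr),
\]
and bound each summand by $\epsilon_g/2$ (up to constants) under the stated sample-size assumption, then apply a union bound over the two associated failure events.

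For the second summand I would invoke Theorem~\ref{thm:skt} with the regret bound of \igd\ on the $\alpha$-strongly convex regularized losses, $\R_{\igd}(T) = O(L'^2 \log T / \alpha)$ where $L' = L + \alpha\|\x^{*}\|_2$ is the Lipschitz constant of $\ell(\cdot;z)+\tfrac{\alpha}{2}\|\cdot\|^2$ on $\C$. Choosing $\alpha = \epsilon_g/\|\x^{*}\|_2^2$ as in the algorithm makes the leading term $\tfrac{\alpha}{2}\|\x^{*}\|_2^2 = \epsilon_g/2$, while the three remaining terms in Theorem~\ref{thm:skt} are $\tilde{O}(L'^2\log(\ln T/\gamma)/(\alpha T))$. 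Forcing these to be at most a constant multiple of $\epsilon_g$ and substituting $\alpha$ yields the second branch of the $\max$ in the sample-complexity bound, namely $T = \tilde{\Omega}((L+\epsilon_g/\|\x^{*}\|_2)^2 \|\x^{*}\|_2^2 \ln T \ln(\ln T/\gamma)/\epsilon_g^2)$.

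For the first summand, I would use the fact that projection onto $\C$ is non-expansive and that $\ell(\cdot;z)$ is $L$-Lipschitz to get $|\ell(\hat{\x};z) - \ell(\tilde{\x};z)| \leq L\|\hat{\x}-\tilde{\x}\|_2 \leq L\|\b\|_2$. Since $\b \sim \mathcal{N}(\0^d, \beta^2 \I^d)$, the norm $\|\b\|_2$ is $\chi$-distributed with mean $O(\beta\sqrt{d})$, and by Chebyshev's (or a tighter $\chi^2$ concentration) inequality, $\|\b\|_2 \leq O(\beta\sqrt{d/\gamma})$ with probability at least $1-\gamma/2$. Plugging in the value of $\beta$ specified in Algorithm~\ref{alg:stoc_diff}, namely $\beta = \tfrac{2\sqrt{2}L'\ln T}{T\epsilon_p}\sqrt{\ln(1/\delta)+\epsilon_p}$, and requiring $L\cdot O(\beta\sqrt{d/\gamma}) \leq \epsilon_g/2$, solving for $T$ gives the first branch of the $\max$: $T = \Omega(\sqrt{d}\, L\,L'\,\sqrt{\ln(1/\gamma)\ln(1/\delta)}/(\epsilon_g\epsilon_p))$ after substituting $L' = L+\epsilon_g/\|\x^*\|_2$.

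The main obstacle is bookkeeping: correctly tracking how $\alpha = \epsilon_g/\|\x^*\|_2^2$ propagates through $L'$ (which appears in both $\R_{\igd}(T)$ and in $\beta$), and combining the failure probabilities of Theorem~\ref{thm:skt} and the Gaussian tail bound by a union bound so that the final statement holds with probability $1-\gamma$. Once these are handled, the conclusion follows by taking $T$ at least the maximum of the two derived lower bounds. I would note that any constants absorbed into the $O(\cdot)$ and $\tilde{O}(\cdot)$ symbols can be merged into the single global constant $C$ appearing in the theorem statement.
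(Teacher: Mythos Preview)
Your proposal is correct and follows essentially the same route as the paper: the same two-term decomposition, the same use of Theorem~\ref{thm:skt} together with $\R_{\igd}(T)=O(L'^2\log T/\alpha)$ and the choice $\alpha=\epsilon_g/\|\x^*\|_2^2$ for the statistical piece, and the same Lipschitz-plus-Gaussian-norm bound $L\|\b\|_2$ for the perturbation piece. The only point to watch is that Chebyshev on $\|\b\|_2$ gives a $1/\sqrt{\gamma}$ factor, whereas the theorem statement has $\sqrt{\ln(1/\gamma)}$; you need the sub-Gaussian tail bound you parenthetically mention (the paper uses $\|\b\|_2\le 3\sqrt{d}\,\beta\sqrt{\ln(1/\gamma)}$ with probability $\ge 1-\gamma/2$) to recover the stated dependence.
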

\begin{proof}
To prove the result, we upper bound $\E_{z\sim\D_\mathcal{Z}}[\ell(\hat\x;z)]-\E_{z\sim\D_\mathcal{Z}}[\ell(\x^*;z)]$ as:
\begin{align}
\E_{z\sim\D_\mathcal{Z}}[\ell(\hat\x;z)]-\E_{z\sim\D_\mathcal{Z}}[\ell(\x^*;z)]&=\E_{z\sim\D_\mathcal{Z}}[\ell(\hat\x;z)]-\E_{z\sim\D_\mathcal{Z}}[\ell(\tilde\x;z)]+\E_{z\sim\D_\mathcal{Z}}[\ell(\tilde\x;z)]-\E_{z\sim\D_\mathcal{Z}}[\ell(\x^*;z)],\nonumber\\
&\leq L\ltwo{\hat{\x}-\tilde{\x}}+\E_{z\sim\D_\mathcal{Z}}[\ell(\tilde\x;z)-\ell(\x^*;z)],\nonumber\\
&= L\ltwo{\b}+\E_{z\sim\D_\mathcal{Z}}[\ell(\tilde\x;z)-\ell(\x^*;z)],
\label{eq:geup}
\end{align}
where the second inequality follows using Lipschitz continuity of $\ell$ and the last equality follows by the noise addition step (Step 13) of Algorithm~\ref{alg:stoc_diff}.

From the tail bound on the norm of Gaussian random vector, it follows that with probability at least $1-\frac{\gamma}{2}$,
\begin{equation}
\ltwo{\b}\leq 3\sqrt{d}\beta\sqrt{\ln\frac{1}{\gamma}}\leq 12\sqrt{d}L'\frac{\ln T}{T\epsilon_p}\sqrt{\ln\frac{1}{\gamma}\ln\frac{1}{\delta}},
\label{eq:bup}
\end{equation}
where  $L'=L+\epsilon_g/\|\x^*\|_2$, $L$ is the Lipschitz continuity parameter of $\ell$. Note that in Line 5 of Algorithm \ref{alg:stoc_diff} we set the strong convexity parameter $\alpha=\frac{\epsilon_g}{\ltwo{\x^*}^2}$.

Now, regret bound of \igd\ is given by:
\begin{equation}
\label{eq:igd_reg1}
R_{\igd}(T)=O(\epsilon_g+\frac{L'}{\alpha}\ln T),
\end{equation}
Thus, by combining \eqref{eq:geup}, \eqref{eq:bup}, \eqref{eq:igd_reg1}, and Theorem~\ref{thm:skt}, with probability at least $1-\gamma$,
$$\E_{z\sim\D_\mathcal{Z}}[\ell(\hat\x;z)]-\min_{\x\in\C}\E_{z\sim\D_\mathcal{Z}}[\ell(\x;z)]\leq \frac{\epsilon_g}{2}+C\frac{\sqrt{d}L(L+\frac{\epsilon_g}{\|\x^*\|_2})\ln T\sqrt{\ln\frac{1}{\gamma}\ln\frac{1}{\delta}}}{\epsilon_p T}+C\frac{(L+\frac{\epsilon_g}{\|\x^*\|_2})^2\|\x^*\|_2^2\ln T\ln \frac{\ln T}{\gamma}}{\epsilon_g T},$$
where $C>0$ is a global constant.

The result now follows by bounding the RHS above by $\epsilon_g$.
\end{proof}
We note that although our Algorithm~\ref{alg:stoc_diff} and analysis assumes that the underlying \ocp\ algorithm is \igd, however our algorithm and analysis can be easily adapted to use with any other \ocp\ algorithm by plugging in the regret bound and $L_2$ sensitivity of the corresponding \ocp\ algorithm.\\
{\bf Comparison to existing differential private offline learning methods:} Recently, \cite{CM} proposed  two differentially private frameworks for a wide range of offline learning problems, namely, \emph{output perturbation} and \emph{objective perturbation}. However, our method has three significant advantages over both the methods of \cite{CM}:
\begin{itemize}\setlength{\topsep}{-10pt}
\setlength{\itemsep}{-5pt}
\item {\bf Handles larger class of learning problems}: Note that both privacy analysis (Theorem~\ref{Thm:GenPri_off}) and utility analysis (Theorem~\ref{thm:util_stoc}) only require the loss function $\ell$ to be a convex, Lipschitz continuous function. In fact, the loss function is not required to be even differentiable. Hence, our method can handle {\em hinge loss}, a popular loss function used by Support Vector Machine (SVM). In comparison, \cite{CM} requires the loss function $\ell$ to be {\em twice} differentiable and furthermore, the gradient should be Lipschitz continuous. \\
Furthermore, our method can be used for minimizing risk (see \eqref{eq:rm}) over any fixed convex constraint set $\C$. In contrast, \cite{CM} requires the  set $\C$ to be the complete vector space $\re^d$.
\item {\bf Better error bound}: Theorem 18 of \cite{CM} bounds the sample size by $T=O(\frac{\|\x^*\|^2_2\ln \frac{1}{\delta}}{\epsilon_g^2}+\frac{d\|\x^*\|_2}{\epsilon_g\epsilon_p})$, which is same as our bound (see Theorem~\ref{thm:util_stoc}) except for an additional $\sqrt{d}$ factor. Hence, our analysis provides tighter error bound w.r.t. dimensionality of the space. We believe the difference is primarily due to our usage of Gaussian noise instead of Gamma noise added by \cite{CM}. 
\item {\bf More practical}: Our method provides an explicit iterative method for solving \eqref{eq:rm} and hence provides differential privacy guarantees even if the algorithm stops at any step $T$. In contrast, \cite{CM} assumes optimal solution to a certain optimization problem, and it is not clear how the differential privacy guarantees of \cite{CM} extends when the optimization algorithm is forced to halt prematurely and hence might not give the optimal solution. 
\end{itemize}
In a related work, \cite{RBHT} also proposed a differentially private framework for offline learning. However, \cite{RBHT} compares the point-wise convergence of the obtained solution $\hat{\x}$ to the private optimum of true risk minimizer $\x^*$, where as \cite{CM} and our method (see Algorithm~\ref{alg:stoc_diff}) compare the approximation error; hence, results of \cite{RBHT} are incomparable to our results. 

\section{Empirical Results}\noindent
\label{sec:EmpRel}
In this section we study the privacy and utility (regret) trade-offs for two of our private \ocp\ approaches under different practical settings. Specifically, we consider the practically important problem of online linear regression and online logistic regression. For online linear regression we apply our \pqftl\ approach (see Algorithm~\ref{Algo:quad}) and for online logistic regression we apply our \pigd\  method (see Algorithm~\ref{alg:igd}). For both the problems, we compare our method against the offline optimal and the non-private online version and show the regret/accuracy trade-off with privacy. We show that our methods learn a meaningful hypothesis (a hyperplane for both the problems) while privacy is provably preserved due to our differential privacy guarantees.
\subsection{Online Linear Regression (OLR)}
Online linear regression (OLR) requires solving for $\x_t$ at each step so that squared error in the prediction is minimized. Specifically, we need to find $\x_t$ in an online fashion such that $\sum_t (y_t-g_t^T\x_t)^2 + \alpha\|\x_t\|^2$ is minimized. OLR is a practically important learning problem and have a variety of practical applications in domains such as finance \cite{KW}.

Now, note that we can directly apply our \pqftl\ approach (see Section~\ref{sec:Log}) to this problem to obtain differentially private iterates $\x_t$ with the regret guaranteed to be logarithmic. Here, we apply our \pqftl\ algorithm for the OLR problem on a synthetic dataset as well as a benchmark real-world dataset, namely ``Year Prediction'' \cite{uci}. For the synthetic dataset, we fix $\x^*$, generate data points $g_t$ of dimensionality $d=10$ by sampling a multivariate Gaussian distribution and obtain the target $y_t=g_t^T\x^*+\eta$, where $\eta$ is random Gaussian noise with standard variance $0.01$. We generate $T=100,000$ such input points and targets. The Year Prediction dataset is $90$-dimensional and contains around $500,000$ data points. For both the datasets, we set $\alpha=1$ and at each step apply our \pqftl\ algorithm. We measure the optimal offline solution using standard ridge regression and also compute regret obtained by the non-private \ftl\ algorithm.

Figure~\ref{fig:logreg} (a) and (b) shows the average regret(\ie regret normalized by the number of entries $T$) incurred by \pqftl\ for different privacy level $\epsilon$ on synthetic and Year Prediction data. Note that the y-axis is on the log-scale. Clearly, our \pqftl\ algorithm obtains low-regret even for reasonable high privacy levels ($\epsilon=0.01$). Furthermore, the regret gets closer to the regret obtained by the non-private algorithm as privacy requirements are made weaker.

\begin{figure}
\centering
\hspace*{-30pt}
\begin{tabular}{ccc}
\begin{minipage}{.33\textwidth}
\includegraphics[width=\columnwidth]{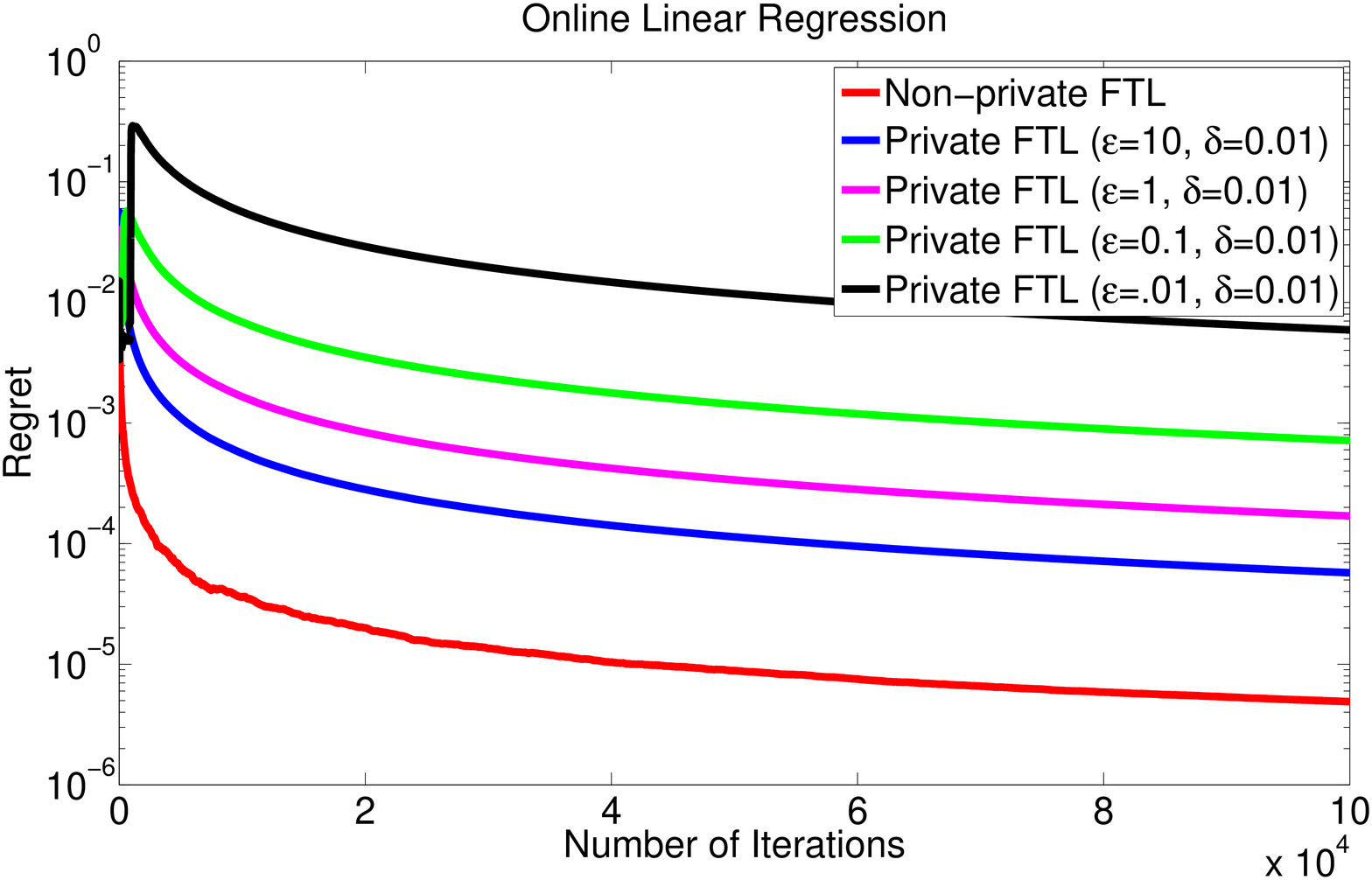}
\end{minipage}&
\hspace*{-10pt}
\begin{minipage}{.33\textwidth}
\includegraphics[width=\columnwidth]{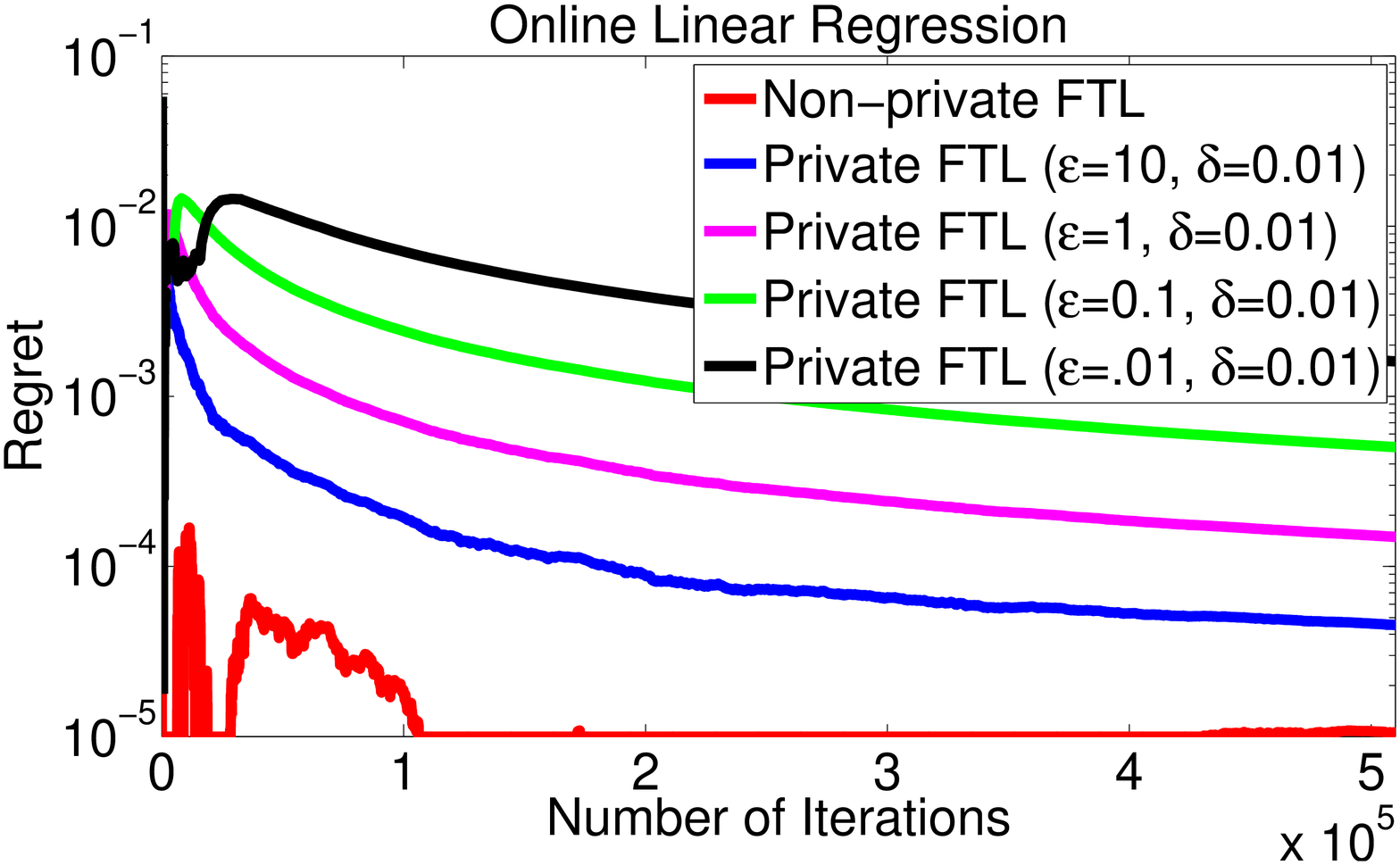}
\end{minipage}&
\hspace*{-5pt}
\begin{minipage}{.33\textwidth}
\centering
{\small
\begin{tabular}{|c|c|}\hline
Method&Accuracy\\ \hline
Non-private \igd&68.1\%\\ \hline
\pigd\ ($\epsilon=20, \delta=0.01$)&66.3\%\\ \hline
\pigd\ ($\epsilon=10, \delta=0.01$)&62.7\%\\ \hline
\pigd\ ($\epsilon=1, \delta=0.01$)&59.4\%\\ \hline
\pigd\ ($\epsilon=0.1, \delta=0.01$)&58.3\%\\ \hline
\end{tabular}}
\end{minipage}\\
(a)&(b)&(c)
\end{tabular}
\caption{Privacy vs Regret. {\bf (a), (b)}: Average regret (normalized by the number of iterations) incurred by \ftl\ and \pqftl\ with different levels of privacy $\epsilon$ on the synthetic $10$-dimensional data and Year Prediction Data. Note that the regret is plotted on a log-scale. \pqftl\ obtained regret of the order of $1e-2$ even with high privacy level of $\epsilon=0.01$. {\bf (c)}: Classification accuracy obtained by \igd\  and \pigd\ algorithm on Forest-covertype dataset. \pigd\ learns a meaningful classifier while providing privacy guarantees, especially for low privacy levels, i.e., high $\epsilon$.}
\label{fig:logreg}%
\end{figure}
\subsection{Online Logistic Regression}
Online logistic regression is a variant of the online linear regression where the cost function is logistic loss rather than squared error. Logistic regression is a popular method to learn classifiers, and has been shown to be successful for many practical problems. In this experiment, we apply our private \igd algorithm to the online logistic regression problem. To this end, we use the standard Forest cover-type dataset, a dataset with two classes, $54$-dimensional feature vectors and $581,012$ data points. We select $10\%$ data points for testing purpose and run our Private \igd\ algorithm on the remaining data points. Figure~\ref{fig:logreg} (c) shows classification accuracy (averaged over $10$ runs) obtained by \igd\ and our \pigd\ algorithm for different privacy levels. Clearly, our algorithm is able to learn a reasonable classifier from the dataset in a private manner. Note that our regret bound for \pigd\ method is $O(\sqrt{T})$, hence, it would require more data points to reduce regret to very small values, which is reflected by a drop in classification accuracy as $\epsilon$ decreases.


\section{Conclusions}\noindent
In this paper, we considered the problem of differentially private online learning. We used online convex programming (\ocp) as the underlying online learning model and described a method to achieve sub-linear regret for the \ocp\ problem, while maintaining $(\epsilon, \delta)$-differential privacy of the data (input functions). Specifically, given an arbitrary \ocp\ algorithm, we showed how to produce  a private version of the algorithm and proved the privacy guarantees by bounding the sensitivity of the algorithm's output at each step $t$. We considered two well known algorithms (\igd\ and \giga\,) in our framework and provided a private version of each of the algorithm. Both of our differentially private algorithms have $\tilde{O}(\sqrt{T})$ regret while guaranteeing $(\epsilon, \delta)$ differential privacy. We also showed that for the special case of quadratic cost functions, we can obtain logarithmic regret while providing differential privacy guarantees on the input data. Finally, we showed that our differentially private online learning approach can be used to obtain differentially private algorithms for a large class of convex offline learning problems as well. Our approach can handle a larger class of offline problems and obtains better error bounds than the existing methods \cite{CM}.   

While we can provide logarithmic regret for the special class of quadratic functions, our regret for general strongly convex functions is $\tilde O(\sqrt{T})$. An open question is if the $\tilde O(\sqrt{T})$ bound that we obtain is optimal or if it can be further improved. Similarly, another important open question is to develop privacy preserving techniques for the \ocp\ problem that have a poly-logarithmic dependence on the dimension of the data. Finally, another interesting research direction is an extension of our differentially private framework from the ``full information'' \ocp\ setting to the bandit setting.

\paragraph{Acknowledgments.} We would like to thank Ankan Saha, Adam Smith, Piyush Srivastava and Ambuj Tewari for various intriguing conversations during the course of this project.
\bibliographystyle{plain}
\bibliography{references}
\normalsize
\end{document}